\newcommand{\virag}[1]{{\color{blue}[Virag: #1]}}
\newcommand{\rj}[1]{{\color{red}[Ramesh: #1]}}
\newcommand{\kth}{^\text{th}}
\newcommand{\ceil}[1]{\left\lceil{#1}\right\rceil}
\newcommand{\floor}[1]{\left\lfloor{#1}\right\rfloor}
\newcommand{\Oracle}{\ensuremath{\mathsf{Oracle}}}
\renewcommand{\P}{\ensuremath{\mathbb{P}}}
\newcommand{\E}{\ensuremath{\mathbb{E}}}
\newtheorem{definition}{Definition}
\newtheorem{theorem}{Theorem}
\newtheorem{lemma}{Lemma}
\newtheorem{proposition}{Proposition}
\title{Bandit Learning with Positive Externalities}
\author{
  Virag Shah, 
  Jose Blanchet,
  Ramesh Johari\\
Stanford University
}
\begin{document}

\maketitle

\begin{abstract}
In many platforms, user arrivals exhibit a self-reinforcing behavior: future user arrivals are likely to have preferences similar to users who were satisfied in the past. In other words, arrivals exhibit {\em positive externalities}. We study multiarmed bandit (MAB) problems with positive externalities. We show that the self-reinforcing preferences may lead standard benchmark algorithms such as UCB to exhibit linear regret. We develop a new algorithm, Balanced Exploration (BE), which explores arms carefully to avoid suboptimal convergence of arrivals before sufficient evidence is gathered. We also introduce an adaptive variant of BE which successively eliminates suboptimal arms. We analyze their asymptotic regret, and establish optimality by showing that no algorithm can perform better.

\end{abstract}


\section{Introduction}
\label{sec:intro}

A number of different platforms use multiarmed bandit (MAB) algorithms today to optimize their service: e.g., search engines and information retrieval platforms; e-commerce platforms; and news sites.  Many such platforms exhibit a natural self-reinforcement in the arrival process of users: future arrivals may be biased towards users who expect to have positive experiences based on the past outcomes of the platform.  For example, if a news site generates articles that are liberal (resp., conservative), then it is most likely to attract additional users who are liberal (resp., conservative) \cite{adamic2005political}.  In this paper, we study the optimal design of MAB algorithms when user arrivals exhibit such positive self-reinforcement.

We consider a setting in which a platform faces many types of users that can arrive.  Each user type is distinguished by preferring a subset of the item types above all others.  The platform is not aware of either the type of the user, or the item-user payoffs.  Following the discussion above, arrivals exhibit {\em positive externalities} (also called positive network effects) among the users \cite{katz1994systems}: in particular, if one type of item generates positive rewards, users who prefer that type of item become more likely to arrive in the future.  


Our paper quantifies the consequences of positive externalities for bandit learning in a benchmark model where the platform is unable to observe the user's type on arrival.    In the model we consider, introduced in Section \ref{sec:problem}, there is a set of $m$ arms. A given arriving user prefers a subset of these arms over the others; in particular, all arms other than the preferred arms generate zero reward. A preferred arm $a$ generates a Bernoulli reward with mean $\mu_a$.  To capture positive externalities, the probability that a user preferring arm $a$ arrives at time $t$ is proportional to $(S_a(t-1) + \theta_a)^\alpha$, where $S_a(t-1)$ is the total reward observed from arm $a$ in the past and $\theta_a$ captures the initial conditions. The positive constant $\alpha$ captures the strength of the externality: when $\alpha$ is large the positive externality is strong.  

The platform aims to maximize cumulative reward up to time horizon $T$.  We evaluate our performance by measuring regret against an ``offline'' oracle that always chooses the arm $a^* = \arg \max_a \mu_a$.  Because of the positive externality, this choice causes the user population to shift entirely to users preferring arm $a^*$ over time; in particular, the oracle achieves asymptotically optimal performance to leading order in $T$.  We study the asymptotic scaling of cumulative regret against the oracle at $T$ as $T \to \infty$.  

At the heart of this learning problem is a central tradeoff.  On one hand, because of the positive externality, the platform operator is able to move the user population towards the profit maximizing population.  On the other hand, due to self-reinforcing preferences the impact of {\em mistakes} is amplified: if rewards are generated on suboptimal arms, the positive externality causes more users that prefer those arms to arrive in the future.  We are able to explicitly quantify the impact of this tradeoff in our model.

Our main results are as follows.

{\bf Lower bound}.  In Section \ref{sec:lower}, we provide an explicit lower bound on the best achievable regret  for each $\alpha$.  Strikingly, the optimal regret is structurally quite different than classical lower bounds for MAB problems; see Table \ref{tbl:performance}. Its development sheds light into the key differences between MABs with positive externalities and those without. 

{\bf Suboptimality of classical approaches}.  In Section \ref{sec:ucb}, we show that the UCB algorithm is not only suboptimal, but in fact has positive probability of {\em never} obtaining a reward on the best arm $a^*$---and thus obtains linear regret.  This is because UCB does not explore sufficiently to find the best arm. However, we show that just exploring more aggressively is also insufficient; a random-explore-then-commit policy which explores in an unstructured fashion remains suboptimal. This demonstrates the need of developing a new approach to exploration.

{\bf Optimal algorithm}.  In Section \ref{sec:waterfill}, we develop a new algorithmic approach towards optimizing the exploration-exploitation tradeoff.  Interestingly, this algorithm is cautious in the face of uncertainty to avoid making long-lasting mistakes. Our algorithm, Balanced Exploration (BE), keeps the user population ``balanced'' during the exploration phase; by doing so, it exploits an arm only when there is sufficient certainty regarding its optimality. Its adaptive variant, Balanced Exploration with Arm Elimination (BE-AE), intelligently eliminates suboptimal arms while balancing exploration among the remainder.  BE has the benefit of not depending on system parameters, while BE-AE uses such information (e.g., $\alpha$).  We establish their optimality by developing an upper-bound on their regret for each $\alpha$; this nearly matches the lower bound (for BE), and exactly matches the lower bound (for BE-AE).  

%
%

\begin{table*}
\begin{center}
\def\arraystretch{1.75}
\begin{tabular}{c||c|c|c|c|}
 & $\alpha = 0$ & $0< \alpha<1$ & $\alpha = 1$ & $\alpha >1$ \\ 
 \hhline{=#=|=|=|=|}
 Lower Bound	& $\Omega(\ln T)$ 	&  $\Omega(T^{1-\alpha} \ln^\alpha T)$	& $\Omega(\ln^2 T)$  & $\Omega(\ln^\alpha T)$ \\
\hline
UCB  		& $O(\ln T)$    	& $\Omega(T)$ 				& $\Omega(T)$	 &	$\Omega(T)$\\
\hline
\!\!Random-explore-then-commit		& $O(\ln T)$	& \!\! $\Omega\left(T^{1-\alpha} \ln ^{\frac{\alpha}{1-\alpha}} T\right)$ \!\! & \!\! $\Omega\left(T^{\frac{\mu_b}{\mu_b+\theta_{a^*}\mu_{a^*}}}\right)$ \!\! & $\Omega(T)$ \\
\hline
Balanced Exploration (BE)	& $\tilde{O}(\ln T)$	&  $\tilde{O}(T^{1-\alpha} \ln^{\alpha} T)$ 			& $\tilde{O}(\ln^2 T)$	&  $\tilde{O}(\ln^\alpha T)$\\
\hline
\!\! \!\!BE with Arm Elimination	(BE-AE) & $O(\ln T)$	&  $O(T^{1-\alpha} \ln^{\alpha} T)$ 			& $O(\ln^2 T)$	&  $O(\ln^\alpha T)$\\
\hline
\end{tabular}
\end{center}
\caption{Total regret under different settings. 
Here $a^* = \arg \max \mu_a$, and $b = \arg \max_{a\neq a^*} \mu_a$. For Random-explore-then-commit algorithm, we assume that the initial bias $\theta_a$ for each arm $a$ is a positive integer (cf.~Section \ref{sec:problem}). The notation $f(T) = \tilde{O}(g(T))$ implies there exists $k > 0$ such that $f(T) = O(g(T)\ln^k \!g(T))$.
\label{tbl:performance}
}
\end{table*}

Further, in Section~\ref{sec:simulations} we provide simulation results to obtain quantitative insights into the relative performance of different algorithms. We conclude the paper by summarizing the main qualitative insights obtained from our work.    

\section{Related work}

As noted above, our work incorporates positive externalities in user arrivals. Positive externalities are also referred to as positive network effects or positive network externalities.  (Note that the phrase ``network'' is often used here, even when the effects do not involve explicit network connections between the users.)  See \cite{katz1994systems}, as well as \cite{shy2011short, shapiro1998information} for background.  Positive externalities are extensively discussed in most standard textbooks on microeconomic theory; see, e.g., Chapter 11 of \cite{mas1995microeconomic}.

It is well accepted that online search and recommendation engines produce feedback loops that can lead to self-reinforcement of popular items \cite{anderson2004long, barabasi1999emergence, ratkiewicz2010characterizing, chakrabarti2005influence}.  Our model captures this phenomenon by employing a self-reinforcing arrival process, inspired by classical urn processes \cite{AtK68, Jan04}.

We note that the kind of self-reinforcing behavior observed in our model may be reminiscent of ``herding'' behavior in Bayesian social learning \cite{bikhchandani1992theory, SmS00, ADL11}.  In these models, arriving Bayesian rational users take actions based on their own private information, and the outcomes experienced by past users. 
The central question in that literature is the following: do individuals base their actions on their own private information, or do they follow the crowd? 
By contrast, in our model it is the platform which takes actions, without directly observing preferences of the users.

If the user preferences are known then a platform might choose to personalize its services to satisfy each user individually.  This is the theme of much recent work on {\em contextual bandits}; see, e.g., \cite{langford2008epoch, slivkins2011contextual, perchet2013multi} and \cite{Buc12} for a survey of early work. In such a model, it is important that either (1) enough observable covariates are available to group different users together as decisions are made; or (2) users are long-lived so that the platform has time to learn about them. 

In contrast to contextual bandits, in our model the users' types are not known, and they are short-lived (one interaction per user). Of course, the reality of many platforms is somewhere in between: some user information may be available, though imperfect.  We view our setting as a natural benchmark model for analysis of the impact of self-reinforcing arrivals.  Through this lens, our work suggests that there are significant consequences to learning when the user population itself can change over time, an insight that we expect to be robust across a wide range of settings.

\section{Preliminaries}
\label{sec:problem}

In this section we describe the key features of the model we study.  We first describe the model, including a precise description of the arrival process that captures positive externalities.  Next, we describe our objective: minimization of regret relative to the expected reward of a natural oracle policy.

\subsection{Model}

{\bf Arms and rewards.}  Let $A=\{1,...,m\}$ be the set of available arms. 
During each time $t\in \{1,2,...\}$ a new user arrives and an arm is ``pulled'' by the platform; we denote the arm pulled at time $t$ by $I_{t}$.  We view pulling an arm as presenting the corresponding option to the newly arrived user. Each arriving user prefers a subset of the arms, denoted by $J_{t}$.  We describe below how $J_t$ is determined.  

If arm $a$ is pulled at time $t$ and if the user at time $t$ prefers arm $a\in A$ (i.e., $a \in J_t$) then the reward obtained at time $t$ is an independent Bernoulli random variable with mean $\mu_{a}$. We assume $\mu_a > 0$ for all arms. If the user at time $t$ does not prefer the arm pulled then the reward obtained at time $t$ is zero.  We let $X_{t}$ denote the reward obtained at time $t$.

 For $t \geq 1$, let $T_a(t)$ represent the number of times arm $a$ is pulled up to and including time $t$, and let $S_a(t)$ represent the total reward accrued by pulling arm $a$ up to and including time $t \geq 1$.  Thus $T_a(t) = |\{ 1 \leq s \leq t : I_s = a\}|$, and $S_a(t) = |\{ 1 \leq s \leq t : I_s = a, X_s = 1\}|$.  We define $T_a(0) = S_a(0) = 0$.

{\bf Unique best arm}. We assume there exists a unique $a^* \in A$ such that:
\[ a^* = \arg \max \mu_a. \]
This assumption is standard and made for technical convenience; all our results continue to hold without it.  

{\bf Arrivals with positive externalities.}  We now define the arrival process $\{J_t\}_{t\ge 1}$ that determines users' preferences over arms; this arrival process is the novel feature of our model.  We assume there are fixed constants $\theta_{a}>0$ for $a\in A$ (independent of $T$), denoting the initial ``popularity'' of arm $a$.  

For $t \geq 0$, define: 
\[ N_a(t) = S_a(t) + \theta_a,\ \ a \in A. \]
Observe that by definition $N_a(0) = \theta_a$.  

In our arrival process, arms with higher values of $N_a(t)$ are more likely to be preferred.  Formally, we assume that the $t\kth$ user prefers arm $a$ (i.e., $a\in J_{t}$) with probability $\lambda_a(t)$ independently of other arms, where:
\[ \lambda_a(t) = \frac{f(N_a(t-1))}{\sum_{a' = 1}^m f(N_{a'}(t-1))}, \]
where $f(\cdot)$ is a positive, increasing function $f$. We refer to $f$ as the {\em externality function}. In our analysis we primarily focus on the parametric family $f(x) = x^\alpha$, where $\alpha \in (0,\infty)$.

Intuitively, the idea is that agents who prefer arm $a$ are more likely to arrive if arm $a$ has been successful in the past.  This is a positive externality: users who prefer arm $a$ are more likely to generate rewards when arm $a$ is pulled, and this will in turn increase the likelihood an arrival preferring arm $a$ comes in the future.  The parameter $\alpha$ controls the strength of this externality: the positive externality is stronger when $\alpha$ is larger.

If $f$ is linear ($\alpha = 1$), then we can interpret our model in terms of an urn process.
In this view, $\theta_a$ resembles the initial number of balls of color $a$ in the urn at time $t=1$ and $N_a(t)$ resembles the total number of balls of color $a$ added into the urn after $t$ draws. Thus, the probability the $t\kth$ draw is of color $a$ is proportional to $N_a(t)$. In contrast to the standard urn model, in our model we have additional control: namely, we can pull an arm, and thus govern the probability with which a new ball of the same color is added into the urn.

\subsection{The oracle and regret}\label{sec:oracle}

{\bf Maximizing expected reward.}  Throughout our presentation, we use $T$ to denote the time horizon over which performance is being optimized.  (The remainder of our paper characterizes upper and lower bounds on performance as the time horizon $T$ grows large.)  We let $\Gamma_T$ denote the total reward accrued up to time $T$:
\[
\Gamma _{T}=\sum_{t=1}^{T}X_{t}.
\]%
The goal of the platform is to choose a sequence $\{ I_t \}$ to maximize $\mathbb{E}[ \Gamma_T ]$.  As usual, $I_t$ must be a function only of the past history (i.e., prior to time $t$).

{\em {\bf The oracle policy.}}  As is usual in multiarmed bandit problems, we measure our performance against a benchmark policy that we refer to as the \Oracle{}.

\begin{definition}[\Oracle{}] 
The $\Oracle$ algorithm knows the optimal arm $a^*$, and pulls it at all times $t = 1, 2, \ldots$.
\end{definition}

Let $\Gamma_T^*$ denote the reward of the \Oracle{}.  Note that \Oracle{} may not be optimal for finite fixed $T$; in particular, unlike in the standard stochastic MAB problem, the expected cumulative reward $\mathbb E[\Gamma_T^*]$ is not $\mu_{a^*} T$, as several arrivals may not prefer the optimal arm. 

The next proposition provides tight bounds on $\mathbb E[\Gamma_T^*]$. For the proof, see the Appendix.

\begin{proposition}\label{prop:Oracle}
Suppose $\alpha>0$. Let $\theta^\alpha = \sum_{a\neq a^*} \theta_a^\alpha$.  The expected cumulative reward $\mathbb E[\Gamma_T^*]$ for the \Oracle{} satisfies:

1. $\displaystyle \mathbb E[\Gamma_T^*]  \le  \mu_{a^*}T - \mu_{a^*}\theta^\alpha\sum_{k=1}^T \frac{1}{(k + \theta_{a^*} - 1)^\alpha + \theta^\alpha}.$

2.  $\displaystyle \mathbb E[\Gamma_T^*] \ge  \mu_{a^*} T - \theta^\alpha \sum_{k=1}^{T} \frac{ 1}{(k+\theta_{a^*})^\alpha}-1 .$

In particular, we have:
\[  \mathbb E[\Gamma_T^*] = 
\begin{cases} 
 \mu_{a^*}T -  \Theta(T^{1-\alpha}), \quad &0<\alpha<1 \\
  \mu_{a^*}T -  \Theta(\ln T), \quad &\alpha = 1 \\
  \mu_{a^*}T - \Theta(1), \quad &\alpha > 1
\end{cases}
\] 
\end{proposition}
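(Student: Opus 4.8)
The plan is to reduce the whole statement to one exact identity for $\E[\Gamma_T^*]$ and then squeeze its single error term from both sides. Since the \Oracle{} only ever pulls $a^*$, every other arm is never pulled, so $S_a(t)=0$ and $N_a(t)=\theta_a$ for all $a\neq a^*$ and all $t$; hence $\sum_{a'\neq a^*} f(N_{a'}(t-1))=\sum_{a'\neq a^*}\theta_{a'}^\alpha=\theta^\alpha$, while $N_{a^*}(t-1)=S_{a^*}(t-1)+\theta_{a^*}=\Gamma_{t-1}+\theta_{a^*}$. At time $t$ the reward is $1$ exactly when the arriving user prefers $a^*$ and the Bernoulli draw succeeds, so, writing $\mathcal F_{t-1}$ for the history through time $t-1$,
\[
\E[X_t\mid\mathcal F_{t-1}]=\mu_{a^*}\,\frac{N_{a^*}(t-1)^\alpha}{N_{a^*}(t-1)^\alpha+\theta^\alpha}=\mu_{a^*}-\mu_{a^*}\,\frac{\theta^\alpha}{N_{a^*}(t-1)^\alpha+\theta^\alpha}.
\]
Summing over $t\le T$ and taking expectations gives the master identity
\[
\E[\Gamma_T^*]=\mu_{a^*}T-\mu_{a^*}\theta^\alpha\sum_{t=1}^T\E\!\left[\frac{1}{N_{a^*}(t-1)^\alpha+\theta^\alpha}\right].
\]

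For part~1 (the upper bound) I would use only that at most one unit of reward is collected per step, so $\Gamma_{t-1}\le t-1$ and hence $N_{a^*}(t-1)\le t-1+\theta_{a^*}$ deterministically. Each summand above is then at least $\big((t-1+\theta_{a^*})^\alpha+\theta^\alpha\big)^{-1}$; substituting into the master identity and reindexing $k=t$ gives part~1 directly.

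Part~2 (the lower bound) is the real work: $x\mapsto x^{-\alpha}$ is convex, so there is no lossless deterministic lower bound on $N_{a^*}(t-1)$, and crude bounds lose an unbounded amount over the early steps. I would handle this by a renewal-type regrouping. Run the \Oracle{} forever and let $G_n$ be the number of pulls strictly after the $(n-1)$st reward, up to and including the $n$th. While $S_{a^*}=n-1$, each pull returns a reward independently with the \emph{deterministic} probability $q_n:=\mu_{a^*}(n-1+\theta_{a^*})^\alpha/\big((n-1+\theta_{a^*})^\alpha+\theta^\alpha\big)$; hence $G_1,G_2,\dots$ are independent with $G_n\sim\mathrm{Geometric}(q_n)$ and $\E[G_n]=\big((n-1+\theta_{a^*})^\alpha+\theta^\alpha\big)\big/\big(\mu_{a^*}(n-1+\theta_{a^*})^\alpha\big)$. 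Now group the sum in the master identity according to the value of $\Gamma_{t-1}$: the number of times $t\le T$ with $\Gamma_{t-1}=j$ is at most $G_{j+1}$, and on those times the summand equals $\big((j+\theta_{a^*})^\alpha+\theta^\alpha\big)^{-1}$; since $\Gamma_T^*\le T$,
\[
\sum_{t=1}^T\frac{1}{N_{a^*}(t-1)^\alpha+\theta^\alpha}\;\le\;\sum_{j=0}^{T}\frac{G_{j+1}}{(j+\theta_{a^*})^\alpha+\theta^\alpha}.
\]
Taking expectations collapses the right side to $\sum_{j=0}^T\big(\mu_{a^*}(j+\theta_{a^*})^\alpha\big)^{-1}$; feeding this back into the master identity yields $\E[\Gamma_T^*]\ge\mu_{a^*}T-\theta^\alpha\sum_{j=0}^T(j+\theta_{a^*})^{-\alpha}$, and peeling off the $j=0$ term, a positive constant, gives the lower bound of part~2. (Equivalently one can apply optional stopping to $\sigma_n=G_1+\cdots+G_n$ at the bounded stopping time $\Gamma_T^*+1\le T+1$ and invoke concavity of $n\mapsto\E[\sigma_n]$.)

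Finally, the three-regime display follows by comparing the two sums with integrals: both $\sum_{k=1}^T(k+\theta_{a^*})^{-\alpha}$ and $\sum_{k=1}^T\big((k-1+\theta_{a^*})^\alpha+\theta^\alpha\big)^{-1}$ are $\Theta(T^{1-\alpha})$ for $0<\alpha<1$, $\Theta(\ln T)$ for $\alpha=1$, and $\Theta(1)$ for $\alpha>1$ (in the last case each series converges and is bounded below by its first, strictly positive, term). Since $\mu_{a^*}$ and $\theta^\alpha$ are fixed positive constants, parts~1 and~2 trap $\E[\Gamma_T^*]$ between $\mu_{a^*}T$ minus two quantities of the same order, which is exactly the claim. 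The one delicate point throughout is the lower bound's control of $\E[N_{a^*}(t-1)^{-\alpha}]$; the regrouping by reward count is what makes it go through while losing only an $O(1)$ additive term.
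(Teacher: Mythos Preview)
Your proof is correct and follows essentially the same route as the paper: identical master identity and deterministic bound for part~1, and for part~2 the same geometric inter-success variables $G_n$ (the paper's $\tau_k$), with the paper applying Wald's lemma to $T\le\E\big[\sum_{k\le\Gamma_T^*+1}\tau_k\big]$ while you equivalently regroup the error sum by reward level---a repackaging you yourself note in your parenthetical. One tiny caveat: your peeled $j=0$ term is $\theta^\alpha/\theta_{a^*}^\alpha$, not $1$, so your regrouping gives part~2 with that constant in place of $1$; the exact $-1$ comes from the Wald route, though of course either constant suffices for the asymptotic display.
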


The discontinuity at $\alpha = 1$ in the asymptotic bound above arrises since $\sum_{k=1}^{T} \frac{1}{k^\alpha}$ diverges for each $\alpha \le 1$ but converges for $\alpha >1$. Further, the divergence is logarithmic for $\alpha =1$ but polynomial for each $\alpha <1$. 

Note that in all cases, the reward asymptotically is of order $\mu_{a^*} T$.  This is the best achievable performance to leading order in $T$, showing that the oracle is asymptotically optimal.

{\bf Our goal: Regret minimization.} Given any policy, define the {\em regret} against the \Oracle{} as $R_T$:
\begin{equation}
\label{eq:regret}
 R_T = \Gamma_T^* - \Gamma_T.
\end{equation}

Our goal in the remainder of the paper is to minimize the expected regret $\E[R_T]$. In particular, we focus on characterizing regret performance asymptotically to leading order in $T$ (both lower bounds and achievable performance), for different values of the externality exponent $\alpha$.


\section{Lower bounds}
\label{sec:lower}

In this section, we develop lower bounds on the achievable regret of any feasible policy.  As we will find, these lower bounds are quite distinct from the usual $O(\ln T)$ lower bound (see \cite{LaR85,Buc12}) on regret for the standard stochastic MAB problem.  This fundamentally different structure arises because of the positive externalities in the arrival process.

To understand our construction of the lower bound, consider the case where the externality function is linear ($\alpha = 1$); the other cases follow similar logic.
Our basic idea is that in order to determine the best arm, any optimal algorithm will need to explore all arms at least $\ln T$ times.  However, this means that after $t' = \Theta(\ln T)$ time, the total reward on any suboptimal arms will be of order $\sum_{b \neq a^*} N_b(t') = \Theta(\ln T)$.  Because of the effect of the positive externality, any algorithm will then need to ``recover'' from having accumulated rewards on these suboptimal arms.  We show that even if the optimal arm $a^*$ is pulled from time $t'$ onwards, a regret $\Omega(\ln^2 T)$ is incurred simply because arrivals who do not prefer arm $a^*$ continue to arrive in sufficient numbers. 

The next theorem provides regret lower bounds for all values of $\alpha$.  The proof can be found in the Appendix.

\begin{theorem}\label{thm:LowerBound}
\begin{enumerate}
\item For $\alpha < 1$, there exists no policy with $ \mathbb E[R_T] = o (T^{1-\alpha}\ln^{\alpha} T)$ on all sets of Bernoulli reward distributions.
\item For $\alpha = 1$, there exists no policy with $ \mathbb E[R_T] = o (\ln^2 T)$ on all sets of Bernoulli reward distributions.
\item For $\alpha > 1$, there exists no policy with $ \mathbb E[R_T] = o (\ln^\alpha T)$ on all sets of Bernoulli reward distributions.
\end{enumerate}
\end{theorem}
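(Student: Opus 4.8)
The plan is to combine an information‑theoretic lower bound on how much any good policy must explore with a \emph{contamination} estimate quantifying how the positive externality turns that exploration into irreducible regret. Fix a reference instance $\nu$ with unique best arm $a^*$ and a chosen suboptimal arm $b$ with small mean $\mu_b$, and let $\nu'$ agree with $\nu$ except that $\mu_b$ is raised to some $\mu_b'\in(\mu_{a^*},1)$, so that $b$ is optimal under $\nu'$. I will argue that (i) any policy with sub‑polynomial expected regret on all instances must, under $\nu$, collect $\Omega(\ln T)$ successes on $b$, and (ii) from the moment $S_b$ reaches $\kappa\ln T$, the accumulated ``wrong'' popularity cannot be undone cheaply: even pulling $a^*$ forever after incurs additional regret $\sum_{t}\Theta\big((\ln T)^\alpha (t+\Theta_0)^{-\alpha}\big)$, where $\Theta_0=\sum_a\theta_a$. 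The three regimes of $\alpha$ are exactly the three regimes of this last sum.

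For (i) I would run the standard change‑of‑measure argument, watching which observations separate $\nu$ from $\nu'$: along any trajectory the arrival indicators $\mathbbm 1[a\in J_t]$ are deterministic functions of the realized reward counts, and the rewards on arms $\ne b$ together with rewards on $b$ at times the user does not prefer $b$ have the same conditional law under $\nu$ and $\nu'$. Hence the likelihood ratio factors over the \emph{informative pulls} of $b$ (times $t$ with $I_t=b$ and $b\in J_t$), and the Kaufmann--Cappé--Garivier divergence inequality yields, for any event $\mathcal E$ in the time‑$T$ filtration, $\mathrm{kl}(\P_\nu(\mathcal E),\P_{\nu'}(\mathcal E))\le \E_\nu[M_b(T)]\,\mathrm{kl}(\mu_b,\mu_b')$, with $M_b(T)$ the number of informative pulls of $b$. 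Choosing $\mathcal E=\{S_b(T)\le\kappa\ln T\}$: under $\nu$ a policy with $o(T)$ regret cannot over‑pull the bad arm, so $\P_\nu(\mathcal E)\to1$; under $\nu'$, if $\P_{\nu'}(\mathcal E)$ were not inverse‑polynomially small then with non‑negligible probability the policy fails to switch to the now‑optimal arm $b$ and suffers $\Omega(T)$ regret on $\nu'$, impossible. The inequality then forces $\E_\nu[M_b(T)]=\Omega(\ln T)$; a concentration step upgrades this to: with probability bounded away from $0$, $S_b$ hits $\kappa\ln T$ at some random time $\tau\le T$.

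For (ii) I would write $\E[R_T]=\E[\Gamma^*_T]-\E[\Gamma_T]$, lower‑bound the first term with Proposition~\ref{prop:Oracle}, and upper‑bound $\E[\Gamma_T]\le\mu_{a^*}\sum_{t\le T}\E[\max_a\lambda_a(t)]$ (the expected per‑step reward is $\mu_{I_t}\lambda_{I_t}(t)\le\mu_{a^*}\max_a\lambda_a(t)$). On the event that $\tau$ is not too large and $a^*$ leads the population from $\tau$ onwards, one has for $t>\tau$ that $1-\max_a\lambda_a(t)\ge\lambda_b(t)=(S_b(t-1)+\theta_b)^\alpha/\sum_a(S_a(t-1)+\theta_a)^\alpha\ge c\,(\ln T)^\alpha(t+\Theta_0)^{-\alpha}$, using monotonicity of $S_b$, the crossing time $\tau$, and $\sum_a(S_a(t-1)+\theta_a)^\alpha\le C_m(t+\Theta_0)^\alpha$ (since $\sum_a S_a(t-1)\le t-1$). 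Summing over $t\in(\tau,T]$ and subtracting the oracle deficit of Proposition~\ref{prop:Oracle} gives $\E[R_T]\ge c'(\ln T)^\alpha\sum_{t=\tau+1}^{T}(t+\Theta_0)^{-\alpha}-O(\mathrm{err}(T))$. The tail sum is $\Theta(T^{1-\alpha})$ for $\alpha<1$ (governed by its upper limit), $\Theta(\ln(T/\tau))$ for $\alpha=1$, and is governed by its lower limit for $\alpha>1$; paired with the $(\ln T)^\alpha$ prefactor this reproduces $\Omega(T^{1-\alpha}\ln^\alpha T)$, $\Omega(\ln^2 T)$ (the $\ln T$ from the sum times the $\ln T$ from the prefactor), and $\Omega(\ln^\alpha T)$ respectively.

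The step I expect to be the main obstacle is welding (i) to (ii) with the correct timing, i.e.\ controlling \emph{when} $S_b$ crosses $\kappa\ln T$ and how concentrated the population already is on $a^*$ at that moment. For $\alpha<1$ this is mild: any $o(T)$‑regret policy must make $a^*$ the population leader, and drive $\tau$, to within $o(T)$ steps, since otherwise exploring $b$ against a population already essentially concentrated on $a^*$ forces $\lambda_b=O(\ln T/T)$ and hence $\Omega(T)$ wasted pulls on $b$; the same ``late exploration is expensive'' observation gives the $\tau\le T^{1-\delta}$ needed for $\alpha=1$. For $\alpha>1$ the strong externality makes the recovery sum dominated by $t$ near $\tau$, so the lower bound is only as good as the control one has at the instant the necessary exploration of $b$ completes --- in particular one needs the optimal arm to still be nearly untouched then --- and establishing that the relevant configuration occurs with probability bounded below, against an adaptive policy that would prefer to build up $S_{a^*}$ first, is the technically heaviest part; I would handle it with a stopping‑time refinement of the two‑instance comparison (tracking the first informative pull of $b$ jointly with the first success on $a^*$) rather than a one‑shot union bound.
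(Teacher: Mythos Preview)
Your two-stage plan (change of measure to force $\Omega(\ln T)$ exploration of $b$, then a contamination estimate turning that exploration into unavoidable regret) is exactly the paper's plan, and your identification of the ``welding'' step as the hard part is accurate. The paper, however, resolves that step by a device that avoids stopping times and the ``who leads'' issue altogether, and it is worth seeing how.

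Rather than tracking the random crossing time $\tau$ of $S_b$, the paper fixes a deterministic checkpoint $t'=T^\gamma$ and partitions on the \emph{state} at $t'$: $E_1=\{N_{a^*}(t')\le c_1\ln T\}$, $E_2=\{N_{a^*}(t')>c_1\ln T,\,N_b(t')>c_1\ln T\}$, $E_3=\{N_{a^*}(t')>c_1\ln T,\,N_b(t')\le c_1\ln T\}$. On $E_1$ the policy has too few successes on $a^*$ by time $T^\gamma$, so its regret on $\nu$ is already $\Omega(T^\gamma)$; hence $\P_\nu(E_1)=o(1)$ for any candidate good policy. On $E_3$ the change of measure is applied directly to the event $E_3$ (not to a count of informative pulls): because $S_b(t')\le c_1\ln T$ on $E_3$, the log-likelihood ratio up to $t'$ is deterministically bounded by $c_1\ln T\cdot\ln(\mu_b'/\mu_b)$, and since $E_3$ forces $\Omega(T^\gamma)$ regret under $\nu'$ one reads off $\P_\nu(E_3)=o(1)$ for a suitable $c_1$. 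This replaces both your Kaufmann--Capp\'e--Garivier step and the ``concentration upgrade'' from $\E[M_b(T)]=\Omega(\ln T)$ to a high-probability crossing, the latter of which is not obviously justified as you have it (an expectation lower bound does not by itself yield a constant-probability lower bound on $S_b$).

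That leaves $\P(E_2)\to1$, and on $E_2$ the contamination bound is proved as a separate lemma about an oracle that is \emph{told} $a^*$ at time $t'$: any such policy earns at most $\mu_{a^*}(T-t')-\E[N_b(t')]\,\Omega(\ln T)$ after $t'$. The argument there does not go through your $\max_a\lambda_a(t)$ bound (which, as you implicitly note, is vacuous when $b$ leads) but instead observes that every pull of $a^*$ loses $\mu_{a^*}\lambda_b(t)\ge \mu_{a^*}N_b(t')/(t+\theta_b+t')$ in expectation, while every pull of any other arm loses at least $\mu_{a^*}-\mu_b$; summing over whichever set of times the policy devotes to $a^*$ gives the bound without ever conditioning on who leads. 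The $\alpha\ne1$ cases are handled by the same partition with the sum $\sum_t t^{-\alpha}$ replacing $\ln T$ in the contamination lemma, and with $\gamma$ (and $c_1$) adjusted so that the likelihood-ratio bound on $E_3$ still yields $\P_\nu(E_3)=o(1)$; no separate $\alpha>1$ stopping-time refinement is needed.
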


The remainder of the paper is devoted to studying regret performance of classic algorithms (such as UCB), and developing an algorithm that achieves the lower bounds above.

%
%
%
%




\section{Suboptimality of classical approaches} 
\label{sec:ucb}

We devote this section to developing structural insight into the model, by characterizing the performance of two classical approaches for the standard stochastic MAB problem: the UCB algorithm \cite{ACF02,Buc12} and a random-explore-then-commit algorithm.

\subsection{UCB}

We first show that the standard upper confidence bound (UCB) algorithm, which does not account for the positive externality, performs poorly.  (Recall that in the standard MAB setting, UCB achieves the asymptotically optimal $O(\ln T)$ regret bound \cite{LaR85,Buc12}.)

Formally, the UCB algorithm is defined as follows. 

\begin{definition}[UCB($\gamma$)]
Fix $\gamma > 0$. For each $a\in A$, let $\hat\mu_{a}(0) = 0$ and for each $t>0$ let $\hat\mu_{a}(t) \coloneqq \frac{S_{a}(t-1)}{T_{a}(t-1)}$, under convention that $\hat\mu_{a}(t) = 0$ if $T_{a}(t-1)=0$. For each $a\in A$ let $u_a(0) = 0$ and for each $t>0$ let 
$$u_a(t) \coloneqq  \hat\mu_{a}(t)+  \sqrt{\frac{\gamma \ln t}{T_a(t-1)}}. $$
Choose:
$$I_t \in \arg\max_{a\in A} u_a(t),$$
with ties broken uniformly at random. 
\end{definition}

Under our model, consider an event where $a^* \not\in J_t$ but $I_t = a^*$: i.e., $a^*$ is pulled but the arriving user did not prefer arm $a^*$.   Under UCB, such events are self-reinforcing, in that they not only lower the upper confidence bound for arm $a^*$, resulting in fewer future pulls of arm $a^*$, but they also reduce the preference of  {\em future users} towards arm $a^*$. 

It is perhaps not surprising, then, that UCB performs poorly.  However, the impact of this self-reinforcement under UCB is so severe that we obtain a striking result: there is a strictly positive probability that the optimal arm $a^*$ will {\em never} see a positive reward, as shown by the following theorem.  An immediate consequence of this result is that the regret of UCB is linear in the horizon length.   The proof can be found in the Appendix. 

\begin{theorem} \label{thm:UCB}
Suppose $\gamma > 0$. Suppose that $f(x)$ is $\Omega\left(\ln^{1+\epsilon}(x)\right)$ for some $\epsilon >0$. For UCB($\gamma$) algorithm, there exists an $\epsilon' > 0$ such that
$$\mathbb P\left(\lim_{T \to \infty} {S_{a^*}(T) =0} \right) \ge \epsilon'.$$ 
In particular, the regret of UCB($\gamma$) is $O(T)$.
\end{theorem}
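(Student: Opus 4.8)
The plan is to show that with positive probability, the UCB algorithm pulls $a^*$ only finitely many times without ever receiving a reward, and the remaining pulls concentrate on a single suboptimal arm whose population share converges to $1$. The key observation is that an unrewarded pull of any arm $a$ (i.e. $a \notin J_t$ or $a \in J_t$ but the Bernoulli draw is $0$) drives the empirical mean $\hat\mu_a(t)$ toward zero and increases $T_a(t-1)$, thereby shrinking the confidence bonus $\sqrt{\gamma \ln t / T_a(t-1)}$. The danger for $a^*$ is that if its first few pulls all happen to go unrewarded — either because the arriving user doesn't prefer $a^*$, which has nonvanishing probability early on, or because the Bernoulli reward is $0$ — then $\hat\mu_{a^*}$ is pinned at $0$ and its index $u_{a^*}(t) = \sqrt{\gamma \ln t / T_{a^*}(t-1)}$ decays. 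Meanwhile if some other arm $b$ accumulates rewards, then $N_b(t)$ grows, so $\lambda_b(t) \to$ a large value; crucially, because $f(x) = \Omega(\ln^{1+\epsilon} x)$ grows faster than $\ln x$, once $b$ has collected on the order of $\ln t$ rewards its arrival probability dominates and $S_b(t)$ grows at least linearly while $N_{a^*}(t) = \theta_{a^*}$ stays bounded, so $\lambda_{a^*}(t) \to 0$ fast enough to be summable.

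**First I would** make this precise by conditioning on a ``bad'' initial event: on a finite time window $[1, t_0]$, arrange that (i) every pull of $a^*$ returns reward $0$, (ii) some fixed arm $b \neq a^*$ is pulled enough times and returns enough rewards that $N_b(t_0)$ is large relative to $\theta_{a^*}$ and relative to the UCB bonus scale. Each of these has strictly positive probability since the reward distributions are nondegenerate Bernoulli and UCB's tie-breaking and exploration behavior in the first steps is explicit. **Then** on this event I would run an induction: I claim that for all $t > t_0$, arm $a^*$ is never selected (so $S_{a^*}$ stays $0$) provided the confidence index of $a^*$ stays below that of $b$. Since $\hat\mu_{a^*} = 0$ always (it's never rewarded) we need $\sqrt{\gamma \ln t / T_{a^*}(t-1)} < \hat\mu_b(t) + \sqrt{\gamma \ln t / T_b(t-1)}$; as $\hat\mu_b$ is bounded below away from $0$ with high probability (a Chernoff bound on $b$'s rewards conditioned on $b$ being pulled and preferred), and $T_{a^*}(t-1) = T_{a^*}(t_0)$ is frozen, this holds for all large $t$. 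The induction closes because if $a^*$ is never pulled again, all subsequent pulls go to suboptimal arms, $N_{a^*}$ remains $\theta_{a^*}$, and one shows $\sum_t \lambda_{a^*}(t) < \infty$ using the superlogarithmic growth of $f$ together with a lower bound on $\max_{a \neq a^*} N_a(t)$ (which grows at least like $\Omega(t)$ since all rewards accrue to suboptimal arms and at least one of them is pulled a linear number of times) — hence by Borel–Cantelli, with positive probability no future arrival even prefers $a^*$, so even the pulls we can't control are harmless.

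**The main obstacle** I anticipate is handling the coupling between UCB's arm-selection and the self-reinforcing arrival process simultaneously: I need to ensure that conditioned on the bad initial event, the \emph{joint} process still has positive probability of staying bad forever, which requires that the confidence-bound comparison and the arrival-probability summability hold on a common high-probability event. The subtle point is that the set of suboptimal arms among which UCB distributes its pulls could itself shift over time, so ``$\max_{a\neq a^*} N_a(t) = \Omega(t)$'' needs a uniform argument — e.g. a pigeonhole plus the fact that whichever arm is pulled most collects rewards at rate bounded below. A clean way to sidestep this is to only demand the weaker conclusion $S_{a^*}(T) \to 0$ (equivalently, $a^*$ rewarded finitely often), which the theorem states, and for that it suffices to control $a^*$'s index relative to the \emph{single} arm $b$ we seeded — I don't need the full population to converge to $b$. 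I would also need to verify the $\Omega(\ln^{1+\epsilon} x)$ hypothesis is used exactly where summability of $\sum \lambda_{a^*}(t)$ is invoked: with $f$ only logarithmic the tail sum could diverge, so the strengthened growth is essential. The final sentence, that regret is $O(T)$ — more precisely $\Theta(T)$, i.e. linear — follows immediately: on this positive-probability event every rewarded pull is on a suboptimal arm, so $\Gamma_T \le (\max_{a\neq a^*}\mu_a) T + o(T)$ while $\Gamma_T^* = \mu_{a^*} T - o(T)$ by Proposition~\ref{prop:Oracle}, giving $\E[R_T] \ge \epsilon'(\mu_{a^*} - \max_{a \neq a^*}\mu_a) T - o(T) = \Omega(T)$.
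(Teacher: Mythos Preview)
Your induction step has a concrete error. You claim that once $T_{a^*}(t-1)$ is frozen at $T_{a^*}(t_0)$ and $\hat\mu_{a^*}=0$, the inequality $\sqrt{\gamma \ln t / T_{a^*}(t_0)} < \hat\mu_b(t) + \sqrt{\gamma \ln t / T_b(t-1)}$ ``holds for all large $t$.'' It is the other way around: the left side is $\Theta(\sqrt{\ln t})$ and diverges, while the right side is bounded (since $\hat\mu_b \le 1$ and $T_b(t-1)\to\infty$). So UCB \emph{must} pull $a^*$ again, and in fact infinitely often. Your proposed fallback --- Borel--Cantelli on $\sum_t \lambda_{a^*}(t)<\infty$ --- also fails under the stated hypothesis: $N_b(t)\le t$ always, so at best $\lambda_{a^*}(t)=\Theta(1/f(N_b(t)))$ is only $O(1/\ln^{1+\epsilon} t)$ when $f(x)=\ln^{1+\epsilon}x$, and $\sum_t 1/\ln^{1+\epsilon} t$ diverges. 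So neither branch of your argument closes.

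The paper's proof works \emph{with} the infinite sequence of pulls of $a^*$ rather than trying to avoid them. Let $\tau_k$ be the time of the $k$-th pull of $a^*$, let $Q_k$ be the event that the first $k$ pulls of $a^*$ all hit users who do not prefer $a^*$, and let $E_k$ be the event that $\hat\mu_b(\tau_k-1)$ is bounded below. On $Q_{k-1}\cap E_{k-1}$, the index comparison forces $\tau_k \ge \exp(c(k-1))$: between $\tau_{k-1}$ and $\tau_k$ only $b$ is pulled, so $N_b(\tau_k-1)\ge e^{c'(k-1)}$ and hence $\lambda_{a^*}(\tau_k)\le C/f(e^{c'(k-1)})$. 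The $\Omega(\ln^{1+\epsilon}x)$ hypothesis enters exactly here: it gives $\lambda_{a^*}(\tau_k)=O(k^{-(1+\epsilon)})$, so $\prod_k \mathbb P(Q_k\mid Q_{k-1},E_{k-1},E_k)\ge \prod_k(1-O(k^{-(1+\epsilon)}))>0$. Combined with a Chernoff bound giving $\mathbb P(E_k\mid Q_{k-1},E_{k-1})\ge 1-e^{-\delta' k}$, the infinite product $\liminf_k \mathbb P(Q_k\cap E_k)$ stays positive. The summability you need is over the \emph{pull index} $k$, not over time $t$, and it is the exponential spacing of the $\tau_k$ (forced by UCB's bonus growing like $\sqrt{\ln t}$ against a frozen $T_{a^*}$) that converts the weak growth assumption on $f$ into a summable tail.
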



\subsection{Random-explore-then-commit}

UCB fails because it does not explore sufficiently.  In this section, we show that more aggressive unstructured exploration is not sufficient to achieve optimal regret.  In particular, we consider a policy that chooses arms independently and uniformly at random for some period of time, and then commits to the empirical best arm for the rest of the time.  

\begin{definition}[REC($\tau$)]\label{def:random_tau}
Fix $\tau\in \mathbb Z_+$. For each $1\le t\le \tau$, choose $I_t$ uniformly at random from set $A$. Let $\hat a^* \in \arg\max_a S_a(\tau)$, with tie broken at random. For $\tau <t < T$, $I_t = a^*$. 
\end{definition} 

The following theorem provides performance bounds for the REC($\tau$) policy for our model. The proof of this result takes advantage of multitype continuous-time Markov branching processes \cite{AtK68,Jan04}; it is given in the Appendix.

\begin{proposition}\label{prop:RandomRegret}
Suppose that $\theta_a$ for each $a \in A$ is a positive integer. Let $b = \arg \max_{a\neq{a^*}} \mu_a$. The following statements hold for the REC($\tau$) policy for any $\tau$:

1. If $0<\alpha<1$ then we have 
$\mathbb E[R_T] = \Omega(T^{1-\alpha} \ln^{\frac{\alpha}{1-\alpha}} T).$ 

2. If $\alpha=1$ then we have 
$\mathbb E[R_T] = \Omega\left(T^{\frac{\mu_b}{\mu_b+\theta_{a^*}\mu_{a^*}}}\right).$
%

3. If $\alpha > 1$ then we have 
$\mathbb E[R_T] = \Omega(T) .$
\end{proposition}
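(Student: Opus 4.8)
Write $\E[R_T]$ from below, \emph{for every fixed $\tau$}, as the maximum of two competing quantities, and then minimize over $\tau$; since the resulting bound is $\tau$-free it holds for REC$(\tau)$ with any $\tau$. The first quantity is the cost of \emph{misidentification}: on $\{\hat a^*\neq a^*\}$ the commit phase pulls some $a'\neq a^*$ forever, the arrival population drifts toward $a'$ (so $\lambda_{a'}(t)\to 1$) and we collect at most $\approx\mu_{a'}\le\mu_b<\mu_{a^*}$ per step, whereas $\E[\Gamma_T^*]=\mu_{a^*}T-o(T)$ by Proposition~\ref{prop:Oracle}; comparing (and recalling $\Gamma_T^*$ is an independent process) gives $\E[R_T\mid\hat a^*\neq a^*]=\Omega(T)$ whenever $\tau\le\delta T$, hence $\E[R_T]=\Omega\!\big(T\,\P(\hat a^*\neq a^*)\big)$, while if $\tau>\delta T$ the exploration cost below is already $\Omega(T)$. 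The second quantity is the exploration-and-residual-mass cost: during $[1,\tau]$ arms are pulled uniformly so $\E[X_t\mid\mathcal F_{t-1}]=\tfrac1m\sum_a\mu_a\lambda_a(t)\le\mu_{a^*}/m$ (using $\sum_a\lambda_a(t)=1$), whence $\E[\Gamma_\tau]\le\mu_{a^*}\tau/m$ and $\E[R_T]=\Omega(\tau)$; moreover for $\alpha<1$ the generalized P\'olya urn driving the exploration phase has no monopoly, so every suboptimal arm accumulates $S_a(\tau)=\Omega(\tau)$ rewards with probability $\Omega(1)$, and these stay frozen throughout the commit phase, forcing $\lambda_b(t)\gtrsim\tau^\alpha/t^\alpha$ for $t\gtrsim\tau$, so summing from $\Theta(\tau)$ to $T$ shows $\Omega(\tau^\alpha T^{1-\alpha})$ arrivals are wasted, i.e.\ $\E[R_T]=\Omega(\tau^\alpha T^{1-\alpha})$ once $\tau$ is large enough that $\P(\hat a^*=a^*)\ge\tfrac12$ (the small-$\tau$ regime being covered by the misidentification bound, where $\P(\hat a^*\neq a^*)$ is a positive constant).

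The core step is a $\tau$-dependent lower bound on $\P(\hat a^*\neq a^*)$. Since $\hat a^*\in\arg\max_a S_a(\tau)$, the event $\{S_{a^*}(\tau)=0\}\cap\{\Gamma_\tau\ge1\}$ — the second part of high probability — forces $\hat a^*\neq a^*$, so it suffices to bound $\P(S_{a^*}(\tau)=0)$. For this I use the Athreya--Karlin-type continuous-time embedding~\cite{AtK68,Jan04}: the reward-bearing steps of the exploration urn have the law of a generalized P\'olya urn with color weights $\mu_a n^\alpha$ (the $1/m$ thinning cancels in the color law), which embeds into \emph{independent} pure-birth processes, one per color $a$, with birth rate $\mu_a n^\alpha$ at $n$ balls (Yule processes when $\alpha=1$). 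Let $\tilde\sigma_\tau$ be the embedding time at which the frozen-$a^*$ auxiliary process (the urn on $A\setminus\{a^*\}$, $N_{a^*}\equiv\theta_{a^*}$, whose growth is governed by $b=\arg\max_{a\neq a^*}\mu_a$) completes $\tau$ discrete steps. A coupling shows $\{S_{a^*}(\tau)=0\}$ equals $\{$the color-$a^*$ birth process has no birth by $\tilde\sigma_\tau\}$; since $\tilde\sigma_\tau$ depends only on the non-$a^*$ processes (and the thinning randomness), hence is independent of the color-$a^*$ process, and the latter's first birth time is $\mathrm{Exp}(\mu_{a^*}\theta_{a^*}^\alpha)$, we get the exact identity $\P(S_{a^*}(\tau)=0)=\E[\exp(-\mu_{a^*}\theta_{a^*}^\alpha\,\tilde\sigma_\tau)]$. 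The asymptotics of $\tilde\sigma_\tau$ depend on the dominant birth rate $\mu_b n^\alpha$: when $\alpha=1$, $\tilde\sigma_\tau=\tfrac1{\mu_b}\ln\tau+\Delta_\tau$ with $\Delta_\tau$ converging a.s.\ to a finite limit (standard Yule-martingale asymptotics for the $b$-dominated process); when $\alpha<1$ the processes grow polynomially and $\tilde\sigma_\tau=\Theta(\tau^{1-\alpha})$ a.s.; when $\alpha>1$ the dominant process explodes in finite time, so $\tilde\sigma_\tau$ is bounded a.s. Plugging in (and using Fatou's lemma to extract the constant) yields $\P(S_{a^*}(\tau)=0)=\Omega(\tau^{-\theta_{a^*}\mu_{a^*}/\mu_b})$ for $\alpha=1$, $\P(S_{a^*}(\tau)=0)\ge\exp(-\Theta(\tau^{1-\alpha}))$ for $\alpha<1$, and $\P(S_{a^*}(\tau)=0)\ge c>0$ uniformly in $\tau$ for $\alpha>1$ — the last being the same mechanism (Borel--Cantelli on the monopolizing event) as in Theorem~\ref{thm:UCB}.

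Combining the two lower bounds and minimizing over $\tau\ge1$: for $\alpha=1$, $\E[R_T]=\Omega\!\big(\max(\tau,\ T\tau^{-\theta_{a^*}\mu_{a^*}/\mu_b})\big)$, whose minimum over $\tau$ is $\Theta\!\big(T^{\mu_b/(\mu_b+\theta_{a^*}\mu_{a^*})}\big)$ (crossing point $\tau\asymp T^{\mu_b/(\mu_b+\theta_{a^*}\mu_{a^*})}$); for $\alpha<1$, $\E[R_T]=\Omega\!\big(\max(\tau^\alpha T^{1-\alpha},\ T\exp(-\Theta(\tau^{1-\alpha})))\big)$, whose minimum is $\Theta\!\big(T^{1-\alpha}\ln^{\alpha/(1-\alpha)}T\big)$ (crossing point $\tau^{1-\alpha}\asymp\ln T$); for $\alpha>1$, the uniform bound $\P(\hat a^*\neq a^*)\ge c$ already gives $\E[R_T]=\Omega(T)$. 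I expect the main obstacle to be the $\alpha=1$ estimate: making the embedding rigorous requires carefully relating three time scales — discrete arrivals, cumulative reward (reward-bearing steps are a state-dependent thinning of arrivals), and the continuous embedding clock — and extracting the sharp exponent in $\P(S_{a^*}(\tau)=0)=\Theta(\tau^{-\theta_{a^*}\mu_{a^*}/\mu_b})$ needs a.s.\ convergence of the Yule martingale together with a Fatou / uniform-integrability argument. The $\alpha\neq1$ cases are more robust but rely on the correct qualitative input about the urn (no monopoly when $\alpha<1$; a monopoly, hence only finitely many losing balls, when $\alpha>1$).
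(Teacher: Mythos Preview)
Your approach mirrors the paper's: both split on $\tau$, lower-bound $\P(\hat a^*\neq a^*)\ge\P(S_{a^*}(\tau)=0)$ via a continuous-time branching/birth embedding (yielding the same $\tau^{-\theta_{a^*}\mu_{a^*}/\mu_b}$, $e^{-\Theta(\tau^{1-\alpha})}$, and $\Omega(1)$ estimates for $\alpha=1$, $\alpha<1$, $\alpha>1$ respectively), and for $\alpha<1$ use no-monopoly of the sublinear urn to get the residual $\Omega(\tau^\alpha T^{1-\alpha})$ cost in the large-$\tau$ case. One caveat: your claimed equality $\{S_{a^*}(\tau)=0\}=\{\text{no }a^*\text{ birth by }\tilde\sigma_\tau\}$ is only a containment $\supseteq$ (even with no $a^*$-birth, the original process has non-reward jumps at $a^*$ that advance the discrete clock faster than in your frozen auxiliary), but that is precisely the direction you need, and the paper likewise argues via event inclusion rather than identity.
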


Thus, for $\alpha \leq 1$, the REC($\tau$) policy may improve on the performance of UCB by delivering sublinear regret.  Nevertheless this regret scaling remains suboptimal for each $\alpha$.  In the next section, we demonstrate that carefully structured exploration can deliver an optimal regret scaling (matching the lower bounds in Theorem \ref{thm:LowerBound}).


\section{Optimal algorithms}
\label{sec:waterfill}

In this section, we present an algorithm that achieves the lower bounds presented in Theorem \ref{thm:LowerBound}.  The main idea of our algorithm is to structure exploration by {\em balancing} exploration across arms; this ensures that the algorithm is not left to ``correct'' a potentially insurmountable imbalance in population once the optimal arm has been identified.  

We first present a baseline algorithm called {\em Balanced Exploration} (BE) that nearly achieves the lower bound, but illustrates the key benefit of balancing; this algorithm has the advantage that it needs no knowledge of system parameters.  We then use a natural modification of this algorithm called {\em Balanced Exploration with Arm Elimination} (BE-AE) that achieves the lower bound in Theorem \ref{thm:LowerBound}, though it uses some knowledge of system parameters in doing so.

\subsection{Balanced exploration}

The BE policy is cautious during the exploration phase in the following sense: it pulls the arm with least accrued reward, to give it further opportunity to ramp up its score just in case its poor performance was bad luck. At the end of 
the exploration phase, it exploits the empirical best arm for the rest of the horizon.  

To define BE, we require an auxiliary sequence $w_k$, $k = 1, 2, \ldots$, used to set the exploration time.  The only requirement on this sequence is that $w_k \to \infty$ as $k \to \infty$; e.g., $w_k$ could be $\ln\ln k$ for each postive integer $k$.  The BE algorithm is defined as follows.  

\begin{definition}
{\bf Balanced-Exploration (BE) Algorithm:} Given $T$, let $n = w_T \ln T$.
\begin{enumerate}
\item {\em Exploration phase:} Explore until the (random) time $\tau_n = \min(t:  S_b(t) \ge n\ \forall\ b \in A)\wedge T$, i.e., explore until each arm has incurred at least $n$ rewards, while if any arm accrues less than $n$ rewards by time $T$, then $\tau_n = T$.  Formally, for $1\le t \le \tau_n$, pull arm $x(t) \in \arg\inf_{a \in A} S_a(t-1)$, with ties broken at random.
\item {\em Exploitation phase:} Let $\hat a^* \in \arg\inf_{a \in A} T_a(\tau_n)$, with tie broken at random. For $ \tau_n +1 \le t \le T$, pull the arm $\hat a^*$.
\end{enumerate}
\end{definition}

Note that this algorithm only uses prior knowledge of the time horizon $T$, but no other system parameters; in particular, we do not need information on the strength of the positive externality, captured by $\alpha$.  Our main result is the following.   The proof can be found in the Appendix.

\begin{theorem}
\label{thm:BEregret}
Suppose $w_k$, $k = 0, 1, 2, \ldots$, is any sequence such that $w_k \to \infty$ as $k \to \infty$.  Then the regret of the BE algorithm is as follows:

1. If $0< \alpha < 1$ then $\mathbb E[R_T] = O(w_T^\alpha T^{1-\alpha}\ln^\alpha T)$.

2. If $\alpha = 1$ then $\mathbb E[R_T]  = O(w_T \ln^2 T)$.

3. If $\alpha >1$ then $\mathbb E[R_T]  = O(w_T^\alpha \ln^\alpha T)$.
\end{theorem}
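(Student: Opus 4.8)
The plan is to decompose the regret into two contributions: (i) regret accrued during the exploration phase $[1,\tau_n]$, and (ii) regret accrued during the exploitation phase $(\tau_n,T]$ conditioned on whether the estimated best arm $\hat a^*$ equals $a^*$. First I would control the length of the exploration phase. Since the balanced rule pulls the arm with least accrued reward, the reward counts $S_a(t)$ stay within $1$ of each other for all arms until the phase ends; hence $\tau_n$ is essentially the first time the \emph{slowest-to-reward} arm reaches $n$ successes. Because each pull of a preferred arm yields a Bernoulli$(\mu_a)$ reward, and because balancing keeps the arrival probabilities $\lambda_a(t)$ roughly uniform (all $N_a$ within $O(n)$ of each other, so $\lambda_a(t) = \Theta(1/m)$ throughout the phase), a standard concentration argument (e.g.\ a coupling with i.i.d.\ geometric waiting times, or a Freedman/Azuma bound on the associated martingale) should give $\tau_n = \Theta(n) = \Theta(w_T \ln T)$ with high probability, and also $\mathbb{E}[\tau_n] = O(w_T\ln T)$. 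The regret over the exploration phase is then at most $\mathbb{E}[\tau_n]$ plus the per-step oracle advantage, which is $O(1)$ per step by Proposition~\ref{prop:Oracle}; this already gives an $O(w_T \ln T)$ contribution, negligible compared to the claimed bounds except possibly in the $\alpha>1$ case where it contributes the $O(w_T^\alpha \ln^\alpha T)$ term once we are more careful (for $\alpha>1$ this exploration cost is in fact the dominant term).

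Next I would handle the exploitation phase on the ``good'' event $\mathcal{G} = \{\hat a^* = a^*\}$. On $\mathcal{G}$, from time $\tau_n$ onward the algorithm mimics the oracle, so the only regret comes from the residual population of users preferring suboptimal arms. At time $\tau_n$ we have $N_b(\tau_n) = \Theta(n)$ for each $b\neq a^*$ while $N_{a^*}(\tau_n) = \Theta(n)$ as well, and $t=\Theta(n)$. From here, running arm $a^*$, the analysis mirrors the Oracle computation in Proposition~\ref{prop:Oracle} but started from initial ``popularities'' of size $\Theta(n)$ rather than $\Theta(1)$: the per-step probability of a non-$a^*$-preferring arrival at time $t$ is $\Theta\!\big(n/((t)^\alpha + n)\big)$ for $\alpha=1$-type bookkeeping, and more generally $\Theta\!\big(\sum_{b\ne a^*}N_b^\alpha / (N_{a^*}^\alpha + \sum_b N_b^\alpha)\big)$. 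Summing the per-step regret $\mu_{a^*}\,\lambda_{\neq a^*}(t)$ from $t\approx n$ to $T$ reproduces exactly the three regimes: for $0<\alpha<1$ the sum $\sum_{t=n}^T \Theta(n^\alpha/t^\alpha) = \Theta(n^\alpha T^{1-\alpha}) = \Theta(w_T^\alpha \ln^\alpha T \cdot T^{1-\alpha})$; for $\alpha=1$ it is $\Theta(n\ln(T/n)) = \Theta(w_T\ln^2 T)$; for $\alpha>1$ the sum converges and is $\Theta(n^\alpha \cdot n^{1-\alpha}) = \Theta(n) = \Theta(w_T\ln T)$, dominated by the exploration cost $O(w_T^\alpha\ln^\alpha T)$. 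This is where most of the bookkeeping lies, and I would lean on the integral/series estimates already used to prove Proposition~\ref{prop:Oracle}, just with shifted initial conditions.

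The main obstacle is bounding the contribution of the ``bad'' event $\mathcal{G}^c = \{\hat a^* \neq a^*\}$, which on its face costs $\Theta(T)$ regret and must therefore be shown to have probability $o(1)$ fast enough — specifically $\mathbb{P}(\mathcal{G}^c) \cdot T$ must be $o$ of the claimed bounds, so we need $\mathbb{P}(\hat a^*\neq a^*) = O(w_T^\alpha T^{-\alpha}\ln^\alpha T)$ for $\alpha<1$, $O(w_T T^{-1}\ln^2 T)$ for $\alpha=1$, and $O(w_T^\alpha T^{-1}\ln^\alpha T)$ for $\alpha>1$. Here is where the choice $n = w_T\ln T$ with $w_T\to\infty$ does the work: $\hat a^*$ is chosen as the arm pulled \emph{fewest} times by $\tau_n$, equivalently (by the balancing of $S_a$) the arm with the \emph{highest empirical reward rate}, and each arm has accumulated $\ge n$ rewards, hence been pulled $\Theta(n/\mu_a)$ times. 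A Chernoff/Hoeffding bound on the empirical means of $a^*$ versus each $b\neq a^*$, each based on $\Omega(n)$ samples, gives $\mathbb{P}(\hat a^* \neq a^*) \le m\exp(-c\,n) = m\exp(-c\,w_T\ln T) = m\,T^{-c\,w_T}$, and since $w_T\to\infty$ this decays faster than any polynomial in $T$, in particular faster than $T^{-1}$ (and faster than $T^{-\alpha}$), absorbing the $\Theta(T)$ penalty with room to spare. I would need to be careful that the balancing rule indeed guarantees each arm gets $\Omega(n)$ \emph{pulls on preferred users} — that is, that during exploration the user population does not collapse onto one arm before all arms reach $n$ rewards — but the same ``$\lambda_a(t) = \Theta(1/m)$ throughout exploration'' fact used above handles this. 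Finally, I would combine: $\mathbb{E}[R_T] \le \mathbb{E}[R_T\mathbf{1}_{\mathcal G}] + T\,\mathbb{P}(\mathcal G^c) + O(\mathbb E[\tau_n])$, and the three terms are respectively the dominant regime-dependent bound, a super-polynomially small term, and the $O(w_T\ln T)$ exploration cost, yielding the stated $O(w_T^\alpha T^{1-\alpha}\ln^\alpha T)$, $O(w_T\ln^2 T)$, and $O(w_T^\alpha \ln^\alpha T)$ bounds for the three ranges of $\alpha$.
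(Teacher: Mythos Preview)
Your proposal is correct and follows essentially the same route as the paper: the same good/bad-event decomposition $\mathbb{E}[R_T] \le \mathbb{E}[R_T\mid \hat a^*=a^*] + T\,\mathbb{P}(\hat a^*\neq a^*)$, the same observation that balancing keeps $\lambda_a(t)=\Theta(1/m)$ during exploration so $\mathbb{E}[\tau_n]=O(n)$, the same reuse of the Oracle estimates (Proposition~\ref{prop:Oracle}) with initial popularities inflated to $\Theta(n)$ for the exploitation-phase regret, and the same exponential-in-$n$ concentration bound $\mathbb{P}(\hat a^*\neq a^*)\le e^{-c n}$ which, with $n=w_T\ln T$ and $w_T\to\infty$, kills the $T\cdot\mathbb{P}(\mathcal{G}^c)$ term. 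Your regime-by-regime bookkeeping of $\sum_t n^\alpha/t^\alpha$ matches the paper's as well; the only cosmetic difference is that the paper packages the argument as an intermediate proposition for general $n$ (giving regret $O(n^\alpha T^{1-\alpha}+Te^{-cn})$, etc.) and then specializes, whereas you go directly to $n=w_T\ln T$.
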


In particular, observe that if $w_k = \ln \ln k$, then we conclude $E[R_T] = \tilde{O}(T^{1-\alpha} \ln^\alpha T)$ (if $0 < \alpha < 1$); $E[R_T] = \tilde{O}(\ln^2 T)$ (if $\alpha = 1$); and $E[R_T] = \tilde{O}(\ln^\alpha T)$ (if $\alpha > 1$).  Recall that the notation $f(T) = \tilde{O}(g(T))$ implies there exists $k > 0$ such that $f(T) = O(g(T)\ln^k \!g(T))$.

\subsection{Balanced exploration with arm elimination}

The BE algorithm very nearly achieves the lower bounds in Theorem \ref{thm:LowerBound}.  The additional ``inflation'' (captured by the additional factor $w_T$) arises in order to ensure the algorithm achieves low regret despite not having information on system parameters.  

We now present an algorithm which eliminates the inflation in regret by intelligently eliminating arms that have poor performance during the exploration phase by using upper and lower confidence bounds.  
The algorithm assumes the knowledge of $T$, $m$, $\alpha$, and $\theta_a$ for each $a$ to the platform (though we discuss the assumption on the knowledge of $\theta_a$ further below).
With these informational assumptions, $\lambda_a(t)$ for each $t$ can be computed by the platform. Below, $\hat{\mu}_a(t)$ is an unbiased estimate of $\mu_a$ given observations till time $t$, while $u_a(t)$ and $l_a(t)$ are its upper and lower confidence bounds.

\begin{definition}
{\bf  Balanced Exploration with Arm Elimination (BE-AE) Algorithm:} Given $T$, $m$, and $\alpha$, as well as $\theta_a$ for each $a \in A$, for each time $t$ and each arm $a$ define:
$$\hat{\mu}_a(t) = (T_a(t))^{-1} \sum_{k=1}^{t} \frac{X_k}{\lambda_a(k)} \mathbbm{1}(I_k = a).$$
Further, let $c = \min_{a,b \in A} \frac{\theta_a}{m(1+ \theta_b)}$. Define $u_a(t) = \hat{\mu}_a(t) +  5\sqrt{ \frac{ \ln T}{cT_a(t)} },$ and  $l_a(t) = \hat{\mu}_a(t) -  5\sqrt{ \frac{ \ln T}{cT_a(t)} }.$

Let $A(t)$ be the set of active arms at time $t$. At time $t=1$ all arms are active, i.e., $A(1) =A$. At each time $t$ pull arm 
$$I_t \in \arg\inf_{a \in A(t)} S_a(t-1),$$
 with ties broken lexicographically.
Eliminate arm $a$ from the active set if there exists an active arm $b \in A(t)$ such that $u_a(t) < l_b(t)$. 
\end{definition}

 The following theorem shows that the BE-AE algorithm achieves optimal regret, i.e., it meets the lower bounds in Theorem \ref{thm:LowerBound}. The proof can be found in the Appendix.

\begin{theorem}\label{thm:BEAEregret}
For fixed $m$ and $\alpha$, the regret under the BE-AE algorithm satisfies the following: 

1. If $0< \alpha < 1$ then $\mathbb E[R_T] = O(T^{1-\alpha}\ln^\alpha T).$

2. If $\alpha = 1$ then $\mathbb E[R_T] = O(\ln^2 T).$

3. If $\alpha >1$ then $\mathbb E[R_T] = O(\ln^\alpha T).$
\end{theorem}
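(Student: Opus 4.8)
The plan is to decompose the regret into two contributions---one from the exploration phase and one from the exploitation phase---and to show that, thanks to arm elimination, both are controlled at the optimal rate for each regime of $\alpha$. The first step is to establish a ``good event'' on which all the confidence bounds hold simultaneously: using the importance-weighted estimator $\hat\mu_a(t)$ and the fact that $\lambda_a(k)\ge c$ uniformly (which follows from the definition of $c$ together with $S_a\ge 0$ and $f(x)=x^\alpha$ increasing, so that $\lambda_a(k)=f(N_a(k-1))/\sum_{a'}f(N_{a'}(k-1)) \ge \theta_a^\alpha/(m(S_{\max}+\theta_{\max})^\alpha)$; one argues the relevant ratio is bounded below by $c$ while arms are still active and accruing few rewards), a Hoeffding/Azuma bound on the martingale $\sum_k (X_k/\lambda_a(k)-\mu_a)\mathbbm{1}(I_k=a)$ gives $\P(|\hat\mu_a(t)-\mu_a| > 5\sqrt{\ln T/(cT_a(t))}) \le T^{-\Theta(1)}$ for all $a,t$ by a union bound. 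On this good event, $a^*$ is never eliminated, and any arm $b$ with $\mu_b<\mu_{a^*}$ is eliminated once $T_b(t) = O(\ln T/(c\Delta_b^2))$ where $\Delta_b = \mu_{a^*}-\mu_b$; in particular after at most $O(\ln T)$ pulls of each suboptimal arm, the active set is $\{a^*\}$.

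The second step is to bound the length of the exploration phase $\tau$ (the first time the active set collapses to a singleton, which is then pulled forever) and the rewards accrued on suboptimal arms by then. Because the algorithm always pulls the active arm with least accrued reward, the reward counts $S_a(t)$ across active arms stay balanced to within $1$; combined with the elimination bound $T_b \le O(\ln T)$ and $\mu_b>0$, every suboptimal arm has $S_b(\tau) = O(\ln T)$ and hence $\sum_{b\ne a^*} N_b(\tau) = O(\ln T)$, while $\tau = O(\ln T)$ as well. The exploration-phase regret is then $O(\ln T)$ in all regimes, which is negligible against the claimed bounds. The third and main step is the exploitation-phase analysis: from time $\tau$ onward $a^*$ is pulled every period, so the only regret comes from arrivals who do not prefer $a^*$. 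Writing $M(t) = \sum_{b\ne a^*} N_b(t)$, this quantity is non-increasing after $\tau$ (no rewards are added to suboptimal arms), and the per-step regret at time $t$ is $\mu_{a^*}\lambda_{b\text{-type}}(t) \le \mu_{a^*} M(t)^{}/(N_{a^*}(t)^\alpha + M(t))$ in the $\alpha$-parametrized form (matching the oracle bound in Proposition \ref{prop:Oracle}); since $N_{a^*}(t)$ grows linearly (roughly $\mu_{a^*}t$) while $M(t)=O(\ln T)$ stays fixed, summing $\sum_{t=\tau}^T M(t)/((\mu_{a^*}t)^\alpha + M(t))$ yields $O(M(\tau)\cdot T^{1-\alpha})$ for $\alpha<1$, $O(M(\tau)\ln T)$ for $\alpha=1$, and $O(M(\tau))$ for $\alpha>1$. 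Substituting $M(\tau) = O(\ln T)$ gives exactly $O(T^{1-\alpha}\ln^\alpha T)$, $O(\ln^2 T)$, and $O(\ln^\alpha T)$ respectively. One must also compare $\Gamma_T$ to $\Gamma_T^*$ rather than to $\mu_{a^*}T$, but Proposition \ref{prop:Oracle} shows the gap between $\mathbb E[\Gamma_T^*]$ and $\mu_{a^*}T$ is of the same lower order and thus absorbed.

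I expect the main obstacle to be making the informal ``$M(t)=O(\ln T)$ throughout exploitation'' argument fully rigorous in probability: the bound on $S_b(\tau)$ relies on the good event for the confidence bounds \emph{and} on the balancedness of the exploration rule, and one must handle the bad-event contribution (probability $T^{-\Theta(1)}$, but potential regret $O(T)$, so the exponent in the confidence width must be chosen large enough that $T^{-\Theta(1)}\cdot T = o(1)$). A secondary technical point is controlling the stochastic fluctuations of $N_{a^*}(t)$ in the exploitation phase---$S_{a^*}(t)$ is a sum of Bernoulli$(\mu_{a^*})$ variables over the steps where $a^*\in J_t$, so one needs a lower-tail bound ensuring $N_{a^*}(t) = \Omega(t)$ with high probability uniformly in $t\ge \tau$, after which the deterministic summation estimates above apply. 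Everything else---the Hoeffding bound, the geometric-series / integral comparisons for the three regimes, and the reduction to Proposition \ref{prop:Oracle}---is routine once these two points are in place.
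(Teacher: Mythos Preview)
Your approach mirrors the paper's almost exactly: the paper also establishes $\lambda_a(t)\ge c$ for active arms via the balancedness property $|S_a(t)-S_b(t)|\le 1$, uses martingale concentration (Freedman's inequality rather than Azuma, though Azuma would work with a larger constant in the confidence width) to show the bounds $u_a,l_a$ are valid with probability $1-O(T^{-1})$, deduces that $a^*$ is never eliminated while every suboptimal arm is eliminated after $O(\ln T)$ pulls, and then bounds the exploitation-phase regret by the same summation as in Proposition~\ref{prop:Oracle}.

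One slip to fix: in your exploitation-phase sum, the numerator must be $\sum_{b\ne a^*} N_b(\tau)^\alpha$, not $M(\tau)=\sum_{b\ne a^*}N_b(\tau)$. As written, substituting $M(\tau)=O(\ln T)$ into $O(M(\tau)\,T^{1-\alpha})$ yields $O(T^{1-\alpha}\ln T)$, which for $\alpha<1$ is strictly weaker than the claimed $O(T^{1-\alpha}\ln^\alpha T)$ and so would not prove the theorem. The correct computation tracks $N_b(\tau)^\alpha=O(\ln^\alpha T)$ and then the sum $\sum_t (\ln^\alpha T)/t^\alpha$ gives the right exponent in each regime. Separately, your anticipated ``secondary technical point'' about a pathwise lower bound on $N_{a^*}(t)$ is not needed: the paper works in expectation throughout, reusing the Wald-type lower bound from Proposition~\ref{prop:Oracle} (with $\theta^\alpha$ replaced by $\sum_{b\ne a^*}N_b(\tau)^\alpha$), which sidesteps any pathwise control of $N_{a^*}(t)$.
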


As noted above, our algorithm requires some knowledge of system parameters.  We briefly describe an approach that we conjecture delivers the same performance as BE-AE, but without knowledge of $\theta_a$ for $a \in A$.  Given a small $\epsilon>0$, first run the exploration phase of the BE algorithm for $n= \epsilon \ln T$ time without removing any arm.  For $t$ subsequent to the end of this exploration phase, i.e., once $\epsilon \ln T$ samples are obtained for each arm, we have $N_a(t) = \epsilon \ln T + \theta_a$. Thus, the effect of $\theta_a$ on $\lambda_a(t)$ becomes negligible, and one can approximate $\lambda_a(t)$ by letting $N_b(t) = S_b(t)$ for each arm $b$.  We then continue with the BE-AE algorithm as defined above (after completion of the exploration phase).  We conjecture the regret performance of this algorithm will match BE-AE as defined above.  Proving this result, and more generally removing dependence on $T$, $m$, and $\alpha$, remain interesting open directions.


\section{Simulations}\label{sec:simulations}

Below, we summarize our simulation setup and then describe our main findings.

\begin{figure}
\begin{subfigure}{.5\textwidth}
  \centering
  \includegraphics[width=\linewidth]{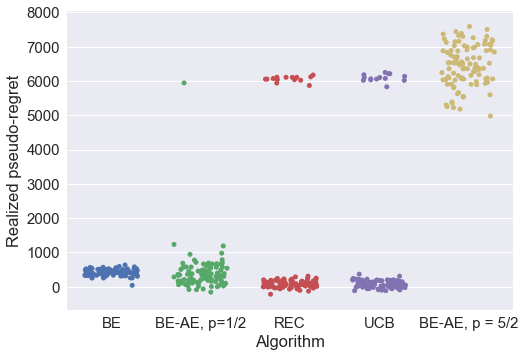}
  \caption{Realized psuedo-regret for $T=3\times 10^4$.}
  \label{fig:sfig1}
\end{subfigure}%
\begin{subfigure}{.5\textwidth}
  \centering
 \includegraphics[width=\linewidth]{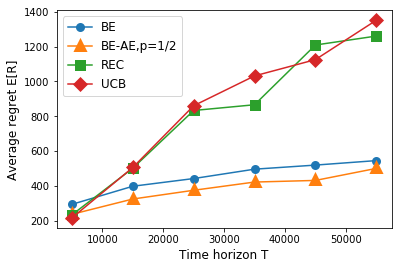}
  \caption{Expected regret as a function of time horizon $T$}
  \label{fig:sfig3}
\end{subfigure}
 \vspace{-1mm}
\caption{Performance comparison of algorithms in different parameter
regimes. All simulations have $m = 2$ arms, externality strength $\alpha =1$, arm reward parameters $\protect%
\mu_1 = 0.5$ and $\protect\mu_2 = 0.3$, and initial arm bias $\protect\theta%
_1 = \protect\theta_2 = 1$.}
\end{figure}

\textbf{Simulation setup.} We simulate our model with $m=2$ arms, with externality strength $\alpha =1$, arm
reward parameters $\mu _{1}=0.5$ and $\mu _{2}=0.3$, and initial biases $%
\theta _{1}=\theta _{2}=1$. For Fig.~\ref{fig:sfig1}, we
simulate each algorithm one hundred times for each set of parameters. We
plot \emph{pseudo-regret} realization from each simulation, i.e., $E[\Gamma
_{T}^{\ast }]-\Gamma _{T}$, where $E[\Gamma _{T}^{\ast }]$ is the expected
reward for the Oracle, computed via Monte Carlo simulation, and $\Gamma _{T}$
is the total reward achieved by the algorithm. Thus, lower pseudo-regret realization implies better performance. For Fig.~\ref{fig:sfig3}, each point is
obtained by simulating the corresponding algorithm one thousand times. The
time horizon $T$ is as mentioned in the
figures.

\emph{Parameters for each algorithm.} We simulate UCB($\gamma $) with $%
\gamma =3$. For Random-explore-then-commit, we set the exploration time as $%
\sqrt{T}$ (empirically, this performs significantly better than $\ln T$).
For BE, we set $w_{T}=\beta \ln \ln T$ with $\beta =2$ (see Definition 4).
For BE-AE, cf.~Definition 5, we recall that the upper and lower confidence
bounds are set as $u_{a}(t)=\hat{\mu}_{a}(t)+p\sqrt{\frac{\ln T}{T_{a}(t)}},$
and $l_{a}(t)=\hat{\mu}_{a}(t)-p\sqrt{\frac{\ln T}{T_{a}(t)}}$ for $%
p=5c^{-1/2}$ where $c=\min_{a,b\in A}\frac{\theta _{a}}{m(1+\theta _{b})}$.
This choice of $p$ was set in the paper for technical reasons, but unfortunately
this choice is suboptimal for finite $T$. The choice of $p=1/2$ achieves
significantly better performance for this experimental setup. The
performance is sensitive to small changes in $p$, as the plots illustrate
when choosing $p=5/2$. In contrast, in our experiments, we found that the
performance of BE is relatively robust to the choice of $\beta $.

\textbf{Main findings.} The following are our main findings from the above simulations.

First, even for $\alpha =1$, REC appears to
perform as poorly as UCB. Recall that in Section~\ref{sec:ucb} we show theoretically
that the regret is linear for UCB for each $\alpha$, and for REC for $\alpha >1$. For $\alpha =1$, we are only able to show that REC exhibits polynomial regret.

Second, for finite $T$, the performance of the (asymptotically optimal) BE-AE
algorithm is quite sensitive to the choice of algorithm parameters, and thus may
perform poorly in certain regimes. By contrast, the (nearly asymptotically
optimal) BE algorithm appears to exhibit more robust performance.

\section{Discussion and conclusions}\label{sec:discussion}

It is common that platforms make online decisions under uncertainty, and that these decisions impact future user arrivals. However, most MAB models in the past have decoupled the evolution of arrivals from the learning process. Our model, though stylized by design, provides several non-standard yet interesting insights which we believe are relevant to many platforms.  In particular:
\begin{enumerate}
\item In the presence of self-reinforcing preferences, there is a cost to being optimistic in the face of uncertainty, as mistakes are amplified.
\item It is possible to mitigate the impact of transients arising from positive externalities by structuring the exploration procedure carefully.
\item Once enough evidence is obtained regarding optimality of a strategy, one may even use the externalities to one's advantage by purposefully shifting the arrivals to a profit-maximizing population.
\end{enumerate}

Of course real-world scenarios are complex and involve other types of externalities which may reverse some of these gains. For example, the presence of negative externalities may preclude the ability to have ``all'' arrivals prefer the chosen option.  Alternatively, arrivals may have ``limited memory'', so that  future arrivals might eventually forget the effect of the externality. Overall, we believe that this is an interesting yet under-explored space of research, and that positive externalities of the kind we study may play a pivotal role in the effectiveness of learning algorithms.

\section{Acknowledgements}
This work was supported in part by National Science Foundation Grants CNS-1544548 and CNS-1343253.  Any opinions,
findings, and conclusions or recommendations expressed in this material are those of the authors and do not necessarily reflect the views of the National Science Foundation.

\bibliography{mybib.bib}
\bibliographystyle{plain}

\newpage


\section{Appendix}

\subsection{Proof of Proposition~\ref{prop:Oracle}}

We first show Part 1 of the result. Recall $\lambda_a(t)$ is the probability that the arrival at time $t$ prefers arm $a$, given the past. 
Thus, from the definition, we have
\begin{align*}
\mathbb E[\Gamma_T^*] & =  \mu_{a^*} \mathbb E\left[\sum_{t=1}^T\lambda_{a^*}(t)\right] \\
& = \mu_{a^*}\mathbb E\left[\sum_{t=1}^T \frac{N_{a^*}^\alpha(t-1)}{N_{a^*}^\alpha(t-1) + \theta^\alpha}\right] \\
& = \mu_{a^*}\mathbb E\left[\sum_{t=1}^T \left( 1- \frac{\theta^\alpha}{N_{a^*}^\alpha(t-1) + \theta^\alpha}\right)\right] \\
& = \mu_{a^*}T - \mu_{a^*}\mathbb E\left[\sum_{t=1}^T \frac{\theta^\alpha}{N_{a^*}^\alpha(t-1) + \theta^\alpha}\right].
\end{align*}

Using the fact that the maximum reward obtainable at each time is $1$, we obtain that $N_a^\alpha(t-1) \le \theta_a + (t-1)$. Thus,
%
 $$ \mathbb E[\Gamma^*_T] \le \mu_{a^*}T - \mu_{a^*}\sum_{t=1}^T \frac{\theta^\alpha}{(\theta_{a^*} + t -1)^\alpha + \theta^\alpha},$$ 
from which Part 1 follows. 

We now show Part 2 of the result. 
Let $\tau_1 = \inf (t: S_{a^*}(t) = 1)$,  i.e., it is the first time instant at which positive reward is obtained. For each $k>1$ let $\tau_k =  \inf (t: S_{a^*}(t) = k) - \tau_{k-1}$, i.e., it represents the time between $(k-1)\kth$ and $k\kth$ success. By definition,  
$\tau_k$ has distribution Geometric($\frac{\mu_{a^*}f(k+\theta_{a^*})}{f(k+\theta{a^*}) + \sum_{a\neq a^*} f(\theta_a)}$). One can view  $\Gamma_T^*$ as the minimum $n$ such that $ \sum_{k=1}^{n+1} \tau_k $ exceeds $T$. Thus, we have
$$T \le \mathbb E\left[ \sum_{k=1}^{\Gamma_T^*+1} \tau_k \right] .$$ 

Since $\tau_1,\tau_2,\ldots$ is a sequence of independent random variables, and since $\Gamma_T^*+1$ is a stopping time on this sequence, we obtain the following from Wald's lemma:

\begin{align*}
T & \le \mathbb  E\left[ \sum_{k=1}^{\Gamma^*_T+1}  \mathbb E[\tau_k] \right] \\
& \le \mathbb E\left[ \sum_{k=1}^{\Gamma^*_T+1} \frac{f(k+\theta_{a^*}) + \sum_{a\neq a^*} f(\theta_a)}{\mu_{a^*}f(k+\theta_{a^*})}  \right] \\
& = \mathbb E\left[ \sum_{k=1}^{\Gamma^*_T+1} \frac{(k+\theta_{a^*})^\alpha + \sum_{a\neq a^*} \theta_a^\alpha}{\mu_{a^*}(k+\theta_{a^*})^\alpha}\right]  \\
&= \mathbb E\left[ \sum_{k=1}^{\Gamma^*_T+1} \left(\frac{1}{\mu_a^*} + \frac{ \sum_{a\neq a^*} \theta_a^\alpha}{\mu_{a^*}(k+\theta_{a^*})^\alpha} \right) \right] 
\end{align*}

Thus we obtain,
\begin{align*}
T & \le  \frac{1}{\mu_{a^*}}  \mathbb E[\Gamma^*_T+1] + \mathbb E\left[  \sum_{k=1}^{\Gamma^*_T+1} \frac{\sum_{a\neq a^*} \theta_a^\alpha}{\mu_{a^*}(k+\theta_{a^*})^\alpha}  \right] \\
& \le  \frac{1}{\mu_{a^*}} \mathbb E[\Gamma^*_T+1] + \mathbb E\left[  \sum_{k=1}^{T} \frac{ \theta^\alpha}{\mu_{a^*}(k+\theta_{a^*})^\alpha}  \right]
\end{align*}

By rearranging we get,

$$ \mathbb E[\Gamma^*+1] \ge \mu_{a^*} T - \theta^\alpha \sum_{k=1}^{T} \frac{ 1}{(k+\theta_{a^*})^\alpha},$$
from which the result follows.

\subsection{Proof of Theorem~\ref{thm:LowerBound}}

We show the result for $\alpha = 1$ and $m=2$. For other values of $\alpha$ and $m$, the result follows in a similar fashion. 

Consider a problem instance where $A = \{a,b\}$, with expected rewards $\mu_a$ and $\mu_b$ respectively.  Without loss of generality, assume that $ \mu_b < \mu_a < 1$. 
The rewards obtained by a policy can be simulated as follows. Let $X_{1,a}, X_{2,a},\ldots$ be a sequence of i.i.d.\ Bernoulli($\mu_a$) random variables. Similarly, let $X_{1,b}, X_{2,b},\ldots$ be a sequence of i.i.d.\ Bernoulli($\mu_b$) random variables. Let $J_t$ represent the set of arms preferred by the arrival at time $t$. Recall that $I_t$ repesents the arm pulled at time $t$. Then the rewards obtained until time $t$, denoted $\Gamma_t$, are given by:

$$\Gamma_t = \sum_{k = 1}^t \Big( \mathbbm 1(I_k = a) \mathbbm 1(a \in J_k) X_{k,a} + \mathbbm 1(I_k = b)\mathbbm 1(b \in J_k) X_{k,b} \Big).
$$




First, we study the following \Oracle{}, and in particular characterize the maximum payoff achievable.  We then use this device to rule out the possibility of policies achieving the performance in the theorem statement.

\begin{definition}[\Oracle{$(t')$}] \label{def:Oracletprime}
Fix time $t'$. The values $\mu_a$, $\mu_b$ are revealed to the \Oracle{$(t')$} after time $t'$.
\end{definition}

\begin{lemma}\label{lemma:OracleLB}
Suppose $t' = o(T)$. Suppose the \Oracle{$(t')$} pulls arm $a$ at all times after $t'$. Then the total expected rewards obtained after time $t'$ by the \Oracle{} is $\mathbb E[\Gamma_T - \Gamma_{t'}] = \mu_a (T-t') - O(\mathbb E[N_b(t')] \ln T)$. 
\end{lemma}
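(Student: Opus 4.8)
The plan is to compute $\mathbb{E}[\Gamma_T-\Gamma_{t'}]$ exactly in terms of the arrival probabilities $\lambda_a(\cdot)$ and then bound the error by exploiting that, once \Oracle{$(t')$} commits to arm $a$ forever after time $t'$, the state of arm $b$ is frozen. Let $(\mathcal{F}_t)$ denote the natural filtration. For every $t>t'$ we have $I_t=a$, and with $m=2$ and $f(x)=x$ (the case $\alpha=1$),
\[ \mathbb{E}[X_t\mid\mathcal{F}_{t-1}] = \mu_a\,\lambda_a(t) = \mu_a\,\frac{N_a(t-1)}{N_a(t-1)+N_b(t-1)} = \mu_a\left(1-\frac{N_b(t-1)}{N_a(t-1)+N_b(t-1)}\right). \]
Since arm $b$ is never pulled after time $t'$, $N_b(t-1)=N_b(t')$ for all $t\ge t'+1$. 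Summing over $t=t'+1,\dots,T$ gives
\[ \mathbb{E}[\Gamma_T-\Gamma_{t'}] = \mu_a(T-t') - \mu_a\,\mathbb{E}[W],\qquad W:=\sum_{t=t'+1}^{T}\frac{N_b(t')}{N_a(t-1)+N_b(t')}, \]
so it suffices to prove $\mathbb{E}[W]=O(\mathbb{E}[N_b(t')]\ln T)$.

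To control $W$ I would condition on $\mathcal{F}_{t'}$, writing $n_a:=N_a(t')$ and $n_b:=N_b(t')$ (both now fixed), and extend the ``always pull $a$'' dynamics past $T$ so they run forever. Partition the time steps into \emph{blocks} indexed by $j\ge0$, where block $j$ consists of the steps $t$ with exactly $j$ rewards accrued on arm $a$ since time $t'$, i.e.\ $S_a(t-1)-S_a(t')=j$. On block $j$ we have $N_a(t-1)=n_a+j$, a constant, so each step of block $j$ yields a reward with the fixed probability $p_j=\mu_a(n_a+j)/(n_a+j+n_b)$, and block $j$ ends at the next reward; hence its length $G_j$ is $\mathrm{Geom}(p_j)$ with $\mathbb{E}[G_j\mid\mathcal{F}_{t'}]=1/p_j$. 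Letting $\sigma_j$ be the number of steps of block $j$ that lie in $\{t'+1,\dots,T\}$, the infinite-horizon extension makes $\sigma_j\le G_j$ pointwise, so $\mathbb{E}[\sigma_j\mid\mathcal{F}_{t'}]\le 1/p_j$; and since block $j$ cannot begin before time $t'+j$, we have $\sigma_j=0$ for $j>T-t'$.

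Now $W=\sum_{j=0}^{T-t'}\sigma_j\cdot \frac{n_b}{n_a+j+n_b}$, so using $\mathbb{E}[\sigma_j\mid\mathcal{F}_{t'}]\le 1/p_j$ and the algebraic cancellation
\[ \frac{1}{p_j}\cdot\frac{n_b}{n_a+j+n_b} = \frac{n_a+j+n_b}{\mu_a(n_a+j)}\cdot\frac{n_b}{n_a+j+n_b} = \frac{n_b}{\mu_a(n_a+j)}, \]
together with $n_a\ge\theta_a$, we obtain
\[ \mathbb{E}[W\mid\mathcal{F}_{t'}] \le \frac{n_b}{\mu_a}\sum_{j=0}^{T-t'}\frac{1}{n_a+j} \le \frac{n_b}{\mu_a}\sum_{j=0}^{T-t'}\frac{1}{\theta_a+j} = O(n_b\ln T), \]
with the implied constant depending only on $\mu_a$ and $\theta_a$. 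Taking expectations over $\mathcal{F}_{t'}$ yields $\mathbb{E}[W]=O(\mathbb{E}[N_b(t')]\ln T)$, which combined with the earlier displays proves the lemma.

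The main obstacle is the block argument: one must justify $\mathbb{E}[\sigma_j\mid\mathcal{F}_{t'}]\le 1/p_j$ via the coupling/strong-Markov reasoning above rather than by hand, and one must remember to truncate the $j$-sum at $T-t'$, since $\sum_{j\ge0}1/(n_a+j)$ diverges. The reason the bound comes out tight is precisely the cancellation above: naively bounding $1/p_j\le\mu_a^{-1}(1+n_b/\theta_a)$ and $n_b/(n_a+j+n_b)\le n_b/(n_a+j)$ separately would produce an $O(\mathbb{E}[N_b(t')]^2\ln T)$ term instead of $O(\mathbb{E}[N_b(t')]\ln T)$. (The hypothesis $t'=o(T)$ is not needed in this lemma; it only matters downstream, where it ensures $\mu_a(T-t')=\mu_a T-o(T)$.)
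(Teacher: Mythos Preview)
Your proof is correct and follows essentially the same approach the paper takes (by deferring to Proposition~\ref{prop:Oracle}, Part~2): both partition time into geometric blocks between successive rewards on arm $a$ and exploit the identity $\frac{1}{p_j}\cdot\frac{n_b}{n_a+j+n_b}=\frac{n_b}{\mu_a(n_a+j)}$. The only cosmetic difference is that the paper bounds $\mathbb{E}[\Gamma_T-\Gamma_{t'}]$ from below via Wald's lemma applied to $T\le\mathbb{E}\big[\sum_{k}\tau_k\big]$, whereas you bound the loss term $\mathbb{E}[W]$ directly using $\sigma_j\le G_j$; these are dual viewpoints on the same computation.
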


{\em Proof of Lemma \ref{lemma:OracleLB}.}
The lemma is analogous to Part (ii) of Proposition~\ref{prop:Oracle}, with $\theta^\alpha$ replaced by $N_b(t')$, and measuring rewards at times greater than $t'$;  thus the lemma can be proved using arguments similar to those used in the theorem.\hfill$\Box$\\

The following lemma bounds the payoff achievable by the {\Oracle{}} after time $t'$.

\begin{lemma}\label{lemma:Oraclet'UB}
Suppose $t' = o(T)$.  Any policy used by the \Oracle{$(t')$} satisfies $\mathbb E[\Gamma_T - \Gamma_{t'}] = \mu_a (T-t') - \mathbb E[N_b(t')] \Omega(\mathbb \ln T)$.
\end{lemma}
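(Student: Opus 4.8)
\textbf{Proof plan for Lemma~\ref{lemma:Oraclet'UB}.}
The goal is to show that \emph{no matter what} the \Oracle{$(t')$} does after time $t'$, the reward accrued after $t'$ cannot exceed $\mu_a(T-t')$ by less than an $\Omega(\mathbb E[N_b(t')]\ln T)$ deficit; equivalently, the positive externality forces an unavoidable loss proportional to $N_b(t')\ln T$ because arrivals preferring $b$ keep coming in sufficient numbers. The plan is to condition on the state at time $t'$, in particular on $N_a(t')$ and $N_b(t')$, and then lower-bound the expected number of time steps in $(t',T]$ at which the arriving user prefers $b$ but not $a$ (so that whatever the oracle pulls, it earns strictly less than $\mu_a$ in expectation at that step — at best $\mu_b$ if it pulls $b$, and $0$ if it pulls $a$). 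Each such step contributes at least $\mu_a - \mu_b > 0$ to the shortfall relative to the idealized reward $\mu_a$ per step. Since $\mu_a - \mu_b$ is a positive constant, it suffices to show the expected count of such ``bad'' arrival steps is $\Omega(\mathbb E[N_b(t')]\ln T)$.

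First I would fix the post-$t'$ trajectory and note that $N_b$ is nondecreasing (rewards only accumulate), so $N_b(s) \ge N_b(t')$ for all $s \ge t'$, while $N_a(s) \le N_a(t') + (s - t') \le \theta_a + s$. Hence at any step $s \in (t', T]$ the probability the arrival prefers $b$ but not $a$ is at least
\[
\frac{N_b(s-1)}{N_a(s-1) + N_b(s-1)} \cdot \frac{N_b(s-1)}{\,\text{something}\,} \ge c\,\frac{N_b(t')}{s}
\]
for an appropriate constant — more carefully, the probability arm $b$ is preferred is $N_b(s-1)/(N_a(s-1)+N_b(s-1)) \ge N_b(t')/(\theta_a + s)$, and independently the probability $a$ is \emph{not} preferred is bounded below by a constant (since $\lambda_a(s) \le 1$, but in fact we need $1-\lambda_a(s)$ bounded away from $0$, which holds because $N_b(t') \ge \theta_b > 0$ keeps $\lambda_b$ positive). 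Summing the lower bound $c\,N_b(t')/(\theta_a + s)$ over $s$ from $t'+1$ to $T$ gives $c\,N_b(t')\,(\ln T - \ln t' + o(1))$; since $t' = o(T)$, this is $\Omega(N_b(t')\ln T)$. Taking expectations over the state at $t'$ (using linearity, and that the bound is linear in $N_b(t')$) yields $\Omega(\mathbb E[N_b(t')]\ln T)$, and multiplying by the constant $\mu_a - \mu_b$ completes the argument. The deficit relative to $\mu_a(T-t')$ is therefore $\mathbb E[N_b(t')]\,\Omega(\ln T)$ as claimed.

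The technical subtlety — and the step I expect to need the most care — is making the conditional probability bound clean despite the adaptivity of the oracle's pulls: the oracle's choice $I_s$ affects whether $N_b$ grows, but it \emph{cannot} affect the arrival preference $J_s$ except through the history of rewards, and crucially $N_b$ only ever increases, so the lower bound $N_b(s-1) \ge N_b(t')$ is valid pathwise regardless of the policy. The one genuine thing to verify is that $1 - \lambda_a(s)$ is bounded below by a positive constant uniformly; this follows because $\lambda_b(s) = N_b(s-1)^{\,\alpha}/(N_a(s-1)^{\,\alpha} + N_b(s-1)^{\,\alpha})$ and, while $N_a(s-1)$ can be as large as $\theta_a + s$, on the event we care about we only need the \emph{product} $\lambda_b(s)\cdot(\text{prob.\ }a\notin J_s)$, and since preferences for $a$ and $b$ are independent Bernoulli draws, $\mathbb P(b \in J_s, a \notin J_s) = \lambda_b(s)(1 - \lambda_a(s))$; with $m = 2$ this equals $\lambda_b(s)\lambda_a'(s)$ where one checks directly that $\lambda_b(s)(1-\lambda_a(s)) \ge \tfrac12 \lambda_b(s) \ge \tfrac12 N_b(t')/(\theta_a + \theta_b + s)$ when $N_a(s-1)^\alpha \le \text{(const)}\cdot s^\alpha$ — a routine but slightly fiddly estimate. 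Apart from this, the proof is a direct harmonic-sum computation parallel to Part~2 of Proposition~\ref{prop:Oracle} and Lemma~\ref{lemma:OracleLB}, just run as a lower bound on the loss rather than an upper bound on the reward.
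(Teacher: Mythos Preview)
Your plan has a genuine gap at exactly the step you flagged as needing care. With $m=2$ arms and $\alpha=1$, the preference probabilities sum to one, so $1-\lambda_a(s)=\lambda_b(s)$ and hence $\mathbb P(b\in J_s,\,a\notin J_s)=\lambda_b(s)\bigl(1-\lambda_a(s)\bigr)=\lambda_b(s)^2$. Your hope that $1-\lambda_a(s)$ stays bounded away from zero is false: once the oracle pulls arm $a$ after $t'$, $N_a(s)$ grows linearly in $s$ while $N_b(s)\ge N_b(t')$, so $\lambda_b(s)\sim N_b(t')/s\to 0$. Consequently $\sum_{s=t'+1}^{T}\lambda_b(s)^2 \lesssim N_b(t')^2\sum_{s>t'} s^{-2}=O\bigl(N_b(t')^2/t'\bigr)=O\bigl(N_b(t')\bigr)$ (since $N_b(t')\le t'+\theta_b$), which is \emph{not} $\Omega(N_b(t')\ln T)$. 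The event ``$b\in J_s$ and $a\notin J_s$'' is simply too rare to drive a logarithmic loss.

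The repair is immediate and keeps the rest of your outline intact: drop the requirement $b\in J_s$ and track only $\{a\notin J_s\}$. On that event the oracle's expected reward is at most $\mu_b$ regardless of its pull (it is $0$ if it pulls $a$, and at most $\mu_b$ if it pulls $b$), so the per-step shortfall from $\mu_a$ is still at least $\mu_a-\mu_b$. Now $\mathbb P(a\notin J_s\mid\mathcal F_{s-1})=\lambda_b(s)\ge N_b(t')/(\theta_a+\theta_b+s-1)$ pathwise, and summing a \emph{single} power of $\lambda_b$ over $s\in(t',T]$ gives the harmonic sum $N_b(t')\,\Omega(\ln(T/t'))$ you need. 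The paper's own argument is a close cousin: it decomposes according to which arm the oracle pulls rather than according to the arrival's preference, bounds the reward from an $a$-pull by $\mu_a\lambda_a(s)=\mu_a-\mu_a\lambda_b(s)$ and from a $b$-pull by $\mu_b$, and first argues that without loss of generality the oracle pulls $a$ at all but $o(T)$ steps (else linear regret). Either route works once you sum $\lambda_b$ rather than $\lambda_b^2$.
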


{\em Proof of Lemma \ref{lemma:Oraclet'UB}.}  Consider any other policy for the \Oracle{}.  Let $\mathcal U_a(t')$ be the set of times at which arm $a$ is pulled after $t'$ and the arrival preferred arm $a$: $\mathcal U_a(t') = \{ t \geq t' : I_t = a, a \in J_t\}$.   Let $U_a(t') = |\mathcal{U}_a(t')|$.  It is clear that if $U_a(t')$ is $T - t' - \Omega(T)$, then the rewards obtained satisfy $E[\Gamma_T - \Gamma_{t'}] = \mu_a (T-t') - \Omega(T)$.  Thus, we assume without loss of generality that after time $t$', the \Oracle{} follows a policy with $U_a(t') = T - t' - o(T)$.

Using arguments similar to those used in Lemma~\ref{lemma:DelayedOracleUB}, we obtain:
$$\mathbb E[\Gamma_T - \Gamma_{t'}]  \le  \mu_{a} U_a(t')  - \sum_{t\in \mathcal U_a(t')}  \frac{\mathbb E[N_b(t')]}{t+(t'+\theta_b)} + \mu_b (T- t' - U_a(t')) .
$$
 
Since $\mu_b (T- t' - U_a(t')) \le \mu_a (T- t' - U_a(t')) $, we obtain:
\begin{align*}
\mathbb E[\Gamma_T - \Gamma_{t'}] &  \le  \mu_{a} (T-t')  - \sum_{t\in \mathcal U_a(t')}  \frac{\mathbb E[N_b(t')]}{t+(t'+\theta_b)} \\
&\le  \mu_{a} (T-t')  - \sum_{t=t'}^{U_a(t')}  \frac{\mathbb E[N_b(t')]}{t+(t'+\theta_b)} \\
& = \mu_a (T-t') - \mathbb E[N_b(t')] \Omega(\ln U_a(t')).
\end{align*}
Since $U_a(t') = O(T)$, the lemma follows. 

The preceding two lemmas establish that for any $t' = o(T)$, it is asymptotically optimal for the \Oracle{($t'$)} to always pull the best arm after time $t'$.  Since the \Oracle{$(t')$} has access to more information, it places a bound on the best achievable regret performance after {\em any} time $t'$.

Now suppose we are given any policy that has $\mathbb E[R_T] = O(\ln^2 T)$.  Consider time $t' = T^\gamma$, $\gamma > 0$.  For any time $t$ let $\mathcal T_a(t) = \{s \le t : I_s = a, a\in J_s\}$; these are the times prior to $t$ when arm $a$ was preferred by the arrival, and was subsequently pulled, and similarly define $\mathcal T_b(s) = \{ s \leq t : I_s = b, b \in J_s \}$. Further, define $\tilde T_a(t) = |\mathcal T_a(t)|$ and $\tilde T_b(t) = |\mathcal T_b(t)|$. 

Fix a constant $\mu_b'$ such that $\mu_a<\mu_b'<1$.  Consider the following three events, where $c_1= \frac{1}{2}\frac{\mu_b}{\mu'_b} \gamma$:
\begin{gather}
E_1 := \{ N_a(t') \le c_1 \ln T\}; \\ 
E_2: = \{ N_a(t') > c_1 \ln T,  N_b(t') > c_1 \ln T\};\\
E_3: = \{  N_a(t') > c_1 \ln T,  N_b(t') \le c_1 \ln T\}.\
\end{gather}

First, note that $R_{t'} = \Omega(1)$ since the Oracle as defined in Section~\ref{sec:oracle} is asymptotically optimal. Thus, it suffices to study $\mathbb{E}[R_T - R_{t'}]$. 

We trivially have:
$$\mathbb E[R_T - R_{t'}] = \mathbb E[(R_T - R_{t'}) \mathbbm 1(E_1)] + \mathbb E[(R_T - R_{t'}) \mathbbm 1(E_2)] + \mathbb E[(R_T - R_{t'}) \mathbbm 1(E_3)]. 
$$
We analyze each of these terms in turn.

  Under $E_1$, the total rewards obtained satisfy $\mathbb E[\Gamma_T - \Gamma_{t'}]  \le \mu_b O(T^\gamma) + \mu_a(T- T^\gamma)$. By our preceding analysis, the \Oracle{$(t')$} obtains reward $\mu_a T - \Theta(\ln T) $ in the same period. Since $\mu_a > \mu_b$, we have that $\mathbb E[R_T - R_{t'}|E_1] = \Omega(T^\gamma)$.  In particular, this implies that for any policy with $\mathbb E[R_T] = O(\ln^2 T)$, we must have $\mathbb P(E_1) = o(1)$.

Under $E_2$, we have $E[N_b(t')] \ge c_1\ln T$. From Lemma~\ref{lemma:Oraclet'UB} we have that $\mathbb E[R_T - R_{t'}|E_2] = \Omega(\ln^2 T)$.  

Thus, we have that 
$$ \mathbb E[R_T - R_{t'}] \ge \Omega(\ln^2T)\mathbb P(E_2) + \E[R_T - R_{t'}|E_3] \P(E_3),$$ 
where $\mathbb P(E_1) = o(1)$.  To conclude the proof, therefore, it suffices to show that $\mathbb P(E_3) = o(1)$ as well, since we have that $|\mathbb E[R_T - R_{t'}|E_2]| = O(\log^2 T)$ from Lemma \ref{lemma:Oraclet'UB}.

We prove this by considering a modified setting where the reward distribution for arm $a$ is Bernoulli($\mu_a$) (as in the original setting), and where the reward distribution for arm $b$ is Bernoulli($\mu'_b$). Recall,  $\mu_a< \mu'_b<1$. Thus, for the modified setting, arm $b$ is optimal. 

We let $\mathbb P$ ($\mathbb E$) and $\mathbb P'$ ($\mathbb E'$) denote the probability measure (resp., expectation) corresponding to the original and modified settings, respectively.  

It is elementary to show that:
$$\mathbb P'(E_3)  = \mathbb E[ \mathbbm 1(E_3) e^{-\hat K_{t'}(\mu_b,\mu_b')}]$$
%
%
%
where:
$$\hat{K}_t(\mu_b,\mu'_b)  =  \sum_{s \in \mathcal T_b(t)} \left(X_{s,b} \ln \frac{\mu_b}{\mu'_b} + (1-X_{s,b}) \ln \frac{1-\mu_b}{1-\mu'_b}\right).$$


Under the modified setting, again using our analysis of the \Oracle{$(t')$}, we know the regret incurred conditioned on $E_3$ is $\Omega(T^\gamma)$.   Thus for our candidate algorithm we have:
$$O(\ln^2 T) = E[R_T - R_{t'}] \ge \mathbb P'(E_3) \Omega(T^\gamma).$$ 

Thus we obtain $\mathbb P'(E_3) = O(T^{-\gamma} \ln^2 T)$.  Therefore, $\mathbb E[ \mathbbm 1(E_3) e^{-\hat K_{t'}(\mu_b,\mu_b')}] \le  O(T^{-\gamma} \ln^2 T)$. 

But under $E_3$ we have that $\hat K_{t'}(\mu_b,\mu_b') \ge c_1 \ln T \ln \frac{\mu_b}{\mu'_b}$, where the right hand side is the value obtained when $X_{t,b}$ for each $t \in \mathcal T_b(t')$ is $1$. Thus, we get
\begin{align}\label{eq:E3bound}
\mathbb P(E_3)  \le  e^{c_1 \ln T \ln \frac{\mu_b}{\mu'_b}} O(T^{-\gamma} \ln^2 T) 
		= O( T^{c_1 \frac{\mu'_b}{\mu_b} - \gamma} \ln^2 T).
\end{align}

But recall that $c_1 = \frac{1}{2}\frac{\mu_b}{\mu'_b} \gamma$. Thus we get $P(E_3) = o(1)$, and in turn, $E[R_T- R_{t'}] = \Omega(\ln^2 T)$, as required.

This completes the proof for $\alpha =1$. For $0<\alpha<1$, following along the lines of Lemma~\ref{lemma:Oraclet'UB}, we obtain that any policy used by the \Oracle{$(t')$} satisfies $\mathbb E[\Gamma_T - \Gamma_{t'}] = \mu_a (T-t') - \mathbb E[(N_b(t'))^\alpha] \Omega( T^{1-\alpha})$, and similarly for $0<\alpha<1$ we have $E[\Gamma_T - \Gamma_{t'}] = \mu_a (T-t') - \mathbb E[N_b(t')^\alpha] \Omega(1)$. Further, for $0<\alpha<1$, we set $\gamma > 1-\alpha$, $c_1 = \frac{1}{2}\frac{\mu_b}{\mu'_b}( \gamma - 1+\alpha)$ so that bound equivalent to \eqref{eq:E3bound} on $\mathbb P(E_3)$ for this case is $o(1)$. Rest of the proof follows from arguments similar to that $\alpha = 1$.

\subsection{Proof of Theorem~\ref{thm:UCB}}

We first prove the result for the setting with two arms, i.e., $m=2$, and then generalize later. Suppose $A = \{a,b\}$. Without loss of generality, let $\mu_a > \mu_b$. 

Let $\tau_k$ be the time at which arm $a$ is pulled for the $k\kth$ time. 

Let $Q_k$ be the event that the first $k$ pulls of arm $a$ each saw a user which did not prefer arm $a$. 

Let $E_k$ be the event that $\hat \mu_b(\tau_k - 1) > \frac{\theta_b \mu_b}{3}$. 

Then, under $Q_{k-1} \cap E_{k-1}$, we have the following for each time $t$ s.t.\ $\tau_{k-1}< t\le e^{\left(\frac{\theta_b\mu_{b}}{4 }\right)^2\frac{k-1}{\gamma}}$:

$$u_a(t) < \sqrt\frac{ \gamma \ln e^{\left(\frac{\theta_b\mu_{b}}{4 }\right)^2\frac{k-1}{\gamma}} }{ k-1} = \frac{\theta_b\mu_b}{4} < \frac{\theta_b\mu_b}{3} < \hat \mu_b(t) < u_b(t).$$

Thus, under $Q_{k-1} \cap E_{k-1}$, arm $b$ is pulled for each time $t$ s.t.\ $\tau_{k-1}< t\le e^{\left(\frac{\theta_b\mu_{b}}{4 }\right)^2\frac{k-1}{\gamma}}$, which in turn implies that $\tau_k \ge e^{\left(\frac{\theta_b\mu_{b}}{4 }\right)^2\frac{k-1}{\gamma}}$. 

We now show that there exists an $\epsilon'> 0$ such that $ \liminf \limits_{k\to \infty} \mathbb P(Q_k \cap E_k) \ge \epsilon'$ from which the result would follow. 

Using law of total probability we have,
$$ \mathbb P(Q_k\cap E_k) \ge \mathbb P(Q_{k-1} \cap E_{k-1})\mathbb P(Q_k\cap E_k | Q_{k-1}, E_{k-1}).$$

Thus, we have
\begin{equation}\label{eqn:QkEk_Expansion}
\mathbb P(Q_k\cap E_k) \ge \mathbb P(Q_{k-1} \cap E_{k-1}) \mathbb P(E_k |  Q_{k-1}, E_{k-1})  \mathbb P(Q_k|  Q_{k-1}, E_{k-1}, E_k).
\end{equation}

Note that, under $Q_{k-1}$, arm $b$ is pulled at least $k-1$ times before $\tau_k$. Using standard Chernoff bound techniques it is easy to show that there exists a constant $\delta'$ such that
$\mathbb P(E_k, E_{k-1} |  Q_{k-1}) \ge 1 - e^{-\delta'(k-1)}$.
(This can be shown using the standard approach for deriving Chernoff bounds, but with the following version of Markov inequality: $P(X>a,Y>b) \le E[XY]/(ab)$.) Thus, we get
\begin{equation}\label{eqn:EkConcentration}
\mathbb P(E_k |  Q_{k-1}, E_{k-1}) \ge \mathbb P(E_k, E_{k-1} |  Q_{k-1})  \ge 1 - e^{-\delta'(k-1)}.
\end{equation}

Under $Q_{k-1} \cap E_{k-1} \cap E_k$, we have that $N_a(\tau_k -1) = \theta_a$ and 
$$N_b(\tau_k-1) = \theta_b + S_b(\tau_k-1) = \theta_b + \hat \mu_b (\tau_k -1) T_b(\tau_k - 1).$$ Further, since $\tau_{k} \ge e^{\left(\frac{\theta_b\mu_{b}}{4 }\right)^2\frac{k-1}{\gamma}}$, we have
$$T_b(\tau_k - 1) \ge \max\left(k-1, e^{\left(\frac{\theta_b\mu_{b}}{4 }\right)^2\frac{k-1}{\gamma}} - k + 1\right).$$
 Thus, we have
 $$N_b(\tau_k-1) \ge  \theta_b + \frac{\theta_b \mu_b}{3} \max\left(k-1, e^{\left(\frac{\theta_b\mu_{b}}{4 }\right)^2\frac{k-1}{\gamma}} - k + 1\right).$$
Thus there exists a constant $c>0$ such that the following holds for each $k\ge 2$: under $Q_{k-1} \cap E_{k-1} \cap E_k$ we have that 
$$ N_b(\tau_k-1) \ge e^{c(k-1)}.$$

Thus, under $Q_{k-1} \cap E_{k-1} \cap E_k$, we have 
$$\lambda_a(\tau_k-1) = \frac{\theta_a}{f\left(N_b(\tau_k-1)\right) + \theta_a} \le  \frac{\theta_a}{f\left(e^{c(k-1)}\right) + \theta_a}.$$ 

Thus, from definition of $Q_k$ we have 
\begin{equation}\label{eqn:QkBound}
 \mathbb P(Q_k|  Q_{k-1}, E_{k-1}, E_k) \ge 1 -  \frac{\theta_a}{f\left(e^{c(k-1)}\right) + \theta_a} =  \frac{f\left( e^{c(k-1)} \right)}{f(\theta_a) + f\left( e^{c(k-1)} \right)}.
 \end{equation}

Substituting \eqref{eqn:QkBound} and \eqref{eqn:EkConcentration} in \eqref{eqn:QkEk_Expansion}, we obtain

$$ \mathbb P(Q_k\cap E_k) \ge \mathbb P(Q_{k-1} \cap E_{k-1}) \left(1 - e^{-\delta'(k-1)}\right)  \left( \frac{f\left( e^{c(k-1)} \right)}{f(\theta_a) + f\left( e^{c(k-1)} \right)}\right).
$$ 
Computing recursively, we obtain
 
$$ \mathbb P( Q_k  \cap E_k) \ge  \mathbb P(Q_{2} \cap E_{2})  \prod_{l = 2}^k \left(1 - e^{-\delta'(k-1)}\right) 
   \prod_{l = 2}^k  \left( \frac{f\left( e^{c(l-1)} \right)}{f(\theta_a) + f\left( e^{c(l-1)} \right)}\right).
$$
Thus, we would be done if we show that  $ \lim\inf_{k \to \infty} \sum_{l = 2}^k \ln \left(1 - e^{-\delta'(k-1)}\right)$ as well as that  $ \lim\inf_{k \to \infty} \sum_{l = 2}^k \ln  \left( \frac{f\left( e^{c(l-1)} \right)}{f(\theta_a) + f\left( e^{c(l-1)} \right)}\right)$ are both greater than $-\infty$. We show this below. We use the fact that $\ln(1-x) \ge -x$ for each $x>0$.  We have, 

$$\sum_{l = 2}^k \ln \left(1 - e^{-\delta'(k-1)}\right) \ge - \sum_{l = 2}^k  e^{-\delta'(k-1)}, $$
which tends to a constant a $k \to \infty$. 

Further, 

$$ \sum_{l = 2}^k \ln  \left( \frac{f\left( e^{c(l-1)} \right)}{f(\theta_a) + f\left( e^{c(l-1)} \right)}\right) \ge - \sum_{l = 2}^k   \left( \frac{f\left( \theta_a \right)}{f(\theta_a) + f\left( e^{c(l-1)} \right)}\right) $$ 
which tends to a constant a $k \to \infty$ since  $f(x)$ is $\Omega\left(\ln^{1+\epsilon}(x)\right)$. This completes the proof for $m = 2$. 

For $m > 2$, we can generalize the argument to show that only the worst arm will see non-zero rewards with positive probability by appropriately generalizing the notions of $\tau_k,E_k,$ and $Q_k$ and arguing along the above lines.

\subsection{Proof of Proposition~\ref{prop:RandomRegret}}
We start with a technical result for the algorithm that {\em indefinitely} pulls arms independently and uniformly at random.  For the case where $f$ is linear, we can model the cumulative rewards obtained at each arm via the {\em generalized Friedman's urn process}.  These processes  are studied by embedding them into multitype continuous-time Markov branching processes \cite{AtK68,Jan04}, where the expected lifetime of each particle is one at all times. 

Here, since we are interested in rewards obtained for more general $f$, we study this by considering multitype branching processes with state-dependent expected lifetimes. For technical reasons, we will assume that $\theta_a$ for each arm $a$ is integer valued and greater than or equal to $1$. This allows us to map our problem into an urn type process with initial number of balls of color $a$ in the urn being equal to $\theta_a$. We obtain the following result. 

\begin{proposition}\label{prop:RandomAsymptotics}
Suppose that $\theta_a$ for each $a \in A$ is a positive integer. Suppose at each time step $t$ an arm is pulled independently and uniformly at random.
The following statements hold:
\begin{enumerate}[(i)]
\item If $f(x) = x^\alpha$ for $0<\alpha<1$ then for each $b \neq a^*$, we have that $\frac{N_{a^*}(t)}{N_b(t)} \to\frac{\theta_a}{\theta_b} \left(\frac{\mu_{a^*}}{\mu_{b}}\right)^{\frac{1}{1-\alpha}}$ almost surely as $t \to  \infty$. 
\item If $f(x) = x$ then for each $b \neq a^*$, we have that $\frac{N_{a^*}(t)}{(N_{b}(t))^\frac{\mu_{a^*}} {\mu_b}}$ converges  almost surely to a random variable $Y$ with $0<Y<\infty$ w.p. $1$. 
\item If $f(x) = x^\alpha$ for $\alpha > 1$ then there is a positive probability that $N_{a^*}(t)$ is $O(1)$ while for some $b \neq a^*$ we have $N_b(t) \to \infty $ as $t \to \infty$.
\end{enumerate}
\end{proposition}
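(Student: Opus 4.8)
\medskip
\noindent\emph{Proof plan.} The plan is to embed the discrete-time dynamics of the popularity vector $(N_a(t))_{a\in A}$ into a system of \emph{independent} continuous-time pure-birth processes, and then to read off all three statements from classical asymptotics for such processes \cite{AtK68,Jan04}. First I would observe that under the random-pull policy $N_a$ changes only at steps where a reward is collected, and at most one coordinate changes per step: conditioned on the history $\mathcal F_{t-1}$, a type-$a$ reward is collected at step $t$ with probability $\tfrac1m\,\lambda_a(t)\,\mu_a=\tfrac1m\cdot\tfrac{\mu_a N_a(t-1)^\alpha}{\sum_{a'}N_{a'}(t-1)^\alpha}$, so conditioned on \emph{some} reward being collected its type is $a$ with probability proportional to $\mu_a N_a(t-1)^\alpha$. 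This is exactly the jump-chain law of the superposition of $m$ independent continuous-time pure-birth processes $(\tilde N_a(s))_{s\ge 0}$, where $\tilde N_a$ jumps from $k$ to $k+1$ at rate $\mu_a k^\alpha$ and $\tilde N_a(0)=\theta_a$; here the assumption that $\theta_a$ is a positive integer is used so that $\theta_a$ is a legitimate initial particle count. Since reward-free steps merely hold $(N_a(t))$ constant and a reward is collected at each step with probability at least $\min_a\mu_a/m>0$, the discrete trajectory $(N_a(t))_{t\ge 1}$ traverses exactly the states of this jump-chain (with repetitions), and $t\to\infty$ corresponds to continuous time $s$ tending to the (possibly finite) explosion time of the superposition. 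It therefore suffices to prove the three claims for the independent processes $\tilde N_a$.

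In the non-explosive regimes $\alpha\le1$: when $\alpha=1$, each $\tilde N_a$ is a Yule process with per-particle rate $\mu_a$ started from $\theta_a$ particles, so $e^{-\mu_a s}\tilde N_a(s)\to W_a$ almost surely, where $W_a$ is $\mathrm{Gamma}(\theta_a,1)$ and hence $0<W_a<\infty$; combining this for $a^*$ and for $b$ gives $\tilde N_{a^*}(s)\big/\tilde N_b(s)^{\mu_{a^*}/\mu_b}\to W_{a^*}\big/W_b^{\mu_{a^*}/\mu_b}=:Y\in(0,\infty)$, which transfers to discrete time and proves (ii). When $0<\alpha<1$ we have $\sum_k k^{-\alpha}=\infty$, so $\tilde N_a$ is non-explosive and $\tilde N_a(s)\to\infty$; writing the hitting time of level $n$ as $\sum_{k=\theta_a}^{n-1}E_k/(\mu_a k^\alpha)$ with $E_k$ i.i.d.\ $\mathrm{Exp}(1)$, its mean is $\tfrac{1}{(1-\alpha)\mu_a}n^{1-\alpha}(1+o(1))$ while its variance is of smaller order than its mean squared, so a strong law yields $\tilde N_a(s)^{1-\alpha}/s\to(1-\alpha)\mu_a$ almost surely; taking ratios for $a^*$ and $b$ and transferring back to discrete time gives the stated convergence of $N_{a^*}(t)/N_b(t)$, with the limiting constant identified from these growth laws. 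This establishes (i).

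In the explosive regime $\alpha>1$ we have $\sum_k k^{-\alpha}<\infty$, so each $\tilde N_a$ explodes at a finite random time $\zeta_a$; since the $\zeta_a$ are independent and each has a density of full support on $(0,\infty)$, the event $\mathcal E:=\{\zeta_{a^*}>\min_{b\ne a^*}\zeta_b\}$ has positive probability. On $\mathcal E$ let $c\in\arg\min_a\zeta_a$, so $c\ne a^*$; as $s\uparrow\zeta_c$ the superposed jump rate blows up, hence the jump-chain performs infinitely many steps before $\zeta_c$, so discrete time $t\to\infty$ corresponds to $s\uparrow\zeta_c$ and $N_c(t)=\tilde N_c(s)\to\infty$. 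For every $a\ne c$ (in particular $a=a^*$) we have $\zeta_a>\zeta_c$, so $\tilde N_a$ stays bounded on $[0,\zeta_c]$ and therefore makes only finitely many jumps there; thus $N_a(t)$ is eventually constant, i.e.\ $O(1)$. Since $\mathbb P(\mathcal E)>0$, this is precisely (iii).

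The conceptual crux is the embedding step: one must verify that the proposed continuous-time system exactly reproduces the conditional law of the reward-type sequence (including the state-dependent rates $\mu_a k^\alpha$ coming from $f(x)=x^\alpha$), and carefully relate the discrete clock to the continuous one across reward-free steps. Technically the most delicate point is the explosive regime, where one must rigorously justify that near the explosion time of the winning color $c$ infinitely many discrete steps occur while the intensities $\mu_a\tilde N_a(s)^\alpha$ of the other colors remain integrable on $[0,\zeta_c]$, so that those colors truly freeze. The non-explosive regimes then reduce to standard Yule-process convergence and a strong law of large numbers for sums of independent (non-identically-distributed) exponentials, which are routine.
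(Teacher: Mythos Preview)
Your embedding is essentially the paper's approach: both realize the sequence of reward types as the jump chain of a family of \emph{independent} continuous-time processes indexed by the arms, and then read (i)--(iii) off from the asymptotics of those processes. The paper's processes are size-dependent branching processes that also record null events (a particle ``dies'' and produces one or two offspring), whereas yours are pure-birth processes that only record reward events; the induced jump chains coincide, so this is cosmetic. For part (i) the paper invokes a cited asymptotic result for sublinear size-dependent branching, while your hitting-time strong law for sums of independent exponentials is a more elementary, self-contained substitute; for (ii) both routes use the Yule-process limit; and for (iii) both use that each process explodes at an independent finite time with full support.

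One place where your formulation is actually cleaner: the paper handles general integer $\theta_a$ by running $\theta_a$ independent copies of the process started at level $1$ and summing. That decomposition is legitimate for $\alpha=1$ (per-particle rates are constant, so the sum of $\theta_a$ independent Yule processes is a Yule process started at $\theta_a$), but for $\alpha\ne1$ the birth rate $\mu_a k^\alpha$ is not additive across copies, so the sum of $\theta_a$ independent copies does \emph{not} have the same law as a single process started at $\theta_a$. Your single-process embedding with $\tilde N_a(0)=\theta_a$ sidesteps this, and your growth law $\tilde N_a(s)^{1-\alpha}/s\to(1-\alpha)\mu_a$ shows the leading order is insensitive to the starting point. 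A consequence is that the limiting ratio in (i) coming out of your argument is $(\mu_{a^*}/\mu_b)^{1/(1-\alpha)}$, without the prefactor $\theta_{a^*}/\theta_b$ appearing in the paper's statement; that prefactor seems to be an artifact of the flawed decomposition for $\alpha\ne1$.
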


\begin{proof}

For ease of exposition we will assume that $A = \{a,b\}$. The argument for the more general case is more or less identical. 

For now, suppose that $\theta_a = \theta_b = 1$. We will study the process $N = \left(N_a(t),N_b(t)\right)_{t\in \mathbb Z_+}$ by analyzing a multitype continuous time Markov branching process $Z = \left(Z_a(s), Z_b(s)\right)_{s \in \mathbb R_+}$ such that its embedded Markov chain, i.e., the discrete time Markov chain corresponding to the state of the branching process at its jump times, is statistically identical to $N(t)$. By jump time we mean the times at which a particle dies; upon death it may give birth to just one new particle, in which case, the size of the process may not change at the jump times. 

We construct $Z$ as follows. Both $Z_a$ and $Z_b$ are themselves independently evolving single dimensional branching processes. Initially, $Z_a$ and  $Z_b$ have one particle each, i.e., $|Z_a(0)| = |Z_b(0)| = 1$. Each particle dies at a rate dependent on the size of the corresponding branching processes as follows: at time $s$ each particle of $Z_a$ dies at rate $\frac{f(|Z_a(s)|)}{|Z_a(s)|}$. At the end of its lifetime, the particle belonging to $Z_a$ dies and gives birth to one new particle with probability $\frac{1 - \mu_a}{2}$ and two new particles with probability $\frac{\mu_a}{2}$. Similarly for the particles belonging to $Z_b$. 

We will use notation $|Z|$ to denote $(|Z_a(s_t)|,|Z_a(s_t)|)$.  We now show that the embedded Markov chain of $|Z|$ is statistically identical to $N$. 
 Let $s_1, s_2, \ldots , s_t, \ldots$ represent the jump times of $Z$. We show that the conditional distribution of $N(t)$ given $N(t-1)$ is identical to the conditional distribution of $|Z(s_t)|$ given $|Z(s_{t-1})|$. Since at each time $t$ an arm is chosen at random, we have 
$$\P\left((N_a(t), N_b(t)) = (N_a(t-1) +1 , N_b(t)) \big| N(t-1) \right) = \frac{1}{2} \frac{f(N_a(t-1))\mu_a}{f(N_a(t-1)) + f(N_b(t-1))} .$$
Similarly, we can compute the conditional probability for the other values which $N(t)$ can take. 
Now consider process $Z(\tau)$. After the $(t-1)\kth$ jump of $Z$, the rate at which $Z_a$ jumps is $f(|Z_a(s_t)|)$. Thus, the probability that the $(t+1)\kth$ jump of $Z$ belongs to $Z_a$ is $\frac{f(|Z_a(s_t)|)}{f(|Z_a(s_t)|) + f(|Z_b(s_t)|)}$. Further, each jump at $Z_a$ results into an increment with probability $\frac{\mu_a}{2}$.  Thus we have, 
$$\P\left((|Z_a(s_t)|,|Z_a(s_t)|) = (|Z_a(s_{t-1})+1|,|Z_a(s_{t-1})|) \big| Z(t-1) \right) = \frac{\mu_a}{2} \frac{f(|Z_a(s_t)|)}{f(|Z_a(s_t)|) + f(|Z_b(s_t)|)}.$$
Further, it is easy to check that $|Z(s_1)|$ and $N(1)$ are identically distributed. Thus, by induction, the embedded Markov chain of $|Z|$ is statistically identical to $N$.

Now, we obtain the following lemma from Theorem 1 in \cite{Kus83}. We say that $f$ is sublinear if there exists $0< \beta < 1$ such that $f(x) \le x^\beta$. 
\begin{lemma}
If $f(x)$ is linear or sublinear, then 
$$|Z_a(s)| \to w_a (s) (W + o(1)),$$ where $w_a(s)$ is the inverse function of 
$$g_a(s) = \frac{2}{\mu_a} \int_0^s \frac{1}{f(x)} dx,$$ and $W$ is a random variable with $0<W<\infty$ w.p.1. Moreover, $W = 1$ is $f$ is sublinear.    
\end{lemma}

Now, consider $f(x) = x^\alpha$ for $0<\alpha<1$. Then, it follows that $w_a(s) = \left(\frac{\mu_a}{s(1-\alpha)}\right)^{\frac{1}{1-\alpha}}$. 
Thus, we have
$$|Z_a(s)|  \left(\frac{2s(1-\alpha)}{\mu_a}\right)^{\frac{1}{1-\alpha}} \to 1 \quad \text{a.s.},$$
and
$$|Z_b(s)|  \left(\frac{2s(1-\alpha)}{\mu_b}\right)^{\frac{1}{1-\alpha}} \to 1 \quad \text{a.s.}.$$

Thus, part $(i)$ of the theorem follows for the case where $\theta_a = \theta_b = 1$. For general $\theta_a$ and $\theta_b$, we construct as many independent branching processes, apply the above lemma, and the result follows. 

Part $(ii)$ follows in a similar fashion and noting that $w_a (s) = e^{\frac{\mu_a}{2}s}$. 

We now argue for part $(iii)$. We assume that $\theta_a = \theta_b =1$, the argument for general $\theta_a$ and $\theta_b$ is similar. We show that if $f(x) = x^\alpha$ for $\alpha > 1$ then there exists a time $s <\infty$ such that $\mathbb P(|Z_b(s)| = \infty) > 0$. Our result follows from this since for each finite $s$ we have that $\mathbb P(Z_a(s) = 1) \ge e^{-s} > 0$. For each $k \ge 1$ let $\gamma_k = \inf \{s \in \mathbb R_+ : |Z_b(s)| = k\}$. Clearly, $\gamma_k - \gamma_{k-1}$ is the sum of a random number (with distribution Geometric$(\frac{2}{\mu_b})$) of Exponential$(f(k-1))$ distributed random variables. Thus,  $\mathbb E[\gamma_k] = \frac{2}{\mu_a} \sum_{l=1}^{k-1} \frac{1}{l^\alpha}$, which tends to a constant, say $\delta'$, as $k \to \infty$. Thus, $\mathbb P(|Z_b(\delta')| = \infty) > 0$. Hence part $(iii)$ follows. This completes the proof of Proposition~\ref{prop:RandomAsymptotics}.
\end{proof}

We now continue with proof of Proposition~\ref{prop:RandomRegret}. Recall the Definition~\ref{def:random_tau} for Random($\tau$) policy. We assume $\tau = o(T)$, since if not, $E[R_T]$ is $O(T)$ as arms are picked at random during exploration phase. 

Part $(iii)$ thus follows from Proposition~\ref{prop:RandomAsymptotics} and noting that $\mathbb P(\hat a^* \neq a^*)$ is $\Omega(1)$ while the exploitation phase runs for $T-\tau = O(T)$ time. 

We now show Part $(ii)$. We first show the following lemma. 

\begin{lemma}\label{lemma:alpha1wrongdetection}
For $\alpha = 1$, under Random($\tau$) policy we have $\mathbb P(\hat a^* \neq a^*)  = \Omega( \tau^{-\frac{\theta_{a^*}\mu_{a^*}}{\mu_b}})$. 
\end{lemma}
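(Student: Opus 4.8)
The plan is to establish the claimed lower bound $\mathbb{P}(\hat a^* \neq a^*) = \Omega(\tau^{-\theta_{a^*}\mu_{a^*}/\mu_b})$ by exhibiting a single ``bad'' event under which $\hat a^* \neq a^*$ and whose probability is at least of the stated order. Since $\hat a^* \in \arg\max_a S_a(\tau)$, a sufficient condition for $\hat a^* \neq a^*$ is that $S_b(\tau) > S_{a^*}(\tau)$ for the second-best arm $b$. So I would lower-bound $\mathbb{P}(S_b(\tau) > S_{a^*}(\tau))$, and in fact I would lower-bound the probability of an even more specific event: that arm $a^*$ accrues \emph{zero} reward up to time $\tau$ (so $S_{a^*}(\tau) = 0$, $N_{a^*}(t) = \theta_{a^*}$ for all $t \le \tau$), while arm $b$ accrues at least one reward. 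On this event $\hat a^* \neq a^*$ (ties aside, and ties only help since they are broken at random, or one simply asks $S_b(\tau) \ge 1 > 0 = S_{a^*}(\tau)$).

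The core estimate is then a lower bound on the probability that $S_{a^*}$ stays at $0$ throughout $[1,\tau]$. Condition on the event that arm $a^*$ is never \emph{preferred} by an arriving user up to time $\tau$, i.e.\ $a^* \notin J_t$ for all $t \le \tau$; this certainly forces $S_{a^*}(\tau) = 0$. Under this event $N_{a^*}(t-1) = \theta_{a^*}$ for every $t \le \tau$, so the conditional probability that $a^* \notin J_t$ at each step (using the externality function $f(x) = x$, $\alpha = 1$) is bounded below by something like $1 - \frac{\theta_{a^*}}{\theta_{a^*} + \sum_{a \neq a^*} N_a(t-1)}$. Since all the $N_a(t-1)$ are nondecreasing and, crucially, $N_a(t-1) \le \theta_a + (t-1) = O(t)$ by the trivial reward bound, we get $\mathbb{P}(a^* \notin J_t \mid \text{history}) \ge 1 - \frac{\theta_{a^*}}{c\, t}$ for a suitable constant $c > 0$ and all $t$ large. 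Multiplying these over $t = 1, \ldots, \tau$ and using $\ln(1-x) \ge -x/(1-x)$ (or $\prod (1 - \theta_{a^*}/(ct)) \ge \exp(-\sum \Theta(1/t)) = \exp(-\Theta(\ln \tau)) = \Theta(\tau^{-\Theta(1)})$) yields that this probability is $\Omega(\tau^{-\theta_{a^*}/c})$ up to constants. Tracking the constants carefully — in particular that the denominator $\sum_{a \neq a^*} N_a(t-1)$ grows like $\mu_b t$ in expectation, so that the relevant exponent works out to $\theta_{a^*}\mu_{a^*}/\mu_b$ — is the delicate part; one likely has to be slightly more careful than the crude $N_a(t-1) \le \theta_a + t-1$ bound and instead argue that, with probability bounded away from zero, $N_b(t-1) \ge \Omega(\mu_b t)$ for most $t$ (e.g.\ by a Chernoff/concentration argument on the number of pulls and rewards of arm $b$ during the random-exploration phase), and then optimize the event to get the sharp exponent.

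After securing this probability bound, I also need that conditionally on $S_{a^*}(\tau) = 0$ we still have $S_b(\tau) \ge 1$ (or more weakly, that arm $a^*$ does not win the tie), which holds with probability bounded below by a constant: during $\tau$ rounds of uniform random exploration, arm $b$ is pulled $\Theta(\tau)$ times and each such pull yields reward $1$ with probability $\lambda_b(t)\mu_b$, which is bounded away from $0$ on the conditioning event; so with probability $1 - o(1)$ we get $S_b(\tau) \ge 1$. Combining, $\mathbb{P}(\hat a^* \neq a^*) \ge \mathbb{P}(S_{a^*}(\tau) = 0,\ S_b(\tau) \ge 1) = \Omega(\tau^{-\theta_{a^*}\mu_{a^*}/\mu_b})$.

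The main obstacle I anticipate is pinning down the exponent exactly as $\theta_{a^*}\mu_{a^*}/\mu_b$ rather than merely showing it is $\Omega(\tau^{-C})$ for some constant $C$. This requires handling the coupling between the stochastic growth of $N_b(t)$ and the per-step escape probability for $a^*$ with the right constants, and it is where a change-of-measure argument (shifting $\mu_b$, as in the proof of Theorem~\ref{thm:LowerBound}) or a careful second-moment/branching-process comparison (as in Proposition~\ref{prop:RandomAsymptotics}) will be needed; the crude deterministic bound $N_b(t-1) \le \theta_b + t - 1$ only gives an upper bound on $N_b$, which is the wrong direction, so I must instead control $N_b$ from \emph{below} on a high-probability event while simultaneously not losing too much probability from the conditioning. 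Everything else — the reduction to $S_{a^*}(\tau)=0$, the telescoping product, the $\ln(1-x)\ge -x/(1-x)$ manipulation, and the constant lower bound on $\mathbb{P}(S_b(\tau)\ge 1)$ — is routine.
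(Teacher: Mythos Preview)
Your high-level strategy --- lower-bound $\mathbb{P}(\hat a^* \neq a^*)$ via the event $\{S_{a^*}(\tau)=0,\ S_b(\tau)\ge 1\}$ --- is exactly what the paper does. But two points in your execution need correction.

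First, the sub-event ``$a^*$ is never preferred'' ($a^*\notin J_t$ for all $t\le\tau$) is too restrictive and cannot yield the claimed exponent. On that event the per-step survival probability is $1-\lambda_{a^*}(t)$, which contains neither the factor $\mu_{a^*}$ nor the factor $1/m$ coming from uniform arm selection. If $\sum_a N_a(t)\sim \mu_b t/m$ (the growth rate you yourself invoke), the telescoping product gives exponent $m\theta_{a^*}/\mu_b$, not $\theta_{a^*}\mu_{a^*}/\mu_b$. The correct event is ``$a^*$ never accrues a reward,'' whose per-step survival probability is $1-\tfrac{1}{m}\lambda_{a^*}(t)\mu_{a^*}$; the extra $\mu_{a^*}/m$ is precisely what produces the stated exponent.

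Second --- and as you correctly anticipate --- the discrete-time product is awkward because the growth of $\sum_a N_a(t)$ is random and intertwined with the conditioning. The paper does not attempt this. Instead it passes to the continuous-time multitype branching process $Z=(Z_a)_{a\in A}$ from the proof of Proposition~\ref{prop:RandomAsymptotics}, where the components are \emph{independent} and, for $\alpha=1$, each $Z_a$ is a Yule process with explicit marginals: $|Z_b(s)|$ is Geometric($e^{-s\mu_b/m}$), and the time $s'$ until the first increment of $Z_{a^*}$ is Exponential($\theta_{a^*}\mu_{a^*}/m$). Choosing $s=m\ln\tau/\mu_b$ gives $\mathbb{P}(|Z_b(s)|>\tau)=(1-1/\tau)^\tau=\Theta(1)$ and $\mathbb{P}(s'>s)=\tau^{-\theta_{a^*}\mu_{a^*}/\mu_b}$; by independence their intersection has the required probability, and on it the embedded discrete chain has $S_{a^*}(\tau)=0$. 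So the ``branching-process comparison'' you mention in passing is not a fallback --- it \emph{is} the proof, and it sidesteps all the coupling difficulties you identify.
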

To prove the lemma, for now suppose that $\theta_a = 1$ for each arm $a$. Recall the continuous time Markov-chain branching process construction in the proof of Proposition~\ref{prop:RandomAsymptotics}. It is easy to generalize the construction for $m\ge 2$. For general $m$, in process $Z_a(s)$ for each arm $a$ the probability that upon death of a particle it gives birth to two new particles is $\frac{\mu_a}{m}$. For $\alpha=1$ the process $Z_a(s)$ is a equivalent to the well-known Yule Process~\cite{Yul25} and $|Z_a(s)|$ has distribution Geometric($e^{-s\mu_a/m}$) for each $s$. Thus, for each positive real $s$ and positive integer $k$ we have
$$\mathbb P(|Z_a(s)| > k)= (1- e^{-s\mu_a/m})^k.$$

Using $k = \tau$ and $s= \frac{m\ln\tau}{\mu_a}$ we obtain,
\begin{align*}
\mathbb P(|Z_a(s)| > \tau) = (1- e^{-\ln \tau})^\tau 
					 = (1-\frac{1}{\tau})^\tau 
\end{align*}

Now, let $s' = \sup(s: Z_{a^*}(s) =0)$. Clearly, $s'$ has Exponential($\frac{\mu_{a^*}}{m}$) distribution. Thus, for arm $b$, we have 
$$\mathbb P\left(s' >  \frac{m\ln\tau}{\mu_b}\right) = e^{-\frac{\mu_{a^*}\ln\tau}{\mu_b}} = \tau^{-\frac{\mu_{a^*}}{\mu_b}}.$$

Now, note that the event $\{s' >  \frac{m\ln\tau}{\mu_b}\} \cap \{|Z_b(s)| > \tau\}$ is a subset of the event 
$S_{a^*}(\tau) = 0$. Thus, 
\begin{multline*}
\mathbb P(\hat a^* \neq a^*) \ge \mathbb P(s' >  \frac{m\ln\tau}{\mu_b},|Z_b(s)| > \tau) = (1-\frac{1}{\tau})^\tau  \tau^{-\frac{\mu_{a^*}}{\mu_b}} = \Omega( \tau^{-\frac{\mu_{a^*}}{\mu_b}}).
\end{multline*}
Hence, the lemma follows for the case where $ \theta_a =1$ for each arm $a$. For the general values of $\theta_a$, note that we only get an upper bound on $\mathbb P(\hat a^* \neq a^*)$ if we assume that $\theta_a =1$ for each $a \neq a^*$. Hence, we assume that $\theta_a = 1$ for each $a \neq a^*$. Then, the lemma follows by the same arguments as above and nothing that $s'$ now has Exponential($\frac{\theta_{a^*}\mu_{a^*}}{m}$) distribution.  


We now consider two cases seperately: Case 1 consists of $\tau \le T^{\frac{\mu_b}{\mu_b+\theta_{a^*}\mu_{a^*}}}$, and Case 2 consists of $\tau \ge T^{\frac{\mu_b}{\mu_b+\theta_{a^*}\mu_{a^*}}}$. 

\underline{Case 1 ($\tau \le T^{\frac{\mu_b}{\mu_b+\theta_{a^*}\mu_{a^*}}}$):} By Law of Total Expectation, we have 
$$\mathbb E[R_T]  \ge   \mathbb E[R_T|\hat a \neq \hat a^*] \mathbb P(\hat a \neq \hat a^*).$$
Since $\tau = o(T) $ we have that $\mathbb E[R_T|\hat a \neq \hat a^*]  = O(T)$. Thus, 
$$ \mathbb E[R_T]  = \Omega(T)  \mathbb P(\hat a \neq \hat a^*) = \Omega(T  \tau^{-\frac{\mu_{a^*}}{\mu_b}}),$$
where the last inequality follows from Lemma~\ref{lemma:alpha1wrongdetection}. Since $\tau \le T^{\frac{\mu_b}{\mu_b+\theta_{a^*}\mu_{a^*}}}$, we have 
$$ \mathbb E[R_T] \ge  \Omega(T \times  T^{-\frac{\mu_{a^*}}{\mu_b+\theta_{a^*}\mu_{a^*}}}),$$
from which the result follows. 

\underline{Case 2 ($\tau > T^{\frac{\mu_b}{\mu_b+\theta_{a^*}\mu_{a^*}}}$):} Clearly, regret is $\Omega(\tau)$. Thus, we again get $E[R_T]  = \Omega(T^{\frac{\mu_b}{\mu_b+\theta_{a^*}\mu_{a^*}}}),$
from which the result follows. 

This completes the proof of Part $(ii)$. 

We now show Part $(i)$. Here again we bound $\mathbb P(\hat a^* \neq a^*) $ from below by $\mathbb P(S_{a^*}(\tau) = 0)$, but we use a more direct approach than considering continuous time branching processes. 

\begin{lemma}\label{lemma:alphaLess1wrongdetection}
For $0<\alpha <1$, there exists a constant $c$ such that under Random($\tau$) policy we have $\mathbb P(\hat a^* \neq a^*)  \ge  e^{-  c\left(\tau^{1-\alpha}\right)})$. 
\end{lemma}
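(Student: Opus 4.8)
The plan is to follow the reduction already announced before the lemma: bound $\mathbb P(\hat a^*\ne a^*)$ below by a constant multiple of $\mathbb P(S_{a^*}(\tau)=0)$, and then lower bound the latter. The reduction is immediate: on $\{S_{a^*}(\tau)=0\}$ the arm $a^*$ has the least accrued reward, so the uniform tie-break in $\arg\max_a S_a(\tau)$ selects it with probability at most $1/m$, whence $\mathbb P(\hat a^*\ne a^*)\ge (1-\tfrac1m)\,\mathbb P(S_{a^*}(\tau)=0)$. Since $\mathbb P(\hat a^*\ne a^*)>0$ for every fixed $\tau$, it suffices to prove $\mathbb P(S_{a^*}(\tau)=0)\ge e^{-c\,\tau^{1-\alpha}}$ for all large enough $\tau$, the finitely many remaining values of $\tau$ being absorbed into $c$.

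To control $\mathbb P(S_{a^*}(\tau)=0)$ I would decouple the reward stream of $a^*$ from the rest. Until the first (if any) reward credited to $a^*$ we have $N_{a^*}(t-1)=\theta_{a^*}$, so the process runs exactly like a ``muted'' process in which the rewards of $a^*$ are never recorded while the arms $b\ne a^*$ evolve as usual; hence $\mathbb P(S_{a^*}(\tau)=0)$ equals the probability, in the muted process, that $a^*$ never receives a (virtual) reward during $[1,\tau]$. Writing $\bar\lambda_{a^*}(t):=\theta_{a^*}^{\alpha}/\big(\theta_{a^*}^{\alpha}+\sum_{b\ne a^*}N_b(t-1)^{\alpha}\big)$ and noting that, conditionally on the whole trajectory of the non-optimal arms in the muted process, the events $\{a^*\in J_t\}$ are independent Bernoulli$(\bar\lambda_{a^*}(t))$ (the muted process never references these events), and that a reward to $a^*$ requires $a^*\in J_t$, we get
\[
\mathbb P\big(S_{a^*}(\tau)=0\big)\ \ge\ \mathbb E\!\left[\prod_{t=1}^{\tau}\big(1-\bar\lambda_{a^*}(t)\big)\right],
\]
the expectation being in the muted process. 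The whole difficulty is to show that $\bar\lambda_{a^*}(t)$ decays like $t^{-\alpha}$ along a trajectory of non-negligible probability; one cannot establish this by conditioning directly on $\{S_{a^*}(\tau)=0\}$, since that conditioning distorts the law of the other arms, and the muting device is precisely what avoids this.

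For the decay: in the muted process $\bar\lambda_{a^*}(t)\le \theta_{a^*}^{\alpha}/\big(\theta_{a^*}^{\alpha}+\sum_{b\ne a^*}\theta_b^{\alpha}\big)=:c_0<1$ at all times, so the combined arrival-preference mass of the arms $b\ne a^*$ is always at least $1-c_0$; since each arm is pulled with probability $1/m$ and has reward mean at least $\mu_{\min}:=\min_a\mu_a>0$, the non-optimal reward count $M(t):=\sum_{b\ne a^*}S_b(t)$ stochastically dominates a $\mathrm{Binomial}(t,\rho)$ with $\rho:=\mu_{\min}(1-c_0)/m>0$. A Chernoff bound and a union bound over $t$ then make the event $\mathcal G:=\{M(t)\ge \rho t/2\text{ for all }t_0\le t\le\tau\}$ have probability at least $\tfrac12$ once the absolute constant $t_0$ is large enough. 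On $\mathcal G$, superadditivity of $x\mapsto x^{\alpha}$ for $0<\alpha<1$ gives $\sum_{b\ne a^*}N_b(t-1)^{\alpha}\ge M(t-1)^{\alpha}\ge(\rho(t-1)/2)^{\alpha}$, hence $\bar\lambda_{a^*}(t)\le C(t-1)^{-\alpha}$ for $t_0<t\le\tau$, for a constant $C=C(\alpha,\rho,\theta_{a^*})$.

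Finally, restricting the expectation above to $\mathcal G$ and bounding the product: each factor satisfies $1-\bar\lambda_{a^*}(t)\ge 1-c_0>0$, and for $t>t_0$ we also have $1-\bar\lambda_{a^*}(t)\ge 1-C(t-1)^{-\alpha}$ on $\mathcal G$; so, using $\ln(1-x)\ge -2x$ for $x\le\tfrac12$ (valid for $t$ beyond a fixed constant) and $\sum_{t\le\tau}t^{-\alpha}=O(\tau^{1-\alpha})$ for $\alpha<1$, the product on $\mathcal G$ is at least $(1-c_0)^{t_0}\exp(-c'\tau^{1-\alpha})$. Therefore
\[
\mathbb P\big(S_{a^*}(\tau)=0\big)\ \ge\ \mathbb P(\mathcal G)\,(1-c_0)^{t_0}\,\exp\!\big(-c'\tau^{1-\alpha}\big)\ \ge\ \tfrac12(1-c_0)^{t_0}\,\exp\!\big(-c'\tau^{1-\alpha}\big),
\]
which, after renaming the constant and invoking the first paragraph for small $\tau$, is the claim. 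The main obstacle is exactly the control of $\bar\lambda_{a^*}(t)$: everything hinges on the muting coupling, on the linear growth of $M(t)$ obtained by concentration, and on the $x^{\alpha}$-superadditivity trick turning $M(t)\gtrsim t$ into $\sum_{b\ne a^*}N_b(t)^{\alpha}\gtrsim t^{\alpha}$; the product manipulation and the passage back to $\mathbb P(\hat a^*\ne a^*)$ are routine.
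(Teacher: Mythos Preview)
Your argument is correct (one terminology slip: for $0<\alpha<1$ the map $x\mapsto x^\alpha$ is \emph{sub}additive, not superadditive; subadditivity is exactly what gives $\sum_{b\ne a^*}N_b^{\alpha}\ge\bigl(\sum_{b\ne a^*}N_b\bigr)^{\alpha}\ge M^{\alpha}$, so the inequality you use is right, just mislabelled).

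Your route, however, is genuinely different from the paper's. The paper fixes one competing arm $b$ and looks at the subsequence of times $\tau_1,\tau_2,\ldots$ at which a reward is credited to either $a^*$ or $b$; it then computes the probability that the first $\tau$ of these all go to $b$. Conditioned on the first $k$ going to $b$ one has $N_{a^*}=\theta_{a^*}$ and $N_b=\theta_b+k$, and because arms are pulled uniformly the conditional odds depend only on $N_{a^*}^\alpha$ and $N_b^\alpha$, yielding an explicit telescoping product
\[
\prod_{k=1}^{\tau}\frac{(\theta_b+k-1)^{\alpha}}{(\theta_b+k-1)^{\alpha}+\theta_{a^*}^{\alpha}}
\;\ge\;\exp\!\Bigl(-\sum_{k=1}^{\tau}\frac{\theta_{a^*}^{\alpha}}{(\theta_b+k-1)^{\alpha}}\Bigr)
\;=\;e^{-\Theta(\tau^{1-\alpha})}.
\]
No coupling, no concentration, no good event---just a direct urn-type calculation in five lines. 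Your approach trades this explicit race for a muting coupling plus Chernoff control of $M(t)=\sum_{b\ne a^*}S_b(t)$, which is heavier machinery but buys you a pointwise bound $\bar\lambda_{a^*}(t)\lesssim t^{-\alpha}$ on a high-probability event; this style of argument would generalise more readily to externality functions $f$ that are not exactly $f(x)=x^{\alpha}$, whereas the paper's two-arm race leans on the exact cancellation of the normalising constant in $\lambda_b/\lambda_{a^*}$.
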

Consider an experiment where each arm is pulled at random at each time $t=1,2, \ldots, \infty.$ Let $\tau_1 , \tau_2,\ldots\infty.$ be the times at which the reward obtained is $1$ while the arm being pulled is either arm $a^*$ or arm $b$. Since arms are pulled at random, we have
$$\mathbb P(I_{\tau_1} = b) = \frac{\theta_b^\alpha}{\theta_b^\alpha + \theta_{a^*}^\alpha}.$$
Note that this probability does not depend on the $\theta_a$ for $a\notin \{a^*,b\}$. Similarly, for each $k\ge1$,
$$\mathbb P\left(I_{\tau_{k+1}} = b \bigg|\bigcap_{l=1}^k I_{\tau_l} = b\right) = \frac{(\theta_b + k)^\alpha}{(\theta_b + k)^\alpha + \theta_{a^*}^\alpha}.$$

Thus,
\begin{align*}
\mathbb P\left(\bigcap_{k=1}^\tau I_{\tau_k} = b\right) 
	& = \prod_{k=1}^\tau \mathbb P\left(I_{\tau_{k}} = b \bigg|\bigcap_{l=1}^{k-1} I_{\tau_l} = b\right) \\
	& =   \prod_{k=1}^\tau \frac{(\theta_b + k-1)^\alpha}{(\theta_b + k-1)^\alpha + \theta_{a^*}^\alpha} \\
	& = \prod_{k=1}^\tau e^{ - \ln \frac{(\theta_b + k-1)^\alpha + \theta_{a^*}^\alpha}{(\theta_b + k-1)^\alpha}} \\
	& =  e^{ - \sum_{k=1}^\tau \ln   \frac{(\theta_b + k-1)^\alpha + \theta_{a^*}^\alpha}{(\theta_b + k-1)^\alpha}} \\
	& =  e^{ - \sum_{k=1}^\tau \ln (1+  \frac{ \theta_{a^*}^\alpha}{(\theta_b + k-1)^\alpha})} \\
	& \ge  e^{ - \sum_{k=1}^\tau  \frac{ \theta_{a^*}^\alpha}{(\theta_b + k-1)^\alpha}} \\
	& \ge e^{ - \sum_{k=1}^\tau  \frac{ \theta_{a^*}^\alpha}{(1+ k-1)^\alpha}} \\
	& \ge e^{- \Theta \left(\tau^{1-\alpha}\right)}.
\end{align*}

Under Random($\tau$) policy, the maximum number of successes possible by either arm $a^*$ or $b$ in the exploration phase is $\tau$.  Thus, $\mathbb P(\bigcap_{k=1}^\tau I_{\tau_k} = b) $ as computed above is a lower bound on $ P(\hat a^* \neq a^*)$. This complete the proof of the lemma.

Similar to $\alpha = 1$, here again we consider two cases: Case 1 consists of $\tau \le  \frac{c}{2\alpha}\ln^{\frac{1}{1-\alpha}T} $, and Case 2 consists of $\tau \ge  \frac{c}{2\alpha}  \ln^{\frac{\alpha}{1-\alpha}T} $, where $c$ is the constant from Lemma~\ref{lemma:alphaLess1wrongdetection}.

\underline{Case 1 ($\tau <  \frac{c}{2\alpha} \ln^{\frac{1}{1-\alpha}}T $):} Using argument similar to that for $\alpha = 1$, we have 
$$ \mathbb E[R_T]   \ge \Omega(T)  \mathbb P(\hat a \neq \hat a^*) = \Omega( T e^{- c \tau^{1-\alpha}})
 = \Omega (T e^{-\frac{\alpha}{2} \ln T})  = \Omega(T^{1-\frac{\alpha}{2}}) =\Omega(T^{1-\alpha} \ln^{\frac{\alpha}{1-\alpha}} T),$$
from which the result follows.
 
\underline{Case 2 ($\tau \ge  \frac{c}{2\alpha} \ln^{\frac{1}{1-\alpha}} T$):} From Part $(i)$ of Proposition~\ref{prop:RandomAsymptotics}, as $\tau\to \infty$ we have that
$\frac{N_{a}^\alpha(\tau)}{N_{a'}^\alpha(\tau)} $ tends to a constant for each pair of arms $a, a'$. Further, $\sum_a N_{a}(\tau) \le \tau$. Thus, we have 
$\mathbb E[N_a^\alpha(\tau)] = \Omega(\tau^\alpha)$ for each arm $a$. 
 In other words, there exists a positive constants, say $\beta$, such that $\mathbb E[ N_a(\tau)] \ge \beta \tau^\alpha$ for each $\tau$. 

Now consider the exploitation phase. Let $\Gamma'$ be the rewards accrued during this phase. We provide below a bound on $\mathbb E[\Gamma']$. 
\begin{lemma}\label{lemma:DelayedOracleUB}
The rewards accrued during exploitation phase satisfies:
$$\mathbb E[\Gamma'] \le \mu_{a^*} (T - \tau) - \sum_{t=\tau+1}^T   \frac{\beta \tau^\alpha}{t^\alpha+(\tau+\theta_b)^\alpha} .$$
\end{lemma}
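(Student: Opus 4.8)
The plan is to condition on the history $\mathcal F_\tau$ at the end of the exploration phase, which fixes both the committed arm $\hat a^*$ and the population vector $(N_{a'}(\tau))_{a'\in A}$, and then to bound the per-step reward during exploitation. By definition of the exploitation phase, $I_t=\hat a^*$ for every $t\in\{\tau+1,\dots,T\}$, so $\hat a^*$ is the only arm pulled after time $\tau$; consequently $S_b$, and hence $N_b$, are frozen at their time-$\tau$ values for every $b\neq\hat a^*$. Fix one such arm $b$ (if $\hat a^*\neq a^*$, take $b=a^*$; otherwise, since $m\ge 2$, take $b$ to be any other arm, e.g.\ the proposition's second-best arm). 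The conditional expected reward at a time $t>\tau$ is $\mathbb E[X_t\mid\mathcal F_{t-1}]=\mu_{\hat a^*}\lambda_{\hat a^*}(t)=\mu_{\hat a^*}\,N_{\hat a^*}^\alpha(t-1)\big/\sum_{a'}N_{a'}^\alpha(t-1)$.

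The key step is to convert this random quantity into the deterministic expression in the lemma. First I would drop every arm other than $\hat a^*$ and $b$ from the denominator, obtaining $\lambda_{\hat a^*}(t)\le N_{\hat a^*}^\alpha(t-1)\big/\big(N_{\hat a^*}^\alpha(t-1)+N_b^\alpha(\tau)\big)=1-N_b^\alpha(\tau)\big/\big(N_{\hat a^*}^\alpha(t-1)+N_b^\alpha(\tau)\big)$. Then I would use the surely-valid bounds $N_{\hat a^*}(t-1)=S_{\hat a^*}(t-1)+\theta_{\hat a^*}\le (t-1)+\theta_{\hat a^*}$ (at most $t-1$ rewards accrued so far) and $N_b(\tau)=S_b(\tau)+\theta_b\le \tau+\theta_b$ (at most $\tau$ rewards in exploration) to upper bound the denominator of the subtracted fraction by $(t+\theta_{a^*})^\alpha+(\tau+\theta_b)^\alpha$, which for $t\ge 1$ is at most a constant multiple of $t^\alpha+(\tau+\theta_b)^\alpha$. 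This gives the pointwise bound $\lambda_{\hat a^*}(t)\le 1-c_0\,N_b^\alpha(\tau)\big/\big(t^\alpha+(\tau+\theta_b)^\alpha\big)$ for a constant $c_0>0$, holding surely given $\mathcal F_\tau$.

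Summing over $t=\tau+1,\dots,T$, using $\mu_{\hat a^*}\le\mu_{a^*}$ in the leading term and $\mu_{\hat a^*}\ge\min_a\mu_a>0$ in the correction term, yields $\mathbb E[\Gamma'\mid\mathcal F_\tau]\le \mu_{a^*}(T-\tau)-c_1\sum_{t=\tau+1}^T N_b^\alpha(\tau)\big/\big(t^\alpha+(\tau+\theta_b)^\alpha\big)$ for a constant $c_1>0$. Finally I would take the outer expectation and invoke the estimate $\mathbb E[N_b^\alpha(\tau)]\ge\beta\tau^\alpha$ established just above the lemma (a consequence of part (i) of Proposition~\ref{prop:RandomAsymptotics}, i.e.\ the branching-process asymptotics of the random-exploration phase for $0<\alpha<1$, applied to whichever arm plays the role of $b$); after absorbing $c_1$, the denominator-comparison constant, and the relevant constant factor into a single constant again denoted $\beta$, we arrive at $\mathbb E[\Gamma']\le\mu_{a^*}(T-\tau)-\sum_{t=\tau+1}^T\beta\tau^\alpha\big/\big(t^\alpha+(\tau+\theta_b)^\alpha\big)$. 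The only mildly delicate points are making sure the comparison arm $b$ is genuinely frozen during exploitation and still satisfies $\mathbb E[N_b^\alpha(\tau)]=\Omega(\tau^\alpha)$ no matter the value of $\hat a^*$, and the harmless replacement of $(t+\theta_{a^*})^\alpha$ by $t^\alpha$ at the cost of the constant; I do not expect a genuine obstacle, since the real content of this step is already packaged in Proposition~\ref{prop:RandomAsymptotics}.
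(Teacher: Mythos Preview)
Your proposal is correct and follows essentially the same route as the paper: drop all arms except the committed arm and one comparison arm $b$, write $\lambda_{\hat a^*}(t)\le 1-N_b^\alpha(\tau)\big/\big(N_{\hat a^*}^\alpha(t-1)+N_b^\alpha(\tau)\big)$, replace the denominator by the deterministic upper bound $t^\alpha+(\tau+\theta_b)^\alpha$, sum, and finally take the outer expectation invoking $\mathbb E[N_b^\alpha(\tau)]\ge\beta\tau^\alpha$ from Proposition~\ref{prop:RandomAsymptotics}(i). The only substantive difference is that you are more explicit than the paper about the committed arm $\hat a^*$ possibly differing from $a^*$ (choosing the comparison arm accordingly and tracking the constants $c_0,c_1$), whereas the paper tacitly writes $a^*$ in place of $\hat a^*$ and absorbs the multiplicative constants directly into $\beta$.
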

The lemma can be shown as follows. 
\begin{align*}
\mathbb E[\Gamma'] &\le \mu_{a^*}  \mathbb E [\sum_{t=\tau+1}^T \lambda_a(t)] \\
		&  = \mu_{a^*} \mathbb E[\sum_{t=\tau+1}^T  \frac{N_{a^*}^\alpha(t-1)}{\sum_a N_{a^*}^\alpha(t-1)} ] \\
		&  \le \mu_{a^*} \mathbb E[\sum_{t=\tau+1}^T \frac{N_{a^*}^\alpha(t-1)}{ N_{a^*}^\alpha(t-1) +N_b^\alpha(t-1)} ] \\
		&  =  \mu_{a^*} \mathbb E[\sum_{t=\tau+1}^T \left(1-   \frac{N_{b}^\alpha(t-1)}{N_{a^*}^\alpha(t-1) +N_b^\alpha(t-1)} \right) ] \\
		& = \mu_{a^*} (T-\tau) - \sum_{t=\tau+1}^T \mathbb E[ \frac{N_{b}^\alpha(t-1)}{N_{a^*}^\alpha(t-1) +N_b^\alpha(t-1)} ] \\
		& \le \mu_{a^*} (T-\tau) - \sum_{t=\tau+1}^T \mathbb E[ \frac{N_{b}^\alpha(\tau)}{N_{a^*}(t-1) +N_b^\alpha(\tau)} ] \\
		& \le \mu_{a^*} (T - \tau) - \sum_{t=\tau+1}^T \mathbb E[ \frac{N_{b}^\alpha(\tau)}{t^\alpha+(\tau+\theta_b)^\alpha} ] \\
		& \le \mu_{a^*} (T - \tau) - \sum_{t=\tau+1}^T   \frac{\beta \tau^\alpha}{t^\alpha+(\tau+\theta_b)^\alpha} 
\end{align*}
Hence the lemma follows. Further, the maximum rewards accrued during exploration phase if $\mu_{a^*}\tau$. Thus, the overall expected rewards $\E[\Gamma]$ satisfies 
$$\E[\Gamma] \le \mu_{a^*}\tau + \E[\Gamma'] \le \mu_{a^*} T - \sum_{t=\tau+1}^T    \frac{\beta \tau^\alpha}{t^\alpha+(\tau+\theta_b)^\alpha}  .$$ 
Thus, from above inequality and from Proposition~\ref{prop:Oracle} we have

$$\E[R_T] = \E[\Gamma^*] - \E[\Gamma] \ge  - \Theta(T^{1-\alpha}) + \beta \tau^\alpha \sum_{t=\tau+1}^T   \frac{1}{t^\alpha+(\tau+\theta_b)^\alpha} $$ 

Thus, we have
%
\begin{align*}
\E[R_T] & \ge 
        - \Theta(T^{1-\alpha}) + \beta \tau^\alpha \sum_{t=\tau+1}^T   \frac{t^\alpha-(\tau+\theta_b)^\alpha}{t^{2\alpha} - (\tau+\theta_b)^{2\alpha}} \\
      & =  - \Theta(T^{1-\alpha}) + \beta \tau^\alpha \sum_{t=\tau+1}^T   \frac{t^\alpha-(\tau+\theta_b)^\alpha}{t^{2\alpha} } \\
      & =  - \Theta(T^{1-\alpha}) + \beta \tau^\alpha \sum_{t=\tau+1}^T  \frac{1}{t^{\alpha} }  -  \beta \tau^\alpha \sum_{t=\tau+1}^T \frac{(\tau+\theta_b)^\alpha}{t^{2\alpha} }  \\
      & = - \Theta(T^{1-\alpha}) + \beta \tau^\alpha \Theta(T^{1-\alpha}) - \Theta(\tau^{2\alpha} T^{1-2\alpha}) \\
      & = \Theta(\tau^\alpha \Theta(T^{1-\alpha}) ),
\end{align*}
where we use $\tau = o(T)$ for the last equality. Recall that we are considering the case where $\tau \ge  \frac{c}{2\alpha} \ln^{\frac{1}{1-\alpha}} T$. Note that the above bound takes the smallest value when $\tau = \frac{c}{2\alpha} \ln^{\frac{1}{1-\alpha}} T$. This completes the proof of the theorem.

\subsection{Proof of Theorem~\ref{thm:BEregret}}

To analyze the BE algorithm we will, as a stepping stone, analyze a slightly more general policy where $n$ is chosen arbitrarily, but still sub-linearly in $T$, as follows.

\begin{proposition}\label{prop:Balanced-Exploration}
Consider a variant of Balanced-Exploration algorithm where $n$ is allowed to be chosen arbitrarily while ensuring that it is $o(T)$.  For each $\alpha$, there exists a constant $c_\alpha$ such that the regret under Balanced-Exploration policy satisfies the following: 
\begin{enumerate}
\item If $0< \alpha < 1$ then regret is $O(n^\alpha T^{1-\alpha}  + T e^{-c_\alpha n})$. 
\item If $\alpha = 1$ then regret is $O(n \ln T + T e^{-c_1 n}))$. 
\item If $\alpha >1$ then regret is $O(n^\alpha +  T e^{-c_\alpha n}))$. 
\end{enumerate}
\end{proposition}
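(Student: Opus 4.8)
The plan is to split $\E[R_T]$ into three parts---the cost of the exploration phase, the cost of committing to a wrong arm, and the cost of the exploitation phase on the event $\{\hat a^*=a^*\}$---and to bound each using a deterministic ``balancedness'' property of the exploration rule together with the oracle estimate in Proposition~\ref{prop:Oracle}. The key structural observation is that, because at each step before $\tau_n$ we pull an arm with the smallest accrued reward, the values $S_a(t)$ stay within $1$ of one another for all $t\le\tau_n$; hence the $N_a(t)=S_a(t)+\theta_a$ lie in an $O(1)$-wide window around a common value tending to $\infty$, which gives $|\lambda_a(t)-1/m|=O\big(1/(1+\min_b S_b(t-1))\big)$ with constant depending only on $m,\alpha,\{\theta_a\}$. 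So each pull of arm $a$ in the exploration phase yields a reward with conditional probability $\mu_a\lambda_a(t)=\tfrac{\mu_a}{m}\big(1+O(1/\text{level})\big)$; summing these corrections over reward levels $1,\dots,n$ costs only $O(\ln n)$, and a Chernoff/Azuma argument then shows that with probability $\ge 1-e^{-c_\alpha n}$ every arm needs $T_a(\tau_n)=\tfrac{nm}{\mu_a}(1+o(1))$ pulls to collect $n$ rewards. In particular $\tau_n=\Theta(n)$ w.h.p., $\E[\tau_n\wedge T]=O(n)+Te^{-c_\alpha n}$, and $\P(\tau_n=T)=O(e^{-c_\alpha n})$ since $n=o(T)$. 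One also checks from the pull rule that on $\{\tau_n<T\}$ every arm has \emph{exactly} $n$ rewards at $\tau_n$; since $\hat a^*=\arg\inf_a T_a(\tau_n)$ picks the arm that reached $n$ rewards fastest and the per-pull success probabilities are $\tfrac{\mu_a}{m}(1+o(1))$, whose means $nm/\mu_a$ are separated by $\Theta(n)$ while fluctuations are $O(\sqrt n)$, a moderate-deviation bound gives $\P(\hat a^*\neq a^*)\le e^{-c_\alpha n}$ (after shrinking $c_\alpha$).

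Then I would assemble the bound. Using $\Gamma_T^*\le\mu_{a^*}T$ and discarding the nonnegative exploitation rewards on $\{\hat a^*\neq a^*\}$,
\[
\E[R_T]\le\Big(\mu_{a^*}\E[\tau_n]-\E\big[\textstyle\sum_{t\le\tau_n}X_t\big]\Big)+\E\Big[\big(\mu_{a^*}(T-\tau_n)-\textstyle\sum_{\tau_n<t\le T}X_t\big)\mathbbm 1(\hat a^*=a^*)\Big]+\mu_{a^*}\E\big[(T-\tau_n)\mathbbm 1(\hat a^*\neq a^*)\big].
\]
The first term is $O(n)+O(Te^{-c_\alpha n})$, since $\E[\sum_{t\le\tau_n}X_t]=\E[\sum_a S_a(\tau_n)]=mn$ on $\{\tau_n<T\}$ and $\mu_{a^*}\E[\tau_n]=O(n)+O(Te^{-c_\alpha n})$; the third term is $\le\mu_{a^*}T\,\P(\hat a^*\neq a^*)=O(Te^{-c_\alpha n})$. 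For the middle term I would condition on $\mathcal F_{\tau_n}$: on $\{\hat a^*=a^*\}$ arm $a^*$ is pulled at every $t>\tau_n$, so the $N_b(t)$, $b\neq a^*$, are frozen at $N_b(\tau_n)$ and the process on $(\tau_n,T]$ is an instance of the oracle problem with ``initial biases'' $N_a(\tau_n)$; Part 2 of Proposition~\ref{prop:Oracle} (in conditional form) bounds this term by $\E\big[\beta_n\sum_{k=1}^{T}(k+N_{a^*}(\tau_n))^{-\alpha}\big]+O(1)$ with $\beta_n:=\sum_{b\neq a^*}N_b^\alpha(\tau_n)$. On $\{\tau_n<T\}$ we have $N_b(\tau_n)=n+\theta_b$, so $\beta_n=O(n^\alpha)$, while the term vanishes on $\{\tau_n=T\}$; combined with $\sum_{k\le T}k^{-\alpha}=O(T^{1-\alpha})$ for $\alpha<1$, $O(\ln T)$ for $\alpha=1$, $O(1)$ for $\alpha>1$, the middle term is $O(n^\alpha T^{1-\alpha})$, $O(n\ln T)$, $O(n^\alpha)$ respectively. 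Since the $O(n)$ exploration cost is dominated in each regime (e.g. $n^\alpha T^{1-\alpha}\ge n^\alpha n^{1-\alpha}=n$ as $T\ge n$), summing the three pieces gives exactly the claimed $O(n^\alpha T^{1-\alpha}+Te^{-c_\alpha n})$, $O(n\ln T+Te^{-c_1 n})$, $O(n^\alpha+Te^{-c_\alpha n})$.

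The step I expect to be the main obstacle is the concentration underlying $\P(\hat a^*\neq a^*)\le e^{-c_\alpha n}$ and $\E[\tau_n\wedge T]=O(n)$: the number of pulls for arm $a$ to reach $n$ rewards is not a clean negative binomial because the per-pull success probability $\mu_a\lambda_a(t)$ drifts with the (shared) reward level and the arms are coupled. This is exactly where the deterministic balancedness bound is needed---it pins every $\lambda_a(t)$ to $\tfrac1m(1+O(1/\text{level}))$, so $T_a(\tau_n)$ can be sandwiched between negative-binomial-type variables with parameters $\tfrac{\mu_a}{m}(1\pm o(1))$, after which standard exponential tail bounds and the $\Theta(n)$ separation of the means finish the argument.
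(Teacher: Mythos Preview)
Your proposal is correct and follows essentially the same route as the paper: both arguments hinge on the deterministic balancedness $|S_a(t)-S_b(t)|\le 1$ during exploration to pin $\lambda_a(t)$ near $1/m$, then use this to show $\E[\tau_n]=O(n)$ and $\P(\hat a^*\neq a^*)\le e^{-c_\alpha n}$ via geometric/negative-binomial concentration, and finally invoke Proposition~\ref{prop:Oracle} with shifted initial biases $N_a(\tau_n)=n+\theta_a$ for the exploitation phase. The only cosmetic difference is that you split $\E[R_T]$ into three pieces (exploration, correct-exploitation, wrong-exploitation) using $\Gamma_T^*\le \mu_{a^*}T$, whereas the paper uses the two-term bound $\E[R_T]\le \E[R_T\mid \hat a^*=a^*]+T\,\P(\hat a^*\neq a^*)$ and discards exploration rewards entirely; both lead to the same final estimates.
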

We now prove this proposition, and later use it to prove the theorem. 
  
By Law of Total Expectation, we have
\begin{align}
\mathbb E[R_T] & =  \mathbb E[R_T| \hat a^* = a^*] \mathbb P(\hat a^* = a^*) +   \mathbb E[R_T|\hat a \neq \hat a^*] \mathbb P(\hat a^* = a^*) \nonumber  \\
& \le  \mathbb E[R_T|\hat a^* = a^*]  +   T\mathbb P(\hat a^* \neq a^*). \label{eq:RtotalExpectation}
\end{align}

We first obtain a bound on $\mathbb E[R_T|\hat a^* = a^*]$ and then on $\mathbb P(\hat a^* \neq a^*)$, from which the proposition would follow. 

From the definition of cumulative regret we have 
$$ \mathbb E[R_T|\hat a^* = a^*] =  \mathbb E[\Gamma^*_T] - \mathbb E[\Gamma_T|\hat a^* = a^*]$$

 We can lower-bound total rewards obtained by only counting rewards obtained during from time $\tau_n+1$ to $T$, i.e.,
$$\mathbb E[\Gamma_T|\hat a^* = a^*] \ge \mathbb E[\Gamma_{\text{exploit}}|\hat a^* = a^*],$$
where  $\Gamma_{\text{exploit}}$ represents cumulative rewards obtained during the exploitation phase.

Thus, we get

\begin{equation}\label{eq:RgivenOptimal}
\mathbb E[R_T|\hat a^* = a^*] \le \mathbb E[\Gamma^*_T] -  \mathbb E[\Gamma_{\text{exploit}}|\hat a^* = a^*].
\end{equation}


We now obtain a lower bound on $\E[\Gamma_{\text{exploit}}|\hat a^* = a^*]$. 
Note that $N_a(\tau_n) = n  + \theta_a $ for each arm $a$.  A lower bound on $\E[\Gamma_{\text{exploit}}|\hat a = \hat a^*, \tau_n]$ is obtained using an argument same as to that used for obtaining the lower bound on $\E[\Gamma^*]$ in Proposition \ref{prop:Oracle}, with $\theta^\alpha $ replaced with $\sum_{a\neq a^*} (n + \theta_a )^\alpha$ and looking at times $\tau_n+1$ to $T$ instead of times $1,\ldots,T$. Thus, we get

$$\mathbb E[\Gamma_{\text{exploit}}|\hat a^* = a^*,\tau_n]  \ge  \mu_{a^*} (T - \tau_n)  - \left(\sum_{a\neq a^*} (n + \theta_a )^\alpha\right) \sum_{k=\tau_n}^{T} \frac{1}{(k+\theta_{a^*})^\alpha}-1.
$$

Taking expectation w.r.t. $\tau_n$, we get
\begin{multline*} 
\mathbb E[\Gamma_{\text{exploit}}|\hat a^* = a^*]  \ge  \mu_{a^*} (T - \mathbb E[\tau_n]) \\ - \left(\sum_{a\neq a^*} (n + \theta_a )^\alpha\right) \mathbb E \left[ \sum_{k=\tau_n}^{T} \frac{1}{(k+\theta_{a^*})^\alpha} \right]-1.
\\
\ge \mu_{a^*} (T - \mathbb E[\tau_n]) -  \left(\sum_{a\neq a^*} (n + \theta_a )^\alpha\right) \left[ \sum_{k=1}^{T} \frac{1}{(k+\theta_{a^*})^\alpha} \right]-1.
\end{multline*}

Using the above bound and Part $1.$ of Proposition \ref{prop:Oracle} in \eqref{eq:RgivenOptimal} we obtain,

\begin{multline*}
\mathbb E[R| \hat a^* = a^*] \le T \mu_{a^*} - \mu_{a^*}\theta^\alpha\sum_{k=1}^T \frac{1}{(\mu_{a^*} k)^\alpha + \theta^\alpha} \\ -  \mu_{a^*} (T - \mathbb E[\tau_n])  + \left(\sum_{a\neq a^*} (n + \theta_a )^\alpha\right) \sum_{k=1}^{T} \frac{1}{(k+\theta_{a^*})^\alpha}+1. 
\end{multline*}
Thus, we obtain
$$\mathbb E[R| \hat a^* = a^*] \le \mu_{a^*} \mathbb E[\tau_n] - \mu_{a^*}\theta^\alpha\sum_{k=1}^T \frac{1}{(\mu_{a^*} k)^\alpha + \theta^\alpha}  + \left(\sum_{a\neq a^*} (n + \theta_a )^\alpha\right) \sum_{k=1}^{T} \frac{1}{(k+\theta_{a^*})^\alpha}+1. 
$$

We now show that $\mathbb E[\tau_n] = O(n) $. During exploration phase, the algorithm operates in $n$ cycles, where at the beginning of cycle $k$ the $N_a$ for each arm $a$ is equal to $k+\theta_a-1$, and it equals to $k+\theta_a$ at the end of the cycle. Thus, when arm $a$ is pulled, the probability that it obtains a unit reward is at least $\frac{(\theta_a + k-1)}{\sum_{b\in A}(\theta_b + k)} \mu_a$. Thus, it takes $O(1)$ expected number of attempts on an arm to obtain a unit reward in each cycle. Thus, to obtain $n$ rewards at all arms it takes $\mathbb E[\tau_n] = O(n) $ time.


Thus, for $0<\alpha<1$ we have
\begin{align*}
\mathbb E[R| \hat a^* = a^*]  &\le  \mu_{a^*} O(n) - \Omega(T^{1-\alpha}) + O(n^\alpha T^{1-\alpha})  \\
&  = O(n^\alpha T^{1-\alpha}) .
\end{align*}

Similarly  we obtain that $\mathbb E[R| \hat a^* = a^*] $ is $O(n \ln T)$ for $\alpha = 1$ and it is $O(n)$ for $\alpha>1$.  


Thus, the proposition would follow if we show that $\mathbb P(\hat a^* \neq a^*) \le e^{-c_\alpha n}$ for some positive constant $c_\alpha$. We show that below.  We start with special case where $\theta_a = 1$ for each $a$.

\begin{lemma}\label{lemma:conc_muhat}
Suppose $\theta_a = 1$ for each $a\in A$. Let $\delta = \min_{a\neq a^*} (\mu_{a^*} - \mu_a)$. For each arm $b$, there exists a constant $c_b$ independent of $n$ such that 
$$\mathbb P\left( \hat \mu_{b}(\tau_n)  > \mu_{b} + \frac{\delta}{2} \right) \le e^{-c_b n}.$$
Similarly, there exists a constant $c'_b$ independent of $n$ such that 
$$\mathbb P\left( \hat \mu_{a^*}(\tau_n)  < \mu_{a^*} -  \frac{\delta}{2} \right) \le e^{-c'_b n}.$$
\end{lemma}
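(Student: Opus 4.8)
The plan is to prove Lemma~\ref{lemma:conc_muhat} by showing that the empirical estimator $\hat\mu_b(\tau_n)$ concentrates around $\mu_b$ even though the number of pulls $T_b(\tau_n)$ is random and the increments are not i.i.d.\ (the success probability of a pull of arm $b$ changes over time as $N_b$ grows). The key observation is that during the exploration phase, by construction, at the stopping time $\tau_n$ each arm has accrued exactly $n$ rewards, i.e.\ $S_b(\tau_n) = n$ for every $b$ (assuming $\tau_n < T$; the event $\tau_n = T$ can be absorbed into the failure probability or handled separately since it is itself exponentially unlikely). So the real randomness is in $T_b(\tau_n)$, the number of pulls of arm $b$ needed to collect $n$ successes, and $\hat\mu_b(\tau_n) = n / T_b(\tau_n)$. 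Thus the event $\{\hat\mu_b(\tau_n) > \mu_b + \delta/2\}$ is exactly the event $\{T_b(\tau_n) < n/(\mu_b + \delta/2)\}$: arm $b$ collected $n$ successes in ``too few'' pulls.

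First I would set up the sequence of pulls of arm $b$ during exploration. Let the $k$-th time arm $b$ is pulled, the success probability is $\mu_b \cdot \lambda_b(\cdot)$ conditioned on $b \in J_t$; but wait — in this model a pull of arm $b$ yields reward $1$ only if the arriving user prefers $b$, which happens with probability $\lambda_b(t)$, and then the Bernoulli($\mu_b$) reward is drawn. So the per-pull success probability when $N_b$ equals some value $\nu$ and the denominator is $D$ is $\mu_b \nu^\alpha / D$, which is \emph{bounded above by $\mu_b$} regardless of the state. This is the crucial monotonicity: each pull of arm $b$ succeeds with conditional probability at most $\mu_b$. Therefore the number of pulls $T_b(\tau_n)$ needed to accumulate $n$ successes stochastically dominates a sum of $n$ i.i.d.\ Geometric($\mu_b$) random variables (more precisely, it dominates the number of Bernoulli($\mu_b$) trials needed for $n$ successes). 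A standard Chernoff bound on the negative binomial / sum of geometrics then gives $\mathbb P(T_b(\tau_n) < n/(\mu_b+\delta/2)) \le e^{-c_b n}$ for a constant $c_b = c_b(\mu_b,\delta) > 0$, since $n/(\mu_b+\delta/2) < n/\mu_b = $ the mean number of trials.

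For the second inequality — the lower deviation of $\hat\mu_{a^*}(\tau_n)$, equivalently $T_{a^*}(\tau_n) > n/(\mu_{a^*}-\delta/2)$, i.e.\ arm $a^*$ needed ``too many'' pulls — I need a lower bound on the per-pull success probability of $a^*$, which is more delicate because $\lambda_{a^*}(t)$ could in principle be small if other arms' $N$-values are large. But during exploration the algorithm pulls the arm with least accrued reward, so the $S_a$ values (hence $N_a = S_a + 1$ values) stay within $O(1)$ of each other across all arms at all times in the exploration phase; concretely $|S_a(t) - S_{a'}(t)| \le 1$ at the end of each ``round.'' Hence $\lambda_{a^*}(t) \ge \nu^\alpha/(m(\nu+1)^\alpha) \ge 1/(m\cdot 2^\alpha)$ or some such constant lower bound $c > 0$ uniformly over the exploration phase (this is exactly the constant $c$ appearing in the BE-AE definition). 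So each pull of $a^*$ succeeds with conditional probability at least $c\,\mu_{a^*} > 0$, and $T_{a^*}(\tau_n)$ is stochastically dominated by a negative binomial with $n$ successes and success probability $c\mu_{a^*}$; a Chernoff bound gives $\mathbb P(T_{a^*}(\tau_n) > n/(\mu_{a^*}-\delta/2))\le e^{-c'_b n}$ — here I need to be slightly careful that the threshold $n/(\mu_{a^*}-\delta/2)$ is still below the mean $n/(c\mu_{a^*})$, which holds as long as $c < (\mu_{a^*}-\delta/2)/\mu_{a^*}$; if not, the bound is even easier since the threshold is below the mean of the \emph{dominating} variable and we can instead compare directly. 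In all cases the deviation is a constant factor, so the Chernoff exponent is linear in $n$.

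The main obstacle is handling the dependence structure cleanly: $T_b(\tau_n)$ is a stopping time and the increments (did pull $k$ of arm $b$ succeed?) are not independent, since their conditional success probabilities depend on the entire past (other arms' pulls interleave). The right tool is a stochastic-domination / martingale argument: couple the true process to an i.i.d.\ Bernoulli sequence by noting the conditional success probability is always sandwiched in $[c\mu_b, \mu_b]$ (upper bound trivial, lower bound from balancedness), and then ``number of trials to reach $n$ successes'' is monotone in the Bernoulli sequence. Equivalently, one can apply an optional-stopping / exponential-supermartingale argument directly to $\sum (X_k - p_k)$ where $p_k$ is the conditional success probability of the $k$-th pull. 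I expect the clean write-up to go through stochastic domination rather than explicit martingales, and the remaining steps (the two Chernoff bounds for negative binomial tails) are routine. The extension from $\theta_a = 1$ to general integer $\theta_a$, and then the union bound over arms $b$ to get $\mathbb P(\hat a^* \neq a^*)\le e^{-c_\alpha n}$ in the parent Proposition~\ref{prop:Balanced-Exploration}, follow immediately since $\{\hat a^* \neq a^*\} \subseteq \bigcup_{b\neq a^*}\{\hat\mu_b(\tau_n)\ge \hat\mu_{a^*}(\tau_n)\}$ and on the complement of the events in the lemma the empirical means are separated.
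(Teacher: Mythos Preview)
Your reduction $\hat\mu_b(\tau_n) = n/T_b(\tau_n)$ and the stochastic-domination argument for the first inequality are correct; in fact your bound $\lambda_b(t)\le 1$ is even simpler than the paper's and already suffices for that direction.

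The second inequality, however, does not follow from the crude lower bound $\lambda_{a^*}(t)\ge c$ that you invoke. The issue you flag and then dismiss is fatal: with $c$ of order $1/m$, the per-pull success probability of $a^*$ is only about $\mu_{a^*}/m$, so $T_{a^*}(\tau_n)$ has mean roughly $nm/\mu_{a^*}$ and $\hat\mu_{a^*}(\tau_n)$ concentrates near $\mu_{a^*}/m$, not near $\mu_{a^*}$. For $m\ge 2$ this is typically well below $\mu_{a^*}-\delta/2$, so $\{\hat\mu_{a^*}(\tau_n) < \mu_{a^*}-\delta/2\}$ is not a rare event at all. Concretely, for the upper-tail Chernoff bound on the dominating negative binomial to bite you need the threshold $n/(\mu_{a^*}-\delta/2)$ to lie \emph{above} its mean $n/(c\mu_{a^*})$, i.e.\ $c > 1-\delta/(2\mu_{a^*})$; your inequality is written in the wrong direction, and with $c\approx 1/m$ it fails. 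The ``compare directly'' escape does not rescue the argument.

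What the paper uses, and what your sketch is missing, is that during balanced exploration the $N_a$'s do not merely stay within a bounded ratio but actually \emph{equalize}: since all $S_a$'s are within $1$ of each other and grow to $n$, one has $\lambda_a(t)\to 1/m$ for every arm. Concretely, for any $\epsilon>0$ there is a constant $k_\epsilon$ (independent of $n$) such that after the $k_\epsilon$-th cycle $(1-\epsilon)/m \le \lambda_a(t) \le (1+\epsilon)/m$ for all $a$. Hence each pull of arm $a$ succeeds with probability in $[\mu_a(1-\epsilon)/m,\ \mu_a(1+\epsilon)/m]$, and the geometric-sum concentration then pins $\hat\mu_a(\tau_n)$ in a small window around $\mu_a/m$ with failure probability $e^{-\Theta(n)}$. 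The common $1/m$ factor cancels when comparing arms, so with high probability $\hat\mu_{a^*}(\tau_n) > \hat\mu_b(\tau_n)$ for every $b\ne a^*$, which is exactly what Proposition~\ref{prop:Balanced-Exploration} needs. (Read literally, the second displayed inequality in the lemma is thus stronger than what the paper's argument actually delivers or requires; the operative statement is concentration of each $\hat\mu_a$ around $\mu_a/m$.)
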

To prove the lemma, note that for each small constant $\epsilon>0$ there exists an integer constant $k_\epsilon$ such that for each time $t$ after the $k_\epsilon\kth$ cycle, we have $(1-\epsilon)/m \le \lambda_b(t) \le (1+\epsilon)/m$ for each arm $b$. Thus, after a constant $k_{\frac{\delta}{4\mu_b}}$ number of pulls of arm $b$, we have that each pull of arm $b$ results into a success with probability no larger than $\mu_b(1+\frac{\delta}{4\mu_b})/m$ which equals $\frac{1}{m}(\mu_b + \frac{\delta}{4})$. Thus, when arm $b$ is pulled, time to each success is a Geometric random variable with rate less than or equal to $\frac{1}{m}(\mu_b + \frac{\delta}{4})$. Thus, the first part of the lemma follows from standard exponential concentration result for independent Geometric random variables \cite{DeZ98}. Second part of the lemma follows similarly.

Thus, the proposition follows for the case where $\theta_a = 1$ for each $a$. For general values of $\theta_a$ essentially the same argument applies by observing that for each small constant $\epsilon>0$ there exists an integer constant $k_\epsilon$ such that for each time $t$ after $k_\epsilon\kth$ cycle, we have $(1-\epsilon)/m \le \lambda_a(t) \le (1+\epsilon)/m$ for each arm $a$. Since $k_\epsilon$ does not depend on $n$, the concentration arguments above still hold.  This completes the proof for the proposition. 

Now, recall that in the statement of Theorem~\ref{thm:BEregret} where we set $n = w_T \ln T$. Since $w_k$ is $\omega(k)$, there exists $k$ such that $w_k \ge 2 c_\alpha$. Thus, $\mathbb P(\hat a^* \neq a^*) = O(1/T^2)$. This completes the proof of Theorem~\ref{thm:BEregret}.

\subsection{Proof of Theorem~\ref{thm:BEAEregret}}

We will prove the result for $\alpha <1$. The result for general $\alpha$ follows using essentially the same argument. Similar to the BE algorithm, the BE-AE algorithm can be thought of as containing exploration phase and exploitation phase. The exploration phase consists of times $t= 0\ldots \tilde t$ where $\tilde t = \max(t\le T: |A(t)| \ge 2)$, and the exploitation phase consists of times $t> \tilde t$. Let the arm active during the exploitation phase be denoted by $\hat a^*$. Then, similar to proof of Proposition~\ref{prop:Balanced-Exploration}, we have 

\begin{align}
\mathbb E[R_T]   
& \le  \mathbb E[R_T|\hat a^* = a^*]  +   T\mathbb P(\hat a^* \neq a^*) \\
& \le \sum_{a\neq a^*} \E[T_a(\tilde t)]  + \sum_{a\neq a^*} \E[ N_a(\tilde t)] T^{1-\alpha} +   T\mathbb P(\hat a^* \neq a^*) \\
& \le  \sum_{a\neq a^*} \E[T_a(T)] + \sum_{a\neq a^*} \E[T_a(T) + \theta_a] T^{1-\alpha} +  +   T\mathbb P(\hat a^* \neq a^*) \\
\end{align}

Thus, it is sufficient to show that $P(\hat a^* \neq a^*) = O(T^{-1})$ and that $\E[T_a(T)] = O(\ln T)$. In turn, it sufficient to show that $\P(\exists t \text{ s.t. }a^* \notin A(t))= O(T^{-1})$ and that $\E[T_a(T)] = O(\ln T)$. We do that below. We will use the following lemmas, proven in Section~\ref{sec:LemmasBEAEregret}.

\begin{lemma}\label{lemma:lambda_bound}
We have $\lambda_a(t) \ge c$ for each $t$ and each $a \in A(t)$. 
\end{lemma}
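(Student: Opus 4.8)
The plan is to exploit the defining feature of BE--AE --- that among active arms it always pulls the one with the least accrued reward --- to show that the reward counts $S_a$ stay essentially balanced across \emph{all} arms, and then to translate this into the claimed lower bound on the arrival probability $\lambda_a(t)$.

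First I would establish a balancing invariant by induction on $t$: at every time $t\ge 1$ the values $\{S_a(t-1): a\in A(t)\}$ lie in an interval of length at most one, i.e.\ $|S_a(t-1)-S_{a'}(t-1)|\le 1$ for all $a,a'\in A(t)$. The base case is immediate since $S_a(0)=0$ for all $a$. For the inductive step, the only arm whose reward count changes at time $t$ is $I_t$, which by definition attains $\min_{a\in A(t)}S_a(t-1)$; incrementing the minimum by at most one keeps the family within a window of width one, and passing to the subset $A(t+1)\subseteq A(t)$ only shrinks the window. Next I would extend this to eliminated arms: for every $t$, every $a\in A(t)$ and every $b\in A$ one has $S_b(t-1)\le S_a(t-1)+1$. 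If $b\in A(t)$ this is the invariant; if $b$ was eliminated, let $t_b$ be the last time $b$ was active. Then $S_b$ is frozen after $t_b$, so $S_b(t-1)=S_b(t_b)$, and (by the inductive-step form of the argument) the values $\{S_c(t_b):c\in A(t_b)\}$ still lie in a window of width one; since active sets are nested and each $S_a$ is nondecreasing, $S_b(t_b)\le S_a(t_b)+1\le S_a(t-1)+1$ for any $a\in A(t)\subseteq A(t_b)$.

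Finally I would convert balanced rewards into the bound on $\lambda_a$. Writing $N_a(t-1)=S_a(t-1)+\theta_a$ and using the previous paragraph, for $a\in A(t)$ and any $b$,
\[
\frac{N_a(t-1)}{N_b(t-1)}\ \ge\ \frac{S_a(t-1)+\theta_a}{S_a(t-1)+1+\theta_b}\ \ge\ \min\!\left(1,\frac{\theta_a}{1+\theta_b}\right)\ \ge\ mc,
\]
where the middle inequality holds because $x\mapsto\frac{x+\theta_a}{x+1+\theta_b}$ is monotone on $[0,\infty)$ with value $\frac{\theta_a}{1+\theta_b}$ at $x=0$ and limit $1$, and the last inequality holds since $mc=\min_{a',b'}\frac{\theta_{a'}}{1+\theta_{b'}}\le\frac{\theta_a}{1+\theta_b}$ and also $mc<1$. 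Hence, since $f(x)=x^\alpha$,
\[
\lambda_a(t)\ =\ \Big(\sum_{a'\in A}\big(N_{a'}(t-1)/N_a(t-1)\big)^\alpha\Big)^{-1}\ \ge\ \big(m\,(mc)^{-\alpha}\big)^{-1}\ =\ \frac{(mc)^\alpha}{m},
\]
and for $\alpha\le 1$, using $mc\le 1$, this is at least $mc/m=c$, as claimed. (For $\alpha>1$ the identical argument yields $\lambda_a(t)\ge (mc)^\alpha/m$, a positive constant depending only on $m,\alpha$ and the $\theta_a$; the quantity $c$ entering the confidence radii should then be read as this constant, which affects nothing downstream.)

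The only real subtlety --- and the part I would write most carefully --- is making the balancing invariant airtight across arm eliminations, i.e.\ verifying that once an arm leaves the active set its frozen reward count never exceeds by more than one the count of any arm that remains. This requires attention to the ordering of pulls versus eliminations within a time step and to the nesting $A(t+1)\subseteq A(t)$; everything else reduces to the short monotonicity computation above.
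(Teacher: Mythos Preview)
Your approach is essentially the paper's: establish that $|S_a(t)-S_b(t)|\le 1$ for active arms and $S_b\le S_a+1$ for eliminated $b$, then convert this into a lower bound on $\lambda_a(t)$ via the ratio $N_a/N_b$. The paper simply asserts the balancing invariant and writes the final chain of inequalities; your inductive treatment of eliminations is a welcome sharpening. One point worth noting: the paper's displayed computation $\lambda_a(t)=\frac{S_a(t)+\theta_a}{\sum_b(S_b(t)+\theta_b)}$ is literally only the case $\alpha=1$, whereas your argument works cleanly for all $\alpha\le 1$ and correctly flags that for $\alpha>1$ the constant becomes $(mc)^\alpha/m$ rather than $c$ --- indeed the lemma as stated with the paper's $c$ can fail for $\alpha>1$ (e.g.\ at $t=1$ with unequal $\theta_a$'s), so your caveat there is not just caution but necessary.
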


\begin{lemma}\label{lemma:Confidence1}
For each arm $a\in A$ we have
\begin{enumerate}
\item $\P(\exists t\le T \text{ s.t. } u_a(t) \le \mu_a) \le T^{-1}$ 
\item $\P(\exists t\le T \text{ s.t. } l_a(t) \ge \mu_a) \le T^{-1}$ 
\end{enumerate}
\end{lemma}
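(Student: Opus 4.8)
\textbf{Proof plan for Lemma~\ref{lemma:Confidence1}.} Both parts will follow from a single uniform concentration statement: with probability at least $1-T^{-1}$ we have $|\hat\mu_a(t)-\mu_a|\le 5\sqrt{\ln T/(c\,T_a(t))}$ simultaneously for every $t\le T$ with $T_a(t)\ge 1$. On that event $u_a(t)=\hat\mu_a(t)+5\sqrt{\ln T/(cT_a(t))}>\mu_a$ and $l_a(t)<\mu_a$ for all such $t$ (the cases $T_a(t)=0$ being vacuous by convention), which is exactly the negation of both displayed events, so it suffices to bound the probability of this uniform deviation.

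The structural point is that $\hat\mu_a$ is an inverse-propensity-weighted (importance-sampled) estimator that is unbiased in the martingale sense. Let $\mathcal F_k$ be the history through time $k$, let $\tau^a_1<\tau^a_2<\cdots$ be the successive pull times of arm $a$, and set $Y_i:=X_{\tau^a_i}/\lambda_a(\tau^a_i)$. Since the BE--AE rule makes $I_k$ a deterministic function of $\mathcal F_{k-1}$ (it depends only on $A(k)$ and on $S_\cdot(k-1)$), and $\lambda_a(k)$ is a function of $N_a(k-1)$ alone, on $\{I_k=a\}$ the reward $X_k$ is conditionally Bernoulli with mean $\lambda_a(k)\mu_a$, so $\E[X_k/\lambda_a(k)\mid\mathcal F_{k-1}]=\mu_a$. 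Hence $M_j:=\sum_{i=1}^j(Y_i-\mu_a)$ is a martingale in the pull-count clock, with $\hat\mu_a(t)-\mu_a=M_{T_a(t)}/T_a(t)$. Moreover $\{I_k=a\}\subseteq\{a\in A(k)\}$, so Lemma~\ref{lemma:lambda_bound} gives $\lambda_a(\tau^a_i)\ge c$; therefore $Y_i\in[0,1/c]$, the increments obey $|Y_i-\mu_a|\le 1/c$ (recall $c<1$), and their conditional variances are at most $\mu_a/\lambda_a(\tau^a_i)\le 1/c$.

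Next I would apply a Freedman/Bernstein-type martingale tail bound to $M_j$ for each fixed $j$, at deviation level $x=5\sqrt{j\ln T/c}$. The normalization by $c$ under the square root in the definition of $u_a,l_a$ is tuned so that the exponent $x^2\big/\!\big(2(j/c+x/(3c))\big)$ is $\Theta(\ln T)$ with a constant comfortably larger than $2$, giving $\P(|M_j|\ge x)\le 2T^{-2}$ for every $j$ for which the Bernstein variance term governs the bound; for the remaining small $j$ (equivalently small $T_a(t)$) the confidence half-width $5\sqrt{\ln T/(cj)}$ already exceeds $1/c$, hence exceeds the a priori range of $\hat\mu_a(t)\in[0,1/c]$, so $|\hat\mu_a(t)-\mu_a|>5\sqrt{\ln T/(cj)}$ cannot occur at all. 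A union bound over $j=1,\dots,T$ (legitimate since $T_a(t)\le t\le T$, and using optional sampling to turn the random sample size into a fixed-$j$ statement) then yields $\le T\cdot 2T^{-2}\le T^{-1}$, as required for both parts.

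The main obstacle is closing the constants in the concentration step uniformly in the system parameters: a plain Hoeffding bound (range $1/c$) would lose a factor of $c$ in the exponent and fail for small $c$, so one must exploit that the conditional variance is also only $O(1/c)$, and then check — a careful but routine constant chase — that the factor $5$ together with the Bernstein range-correction term still leaves an exponent exceeding $2\ln T$ across all regimes of $T_a(t)$ and all admissible $c$. The secondary technical point is the optional-sampling justification that makes the per-$j$ tail bound applicable to the random number of pulls $T_a(t)$; both of these are standard once the martingale $M_j$ and the bounds $Y_i\in[0,1/c]$ from Lemma~\ref{lemma:lambda_bound} are in place.
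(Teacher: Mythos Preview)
Your overall approach is essentially the paper's: the same inverse-propensity martingale in the pull-count clock (constructed via optional stopping on the times $\tau_k$ when arm $a$ is pulled), the same appeal to Lemma~\ref{lemma:lambda_bound} for $\lambda_a\ge c$, Freedman's inequality applied at each fixed $j$, a triviality argument for small $j$, and a final union bound over $j\le T$.

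The one substantive divergence is that you treat the two tails symmetrically via the full range $Y_i\in[0,1/c]$, and then defer the ``routine constant chase.'' That chase does not in fact close uniformly in $c$ in your framing. With increment bound $\epsilon=1/c$, the Freedman exponent at deviation $x=5\sqrt{j\ln T/c}$ reaches $2\ln T$ only once $j$ is of order $c^{-1}\ln T$; but your small-$j$ triviality (half-width exceeds $1/c$) covers only $j<25c\ln T$. For small $c$ these regimes do not overlap, so neither argument handles the intermediate range. The paper's proof of part~1 avoids this by exploiting the one-sided structure you discard: the relevant increments $\mu_a-Y_i$ are bounded \emph{above} by $\mu_a\le 1$ (not $1/c$), so Freedman runs with $\epsilon=1$; and for the triviality step it suffices that the half-width exceed $1\ge\mu_a$ (since $\hat\mu_a\ge 0$), which gives the much larger threshold $T_a(t)<25c^{-1}\ln T$. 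With $\epsilon=1$ and this threshold the two regimes meet exactly, and the exponent is at least $25(\ln T)/4>2\ln T$ throughout. Part~2 is then handled by the mirror martingale $-Y_k$. So your plan is right in outline, but the step you flag as the main obstacle really does require working one tail at a time and using $\mu_a\le 1$ rather than the cruder range $1/c$.
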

 
\begin{lemma}\label{lemma:Confidence2}
Let $\delta = \min_{a\neq a^*} (\mu_{a^*} - \mu_a)$. There exists a constant $\beta$ such that  we have 
\begin{enumerate} 
\item $\P(\exists t\le T \text{ s.t. } T_a(t) \ge \beta \ln T,   u_a(t) \ge \mu_a + \delta/2 ) \le T^{-1}$ 
\item $\P(\exists t\le T \text{ s.t. }  T_a(t) \ge \beta \ln T,  l_a(t) \le \mu_a - \delta/2) \le T^{-1}$ 
\end{enumerate}
\end{lemma}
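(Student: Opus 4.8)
The plan is to prove both parts of Lemma~\ref{lemma:Confidence2} with a single time-uniform concentration argument for the importance-weighted estimator $\hat\mu_a$, peeling the ``for all $t\le T$'' quantifier into a union bound over the at most $T$ possible values of $T_a(t)$. This is the same idea as in the proof of Lemma~\ref{lemma:Confidence1}; the only difference is that here we are detecting a \emph{constant}-size deviation $\delta/4$ (because $T_a(t)\ge\beta\ln T$ collapses the confidence radius below $\delta/4$), so a crude bounded-increment martingale bound already suffices.

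\textbf{Step 1 (martingale structure).} Let $k_1<k_2<\cdots$ be the times at which arm $a$ is pulled, and put $Y_j=X_{k_j}/\lambda_a(k_j)$. Both $\lambda_a(k)$ and $I_k$ are measurable with respect to the history strictly before time $k$, and conditionally on that history and on $\{I_k=a\}$ the reward is $X_k=\mathbbm 1(a\in J_k)B_k$ with $\mathbbm 1(a\in J_k)\sim\mathrm{Bernoulli}(\lambda_a(k))$ and $B_k\sim\mathrm{Bernoulli}(\mu_a)$ independent; hence $\mathbb E[Y_j\mid\mathcal F_{k_j-1},I_{k_j}=a]=\mu_a$, so $D_j:=Y_j-\mu_a$ is a martingale difference sequence along the pull times of $a$. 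Since BE-AE only ever pulls arms in $A(t)$, arm $a$ is active at each of its pull times, so Lemma~\ref{lemma:lambda_bound} gives $\lambda_a(k_j)\ge c$ and therefore $|D_j|\le 1/c$ deterministically (using also $\mu_a\le1\le 1/c$). Finally, $\hat\mu_a$, $u_a$ and $l_a$ change only at pull times of $a$, and when $T_a(t)=n$ we have $\hat\mu_a(t)-\mu_a=\frac1n\sum_{j=1}^n D_j$.

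\textbf{Step 2 (choice of $\beta$ and reduction).} Let $\kappa>0$ be the universal constant produced by the Azuma bound below, and take $\beta$ large enough, depending only on the fixed system constants $c$ and $\delta$, that $5/\sqrt{c\beta}\le\delta/4$ and $\kappa\beta c^2\delta^2\ge 2$ (e.g. $\beta=\max(400/(c\delta^2),\,2/(\kappa c^2\delta^2))$). For $n\ge\beta\ln T$ the radius satisfies $5\sqrt{\ln T/(cn)}\le\delta/4$, so on $\{T_a(t)=n\ge\beta\ln T,\ u_a(t)\ge\mu_a+\delta/2\}$ we must have $\hat\mu_a(t)\ge\mu_a+\delta/4$, i.e. $\sum_{j=1}^n D_j\ge n\delta/4$; symmetrically $\{T_a(t)=n\ge\beta\ln T,\ l_a(t)\le\mu_a-\delta/2\}$ forces $\sum_{j=1}^n D_j\le -n\delta/4$.

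\textbf{Step 3 (Azuma and union bound).} By Azuma--Hoeffding applied to the martingale $(\sum_{j\le n}D_j)$ with increments bounded by $1/c$, each of $\mathbb P(\sum_{j\le n}D_j\ge n\delta/4)$ and $\mathbb P(\sum_{j\le n}D_j\le -n\delta/4)$ is at most $\exp(-\kappa n c^2\delta^2)$. Hence, since the event in part~1 occurs only if $\sum_{j\le n}D_j\ge n\delta/4$ for some integer $n\in[\beta\ln T,T]$,
\[
\mathbb P\big(\exists\, t\le T:\ T_a(t)\ge\beta\ln T,\ u_a(t)\ge\mu_a+\tfrac{\delta}{2}\big)\ \le\ \sum_{n\ge\beta\ln T}\exp(-\kappa n c^2\delta^2)\ =\ O(T^{-\kappa\beta c^2\delta^2})\ \le\ T^{-1}
\]
for $T$ large, by the choice of $\beta$; the identical computation with the lower tail gives part~2. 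The only point requiring care is the deterministic increment bound $|D_j|\le 1/c$: this is exactly where Lemma~\ref{lemma:lambda_bound} (hence the fact that BE-AE pulls only active arms, keeping the importance weights $1/\lambda_a$ bounded) enters, and it is what makes a clean martingale bound available despite the importance weighting. Everything else is the standard peeling over $T_a(t)$, with the constant $5$ in the confidence width and the freedom to take $\beta$ large in terms of $c,\delta$ providing the slack to beat the union bound. (A Freedman/Bernstein refinement using $\mathrm{Var}(Y_j\mid\cdot)\le\mu_a/\lambda_a(k_j)\le1/c$ would sharpen constants and is what one uses for Lemma~\ref{lemma:Confidence1}, where the gap shrinks like $\sqrt{\ln T/n}$, but it is not needed here.)
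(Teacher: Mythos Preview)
Your proof is correct and follows essentially the same route as the paper: define the importance-weighted martingale along the pull times of arm $a$, use Lemma~\ref{lemma:lambda_bound} to bound the increments, observe that once $T_a(t)\ge\beta\ln T$ the confidence radius drops below $\delta/4$ so the event forces a linear-in-$n$ deviation of the martingale, apply a concentration inequality to get decay $\exp(-\Theta(n))$, and union bound over $n$. The only difference is that the paper invokes Freedman's inequality (reusing the machinery from Lemma~\ref{lemma:Confidence1}), whereas you use Azuma--Hoeffding; as you correctly note, the constant-size gap $\delta/4$ makes the cruder bounded-increment bound sufficient here, so this is a harmless simplification rather than a different argument.
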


\begin{lemma}\label{lemma:LotsofHits} 
Recall $\beta$ from Lemma \ref{lemma:Confidence2}. For a large enough positive constant $\gamma$ we have that for $t' =\gamma \ln T$ we have $\P(T_a(t') \le \beta \ln T, a\in A(t')) \le T^{-2}$ for each arm $a \in A$.
\end{lemma}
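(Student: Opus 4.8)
The plan is to show that if arm $a$ is still active at time $t' = \gamma \ln T$, then with overwhelming probability it has been pulled at least $\beta \ln T$ times by then. The key structural fact is the balanced-exploration rule: at each time $t$, the algorithm pulls an active arm with the \emph{smallest} accrued reward $S_a(t-1)$. Consequently, at any time $t$, the values $\{S_a(t-1) : a \in A(t)\}$ differ by at most $1$ (a standard invariant of the "pull the minimum" rule, proven by induction on $t$: pulling the minimum can only raise it, and only by at most one unit of reward). Hence if arm $a$ is active at $t'$, then every active arm has accrued roughly the same reward as arm $a$ up to time $t'$; in particular $S_a(t') \le S_b(t') + 1$ for all active $b$. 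The strategy is therefore: (i) lower-bound the total reward accrued in $[1,t']$ by all active arms, which is easy because by Lemma~\ref{lemma:lambda_bound} every pull of an active arm yields a Bernoulli success with probability at least $c\mu_a \ge c \min_a \mu_a =: c\mu_{\min} > 0$; (ii) use the near-equality of the $S_a$'s to conclude that $S_a(t')$ itself is $\Omega(\ln T)$ with high probability; (iii) since $S_a(t') \le T_a(t')$, this forces $T_a(t') = \Omega(\ln T)$, and choosing $\gamma$ large enough relative to $\beta$ gives $T_a(t') \ge \beta \ln T$.

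More concretely, I would first argue that on the event $\{a \in A(t')\}$, at most $m$ arms are ever active, so over the $t'$ time steps, at least $t'/m$ pulls are devoted to arms that are active \emph{at the time of the pull} — actually, all $t'$ pulls go to then-active arms by definition, so the reward accumulated by the currently-active set grows. Let $M(t) = \min_{b \in A(t)} S_b(t)$. Each step, the pulled arm is a minimizer, and it becomes a Bernoulli$(\ge c\mu_{\min})$ success; so $M(t)$ behaves like a process that increases by $1$ roughly every $O(m)$ steps (it takes at most $m$ steps to raise the minimum by one, and each such pull succeeds with probability $\ge c\mu_{\min}$). A Chernoff bound on the sum of the relevant independent Bernoulli$(\ge c\mu_{\min})$ variables among the first $t'$ pulls shows $M(t') \ge \tfrac{1}{2} c\mu_{\min} \cdot (t'/m) - O(1)$ except with probability $e^{-\Omega(t')} = e^{-\Omega(\gamma \ln T)} \le T^{-2}$ for $\gamma$ large. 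On the complementary good event, since $a \in A(t')$ we have $S_a(t') \ge M(t') = \Omega(\gamma \ln T)$, hence $T_a(t') \ge S_a(t') \ge \beta \ln T$ provided $\gamma$ is chosen large enough given $\beta$ (the constant $\beta$ coming from Lemma~\ref{lemma:Confidence2} is fixed first, then $\gamma$ is chosen depending on $\beta$, $m$, $c$, and $\mu_{\min}$).

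The main obstacle I anticipate is making the "it takes at most $m$ pulls to raise the minimum by one" argument fully rigorous in the presence of arm elimination: when arms are removed from $A(t)$, the minimum over the active set can \emph{jump up}, which only helps, but one must be careful that a newly-eliminated arm was not the unique one holding down the minimum in a way that breaks the counting. The clean way around this is to track, instead, the quantity $\sum_{b \in A(t)} S_b(t)$ or to note that whenever arm $a$ is active, every pull in $[1,t']$ went to some arm that was active then and whose reward was $\le S_a(\cdot)+1$ at that moment; summing the success indicators of those pulls and dividing by $m$ bounds $S_a(t')$ from below. One should also confirm the independence structure needed for the Chernoff bound — the success of each pull is conditionally Bernoulli with parameter at least $c\mu_{\min}$ given the history, so a standard martingale/Chernoff argument (e.g.\ via the Azuma or the multiplicative Chernoff bound for sums of conditionally-bounded-below Bernoullis) applies — but this is routine once the combinatorial bound on pulls-per-unit-increase is in place.
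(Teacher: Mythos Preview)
Your proposal is correct and essentially matches the paper's argument: both rely on the invariant $|S_a(t)-S_b(t)|\le 1$ for active arms, the lower bound $\P(X_t=1)\ge c\mu_{\min}$ from Lemma~\ref{lemma:lambda_bound}, and a Chernoff bound over the $t'=\gamma\ln T$ steps, concluding via $T_a(t')\ge S_a(t')$. The paper's packaging is slightly cleaner in that it bounds the \emph{total} reward $\sum_{t\le t'}X_t\ge m\beta\ln T+m$ directly and then divides by $m$ (using that every arm, eliminated or not, satisfies $S_b(t')\le S_a(t')+1$ for any active $a$), which sidesteps the need to track the minimum $M(t)$ and the arm-elimination bookkeeping you flagged.
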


 Now, using union bound we get,
\begin{align*}
\P(\exists t \text{ s.t. }a^* \notin A(t))& \le \sum_{a\neq a^*}\P(\exists t  \text{ s.t. } u_{a^*}(t) < l_a(t)) \\
			& \le \sum_{a \neq a^*} \left(\exists t \text{ s.t. } \P(u_{a^*}(t) \le \mu_{a^*}) + \P(\exists t \text{ s.t. } l_a(t) \ge \mu_a) \right) \\
			& = O(1/T),
\end{align*}
where the last bound follows from Lemma~\ref{lemma:Confidence1}. 
Thus, it is now sufficient to show that $\E[T_a(T)]$ for each $a\neq a^*$ is $O(\ln T)$. Let $\gamma >0$ be a constant to be determined. Let $t' = \gamma \ln T$. 
We have,
\begin{align*}
\E[T_a(T)] & \le \E[T_a(T) | a\notin A(t')] + T \P(a\in A(t')) \\
		& \le t' + T \P(a \in A(t')) \\
		& = \gamma \ln T + T \P(a\in A(t'))
\end{align*}

Thus, we will be done if we show that $\P(a \in A(t'))= O(T^{-1})$ for a large enough $\gamma$. We do that below. By Law of Total Probability and the fact that $\P(\exists t \text{ s.t. } a^* \notin A(t)) = O(T^{-1})$ as shown above, we have
$$\P(a \in A(t'))  \le \P(a \in A(t'), a^* \in A(t')) + \P(a^* \notin A(t')) = \P(a \in A(t'), a^* \in A(t')) + O(T^{-1}).$$ 

Further, 
\begin{align*}
\P(a \in A(t'), a^* \in A(t')) &  \le  \P(T_a(t') \le \beta \ln T, a\in A(t')) + \P(T_{a^*}(t') \le \beta \ln T, a^* \in A(t')) \\ 
   & + \P(a \in A(t'), a^* \in A(t'),  T_a(t') \ge \beta \ln T, T_{a^*}(t') \ge \beta \ln T) \\
&  = O(1/T^2) + \P(a \in A(t'), a^* \in A(t'), T_a(t') \ge \beta \ln T, T_{a^*}(t') \ge \beta \ln T), 
\end{align*}
where the last equality follows from Lemma~\ref{lemma:LotsofHits}. We now provide a bound on the last term of the above inequality. Event $a,a^* \in A(t')$ implies that $u_a(t') < l_{a^*}(t')$. Thus, we get
\begin{align*}
& \P(a \in A(t'), a^* \in A(t'), T_a(t') \ge \beta \ln T, T_{a^*}(t') \ge \beta \ln T) \\ & \le  \P(u_a(t') < l_{a^*}(t'),  T_a(t') \ge \beta \ln T, T_{a^*}(t') \ge \beta \ln T), \\
&  \le \P(u_a(t') < \mu_a +\delta/2, T_a(t') \ge \beta \ln T) + \P( l_{a^*}(t') ,T_{a^*}(t') \ge \beta \ln T),\\
& = O(1/T),
\end{align*}
 
 where the last inequality follows from Lemma~\ref{lemma:Confidence2}.  Hence the result follows.

 \subsubsection{Proof of lemmas used in proof of Theorem~\ref{thm:BEAEregret}}\label{sec:LemmasBEAEregret}
 {\bf Lemma \ref{lemma:lambda_bound}.}
We have $\lambda_a(t) \ge c$ for each $t$ and each $a \in A(t)$. 
\begin{proof}
From the definition of the algorithm, we have $|S_a(t) - S_b(t)| \le 1$ for each $a,b \in A(t)$. Further, for each $a\in A(t)$ and $b \notin A(t)$ we have $S_a(t) \ge S_b(t) -1$. Let $b' \in \arg \max_a \theta_a$. Thus, for each $a\in A(t)$ we have
$$\lambda_a(t) = \frac{S_a(t) +\theta_a}{\sum_b (S_b(t) +\theta_b)} \ge  \frac{S_a(t) +\theta_a}{\sum_b (S_a(t) +1 +\theta_b)} \ge  \frac{S_a(t) +\theta_a}{m (S_a(t) +1 +\theta_{b'})} \ge \frac{\theta_a}{m(1 + \theta_{b'})}.$$
Hence, the lemma holds since $c = \min_a  \frac{\theta_a}{m(1 + \theta_{b'})}$.
\end{proof}
 
{\bf Lemma \ref{lemma:Confidence1}.}
For each arm $a\in A$ we have
\begin{enumerate}
\item $\P(\exists t\le T \text{ s.t. } u_a(t) \le \mu_a) \le T^{-1}$ 
\item $\P(\exists t\le T \text{ s.t. } l_a(t) \ge \mu_a) \le T^{-1}$ 
\end{enumerate}
 
 \begin{proof}
 We first prove the first part 1. We will use the following result, known as Freedman's inequality for martingales. 
 
\begin{theorem}[Freedman \cite{fre75}]
Let $(W_t, \mathcal F_t)_{i=0,..,T}$ be a real valued martingale. Let $(\xi_t, \mathcal F_t)_{t=0,..,T}$ be the sequence of corresponding martingale differences, i.e., $W_t = \sum_{i=0}^t \xi_t$, s.t.\ $\xi_0 = 0$. Let $V_k = \sum_{i=1}^{t} \E[\xi^2|\mathcal F_{i-1}]$. Suppose $\xi_t \le \epsilon$ for a some positive $\epsilon$. Then, the following holds for all positive $w$ and $v$. 
$$\P\left(\exists t \text{ s.t. } W_t \ge w \text{ and } V_t \le v \right) \le \exp\left(-\frac{w^2}{2(v + w \epsilon)}\right).$$
\end{theorem}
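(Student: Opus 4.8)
The plan is to prove Freedman's inequality by the classical exponential-supermartingale method: construct an exponential supermartingale indexed by a free parameter $\lambda$, control its supremum over $t$ via a maximal inequality (this is what delivers the ``$\exists t$'' uniformity), and then optimize $\lambda$. Fix $\lambda > 0$ and set $\psi(\lambda) = (e^{\lambda\epsilon} - 1 - \lambda\epsilon)/\epsilon^2$. The central object is the process
\[ M_t = \exp\bigl(\lambda W_t - \psi(\lambda) V_t\bigr), \qquad M_0 = 1, \]
and the entire argument reduces to showing that $(M_t, \mathcal F_t)$ is a nonnegative supermartingale.

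The key one-step estimate is that for any martingale difference $\xi$ with $\E[\xi \mid \mathcal F] = 0$ and $\xi \le \epsilon$ almost surely,
\[ \E\bigl[e^{\lambda \xi} \mid \mathcal F\bigr] \le \exp\bigl(\psi(\lambda)\, \E[\xi^2 \mid \mathcal F]\bigr). \]
To establish this I would use that $x \mapsto (e^x - 1 - x)/x^2$ is nondecreasing, so that on the event $\{\xi \le \epsilon\}$ one has the pointwise bound $e^{\lambda\xi} \le 1 + \lambda\xi + \psi(\lambda)\xi^2$ (the normalization works out precisely because $\frac{e^{\lambda\epsilon}-1-\lambda\epsilon}{(\lambda\epsilon)^2}\lambda^2 = \psi(\lambda)$). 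Taking conditional expectation, using $\E[\xi\mid\mathcal F] = 0$, and then $1 + z \le e^z$ gives the claim. Applying this with $\xi = \xi_t$ and recalling $V_t - V_{t-1} = \E[\xi_t^2 \mid \mathcal F_{t-1}]$ yields $\E[M_t \mid \mathcal F_{t-1}] \le M_{t-1}$, so $M_t$ is indeed a supermartingale with $M_0 = 1$.

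With the supermartingale in hand, the uniform-in-$t$ statement follows cleanly. Since $\psi(\lambda) \ge 0$ and the $V_t$ are nonnegative, on the event $\{\exists t : W_t \ge w,\ V_t \le v\}$ the process satisfies $M_s \ge e^{\lambda w - \psi(\lambda) v}$ at any witnessing time $s$, hence $\sup_t M_t \ge e^{\lambda w - \psi(\lambda) v}$ on that event. Ville's maximal inequality for nonnegative supermartingales, $\P(\sup_t M_t \ge a) \le \E[M_0]/a = 1/a$, then gives
\[ \P\bigl(\exists t : W_t \ge w,\ V_t \le v\bigr) \le \exp\bigl(-\lambda w + \psi(\lambda) v\bigr). \]
It is essential to use the maximal inequality here rather than a union bound over $t$, since the latter would destroy the uniformity.

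It remains to optimize over $\lambda > 0$. Minimizing $-\lambda w + \psi(\lambda) v$ gives the stationarity condition $e^{\lambda\epsilon} = 1 + w\epsilon/v$, and substituting $\lambda = \epsilon^{-1}\ln(1 + w\epsilon/v)$ yields the Bennett form $\exp\bigl(-\tfrac{v}{\epsilon^2} h(w\epsilon/v)\bigr)$ with $h(u) = (1+u)\ln(1+u) - u$. The stated bound then follows from the elementary inequality $h(u) \ge \tfrac{u^2}{2(1+u)}$ for $u \ge 0$, since with $u = w\epsilon/v$ this rearranges exactly to $\tfrac{v}{\epsilon^2}h(u) \ge \tfrac{w^2}{2(v + w\epsilon)}$. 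I expect the main obstacle to be the one-step exponential bound: getting the direction of the monotonicity argument right under the one-sided constraint $\xi \le \epsilon$, and correctly threading the conditional second moment into the increments of $V_t$. The maximal-inequality and optimization steps are then routine.
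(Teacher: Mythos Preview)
Your argument is correct and is precisely the standard exponential-supermartingale proof of Freedman's inequality: the one-step bound $e^{\lambda\xi}\le 1+\lambda\xi+\psi(\lambda)\xi^2$ under $\xi\le\epsilon$, the resulting supermartingale $\exp(\lambda W_t-\psi(\lambda)V_t)$, Ville's maximal inequality, and the Bennett-to-Bernstein simplification via $h(u)\ge u^2/(2(1+u))$ are all handled properly.

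There is nothing to compare against, however: the paper does not prove this statement at all. It is quoted as an external result attributed to Freedman~\cite{fre75} and used as a black box inside the proof of Lemma~\ref{lemma:Confidence1}. So your write-up goes beyond what the paper itself supplies.
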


Let $M_0 = 0$ and for each $t \ge 1$ let $M_t = \mu_a T_a(t) -  \hat{\mu}_a(t)T_a(t)$. Let $\{\mathcal F_t\}_{t\ge0}$ represent the filtration where $\mathcal F_0 = \{\emptyset, \Omega\}$ and $\mathcal F_t$ captures what is known to the platform at each time $t$. Then, it is easy to check that $(M_t, \mathcal F_t)_{i=0,..,T}$ forms a martingale. Consider stopping times $\tau_k$ where $\tau_k = \inf \{t: T_a(t) = k\}$ if $T_a(T) \le k$ else $\tau_k = T$. Let $Y_0 = 0$ and $Y_k = M_{\tau_k}$. From Optional Sampling Theorem (see Chapter A14 in \cite{wil91}) we get that $(Y_k, \mathcal F_{\tau_k})_{k\ge 0}$ is a martingale.

We now provide bound on $\P(\exists t \text{ s.t. } u_a(t) \le \mu_a) $. Note that if $T_a(t) < 25 c^{-1} \ln T$ then $5\sqrt{ \frac{ \ln T}{cT_a(t)} } > 1 \ge \mu_a $ and $u_a(t) > \mu_a$.   Thus, we get,
\begin{align*}
\P(\exists t \text{ s.t. } u_a(t) \le \mu_a) 
				& = \P\left(\exists t \text{ s.t. } T_a(t) \ge 25 c^{-1} \ln T,  \hat{\mu}_a(t) + 5\sqrt{ \frac{ \ln T}{cT_a(t)} } \le \mu_a \right) \\
				& = \P\left(\exists t \text{ s.t. } T_a(t) \ge 25 c^{-1} \ln T,  M_t \ge 5 T_a \sqrt{ \frac{ \ln T}{cT_a(t)} }  \right) \\
				& = \P\left( \exists k\ge 25 c^{-1} \ln T \text{ s.t. } Y_k \ge 5\sqrt{  c^{-1} k\ln T}  \right)
\end{align*}

Let $D_k = Y_k - Y_{k-1} = \mu_a - \frac{X_{\tau_k}}{\lambda_a(\tau_k)} $. From Lemma~\ref{lemma:lambda_bound} we have $- c^{-1} \le D_k \le \mu_a \le 1$ for each $k$. 

From the definition of BE-AE algorithm, since ties are broken deterministically, we have that $I_t \in \mathcal{F}_{t-1}$. Thus, $\tau_{k} - 1$ is a stopping time. Thus, $\mathcal{F}_{\tau_k -1}$ is well defined. Now, $\E[D_k^2| F_{\tau_{k-1}}] = \E[ \E[D_k^2| F_{\tau_{k} - 1}] | F_{\tau_{k-1}}]$, where 
$$\E[D_k^2| F_{\tau_{k} - 1}] = \frac{\mu_a}{\lambda_a(\tau_k)} - \mu_a^2 \le \frac{1}{\lambda_a(\tau_k)} \le c^{-1}.$$  Thus for each $k$ we have $\sum_{i=1}^k \E[D_i^2| F_{\tau_{i-1}}] \le c^{-1}k$ with probability $t$. 

Fix a $k$ such that $25 c^{-1} \ln T \le k\le T$.  Using Freedman's inequality we get


 
				
\begin{align*}			
\P\left( \exists i  \text{ s.t. } Y_i \ge 5 \sqrt{  c^{-1} k\ln T}, \sum_{j=1}^i \E[D_j^2| F_{\tau_{j-1}}] \le c^{-1}k \right)
				& =  \exp\left(-\frac{25 k c^{-1} \ln T}{2(c^{-1}k +  5\sqrt{  c^{-1} k\ln T})}\right)  \\
				& \le \exp\left(-2 c^{-1}\ln T) \right) \\
				& \le T^{-2}
\end{align*}

Thus,
$$\P\left(  Y_k \ge 5 \sqrt{  c^{-1} k\ln T}, \sum_{j=1}^k \E[D_j^2| F_{\tau_{j-1}}] \le c^{-1}k \right)
	 \le T^{-2}.$$

But, as saw above,  $ \sum_{j=1}^k \E[D_j^2| F_{\tau_{j-1}}] \le c^{-1}k$ holds w.p.\ 1. Thus, 

$$\P\left(  Y_k \ge 5 \sqrt{  c^{-1} k\ln T} \right) \le T^{-2}.$$

Using union bound, we get 
$$\P\left( \exists k \ge 25 c^{-1} \ln T \text{ s.t. }  Y_k \ge 5 \sqrt{  c^{-1} k\ln T}  \right) \le T^{-1}.$$

Hence, we get $ \P(\exists t \text{ s.t. } u_a(t) \le \mu_a)  \le T{-1}$. Proof for part 2.\ is similar to above, except that we work with martingale $(-Y_k, \mathcal F_{\tau_k})_{k\ge 0}$ instead of $(Y_k, \mathcal F_{\tau_k})_{k\ge 0}$.
 \end{proof}

{\bf Lemma~\ref{lemma:Confidence2}.}
Let $\delta = \min_{a\neq a^*} (\mu_{a^*} - \mu_a)$. There exists a constant $\beta$ such that  we have 
\begin{enumerate} 
\item $\P(\exists t\le T \text{ s.t. } T_a(t) \ge \beta \ln T,   u_a(t) \ge \mu_a + \delta/2 ) \le T^{-1}$ 
\item $\P(\exists t\le T \text{ s.t. }  T_a(t) \ge \beta \ln T,  l_a(t) \le \mu_a - \delta/2) \le T^{-1}$ 
\end{enumerate}
 \begin{proof}
 We first prove the second part. Arguing along the lines similar to the proof of Lemma~\ref{lemma:Confidence1}, it is sufficient to show that there exists $\beta$ such that 
 
 $$\P(\exists k \ge \beta \ln T \text{ s.t. } Y_k \ge \delta k/2 -  5\sqrt{  c^{-1} k\ln T} ) \le T^{-2}.$$
For large enough $\beta$, for each $k \ge \beta \ln T$ we have $\delta k/2 -  5\sqrt{  c^{-1} k\ln T} \ge \delta k/4$. Further, for large enough $k$, using Freedman's inequality and the arguments similar to those in Lemma~\ref{lemma:Confidence1}, we get

$$
\P(Y_k \ge \delta k/4  ) 
		\le \exp( -\frac{\delta^2 k^2/16 }{2(c^{-1} k+ c^{-1}  \delta k/4)} ) = \exp(-c_1 k)$$
where $c_1>0$. For large enough $\beta$ we have $c_1k \ge 2 \ln T$ for each $k \ge \beta \ln T$, and thus $\P( Y_k \ge \delta k/4  ) \le T^{-2}$. The result then follows by using a union bound. The second part of the result follows in a similar fashion by using martingale $(-Y_k, \mathcal F_{\tau_k})_{k\ge 0}$ instead of $(Y_k, \mathcal F_{\tau_k})_{k\ge 0}$.
 \end{proof}
 
{\bf Lemma~\ref{lemma:LotsofHits}.}
Recall $\beta$ from Lemma \ref{lemma:Confidence2}. For a large enough positive constant $\gamma$ we have that for $t' =\gamma \ln T$ we have $\P(T_a(t') \le \beta \ln T, a\in A(t')) \le T^{-2}$ for each arm $a \in A$.
\begin{proof}
Let $\Delta = \min_a \mu_a$. Thus, $\P(X_t = 1) \ge c\Delta$ for each $t$ under the BE-AE algorithm. Thus, using standard Chernoff bound, for a large enough $\gamma$ we have 
$\P(\sum_{t=1}^{t'} X_t \le m \beta \ln T + m) \le e^{-2 \ln T}$. Since under BE-AE we have $|S_b(t) - S_{b'}(t)| \le 1$ for each $b,b' \in A(t)$, for a large enough $\gamma$, $\P(\exists a \in A(t') \text{ s.t. } S_a(t') \le \beta \ln T ) \le e^{-2 \ln T}$. Thus, $\P(\exists b \in A(t') \text{ s.t. } T_b(t') \le \beta \ln T ) \le e^{-2 \ln T}$. Hence the lemma holds. 
\end{proof}

\end{document}